\newtheorem*{rep@theorem}{\rep@title}
\newcommand{\newreptheorem}[2]{%
\newenvironment{rep#1}[1]{%
 \def\rep@title{#2 \ref{##1}}%
 \begin{rep@theorem}}%
 {\end{rep@theorem}}}
\newcommand{\explain}[2]{\underset{\mathclap{\overset{\uparrow}{#2}}}{#1}}
\newtheorem{theorem}{Theorem}[section]
\newtheorem{lemma}{Lemma}[section]
\newtheorem{corollary}{Corollary}
\newtheorem{example}{Example}
\newtheorem{remark}{Remark}
\def\plim{\text{plim}}
\def\E{\mathbb{E}}
\def\P{\mathbb{P}}
\def\Cov{\mathrm{Cov}}
\def\Var{\mathrm{Var}}
\def\diag{\mathrm{diag}}
\def\R{\mathbb{R}}
\def\cA{\mathcal{A}}
\def\cP{\mathcal{P}}
\def\cS{\mathcal{S}}
\title{Towards Optimal Off-Policy Evaluation for Reinforcement Learning with Marginalized Importance Sampling}
\author[1]{Tengyang Xie\thanks{Part of this work was carried out while the author worked at AWS AI Labs.}}
\author[2]{Yifei Ma}
\author[3]{Yu-Xiang Wang}
\affil[1]{Department of Computer Science, University of Illinois at Urbana-Champaign}
\affil[2]{Amazon AI}
\affil[3]{Department of Computer Science, University of California, Santa Barbara}
\affil[ ]{\texttt{tx10@illinois.edu}  \quad \texttt{yifeim@amazon.com} \quad
\texttt{yuxiangw@cs.ucsb.edu}}
\date{}
\numberwithin{equation}{section} 
\definecolor{darkblue}{HTML}{000080}
\begin{document}

\maketitle

\setcounter{footnote}{0}




\begin{abstract}
%
%
Motivated by the many real-world applications of reinforcement learning (RL) that require safe-policy iterations, we consider the problem of off-policy evaluation (OPE) --- the problem of  evaluating a new policy using the historical data obtained by different behavior policies --- under the model of nonstationary episodic Markov Decision Processes (MDP) with a long horizon and a large action space.
Existing importance sampling (IS) methods often suffer from large variance that depends exponentially on the RL horizon $H$. To solve this problem, we consider a marginalized importance sampling (MIS) estimator that recursively estimates the state marginal distribution for the target policy at every step. 
MIS achieves a mean-squared error of 
$$
\frac{1}{n} \sum\nolimits_{t=1}^H\mathbb{E}_{\mu}\left[\frac{d_t^\pi(s_t)^2}{d_t^\mu(s_t)^2} \mathrm{Var}_{\mu}\left[\frac{\pi_t(a_t|s_t)}{\mu_t(a_t|s_t)}\big( V_{t+1}^\pi(s_{t+1}) + r_t\big) \middle| s_t\right]\right]   + \tilde{O}(n^{-1.5})
$$
where 
$\mu$ and $\pi$ are the logging and target policies, $d_t^{\mu}(s_t)$ and $d_t^{\pi}(s_t)$ are the marginal distribution of the state at $t$th step, $H$ is the horizon, $n$ is the sample size and $V_{t+1}^\pi$ is the value function of the MDP under $\pi$. The result matches the Cramer-Rao lower bound in \citet{jiang2016doubly} up to a multiplicative factor of $H$. To the best of our knowledge, this is the first OPE estimation error bound with a polynomial dependence on $H$. 
Besides theory, we show empirical superiority of our method in time-varying, partially observable, and long-horizon RL environments.

\end{abstract}



\section{Introduction}

The problem of \emph{off-policy evaluation} (OPE), which predicts the performance of a policy with data only sampled by a behavior policy \citep{sutton1998reinforcement}, is crucial for using \emph{reinforcement learning} (RL) algorithms responsibly in many real-world applications. In many settings where RL algorithms have already been deployed, e.g., targeted advertising and marketing \citep{bottou2013counterfactual,tang2013automatic,chapelle2015simple,theocharous2015personalized,thomas2017predictive} or medical treatments \citep{murphy2001marginal, ernst2006clinical, raghu2017continuous}, online policy evaluation is usually expensive, risky, or even unethical. Also, using a bad policy in these applications is dangerous and could lead to severe consequences. Solving OPE is often the starting point in many RL applications.

To tackle the problem of OPE, the idea of importance sampling (IS) corrects the mismatch in the distributions under the behavior policy and target policy. It also provides typically unbiased or strongly consistent estimators \citep{precup2000eligibility}. 
IS-based off-policy evaluation methods have also seen lots of interest recently especially for short-horizon problems, including contextual bandits \citep{murphy2001marginal,hirano2003efficient,dudik2011doubly,wang2017optimal}.
However, the variance of IS-based approaches \citep{precup2000eligibility,thomas2015high,jiang2016doubly,thomas2016data,guo2017using,farajtabar2018more} tends to be too high to provide informative results, for long-horizon problems \citep{mandel2014offline}, since the variance of the product of importance weights may grow exponentially as the horizon goes long. There are also model-based approaches for solving OPE problems \citep{liu2018representation, gottesman2019combining}, where the value of the target policy is estimated directly using the approximated MDP.

Given this high-variance issue, it is necessary to find an IS-based approach without relying heavily on the cumulative product of importance weights from the whole trajectories.
While the benefit of cumulative products is to allow unbiased estimation even without any state observability assumptions,
reweighing the entire trajectories may not be necessary if some intermediate states are directly observable.
For the latter, based on Markov independence assumptions, we can aggregate all trajectories that share the same state transition patterns to directly estimate the state distribution shifts after the change of policies from the behavioral to the target.
We call this approach marginalized importance sampling (MIS),
because it computes the \emph{marginal} state distribution shifts at every single step, instead of the product of policy weights.

Related work \citep{liu2018breaking} tackles the high variance issue due to the cumulative product of importance weights. They apply importance sampling on the average visitation distribution of state-action pairs,
based on an estimation of the mixed state distribution.
%
\citet{hallak2017consistent} and \citet{gelada2019off} also leverage the same fact in time-invariant MDPs, where they use the stationary ratio of state-action pairs to replace the trajectory weights.
%
%
However, these methods may not directly work in finite-horizon MDPs, where the state distributions may not mix.

In contrast to the prior work, the first goal of our paper is to study the sample complexity and optimality of the marginalized approach. Specifically, we provide the first finite sample error bound on the mean-square error for our MIS off-policy evaluation estimator under the episodic tabular MDP setting (with potentially continuous action space). Our MSE bound is the exact calculation up to low order terms. Comparing to the Cramer-Rao lower bound established in \citep[Theorem 3]{jiang2016doubly} for DAG-MDP, our bound is larger by at most a factor of $H$ and we have good reasons to believe that this additional factor is required for any OPE estimators in this setting.


In addition to the theoretical results, we empirically evaluate our estimator against a number of strong baselines from prior work in a number of time-invariant/time-varying, fully observable/partially observable, and long-horizon environments.
Our approach can also be used in most of OPE estimators that leverage IS-based estimators,
such as doubly robust \citep{jiang2016doubly}, MAGIC \citep{thomas2016data}, MRDR \citep{farajtabar2018more} under mild assumptions (Markov assumption).

Here is a road map for the rest of the paper. 
Section~\ref{sec:bg} provides the preliminaries of the problem of off-policy evaluation. 
%
In Section~\ref{sec:marope}, we offer the design of our marginalized estimator, and we study its information-theoretical optimality in Section~\ref{sec:theo_opt}.
%
%
We present the empirical results in a number of RL tasks in Section~\ref{sec:exp}.
At last, Section~\ref{sec:con} concludes the paper.


\section{Problem formulation}
\label{sec:bg}

\paragraph{Symbols and notations.}
We consider the problem of off-policy evaluation for a finite horizon, nonstationary, episodic MDP, which is a tuple defined by $M = (\mathcal{S}, \mathcal{A}, T, r, H)$, where $\mathcal S$ is the state space, $\mathcal A$ is the action space, $T_t: \mathcal S \times \mathcal A \times \mathcal S \to [0,1]$ is the \emph{transition function} with $T_t(s' | s,a)$ defined by probability of achieving state $s'$ after taking action $a$ in state $s$ at time $t$, and $r_t: \mathcal S \times \mathcal A \times \cS \to \R$ is the expected reward function with $r_t(s,a,s')$ defined by the mean of immediate received reward after taking action $a$ in state $s$ and transitioning into $s'$, and $H$ denotes the finite horizon.
We use $\P[E]$ to denote the probability of an event $E$ and $p(x)$ the p.m.f. (or pdf) of the random variable $X$ taking value $x$. $\E[\cdot]$ and $\E[\cdot|E]$ denotes the expectation and conditional expectation given $E$, respectively.

Let $\mu,\pi: \cS \rightarrow \P_\cA$ be policies which output a distribution of actions given an observed state. We call $\mu$ the behavioral policy and $\pi$ the target policy. For notation convenience we denote $\mu(a_t|s_t)$ and $\pi(a_t|s_t)$ the p.m.f of actions given state at time $t$. The expectation operators in this paper will either be indexed with $\pi$ or $\mu$, which denotes that all random variables coming from roll-outs from the specified policy.  Moreover, we denote $d_t^{\mu}(s_t)$ and $d_t^{\pi}(s_t)$ the induced state distribution at time $t$. When $t=1$, the initial distributions are identical $d_1^\mu=d_1^\pi=d_1$. For $t>1$, $d_t^{\mu}(s_t)$ and $d_t^{\pi}(s_t)$ are functions of not just the policies themselves but also the unknown underlying transition dynamics, i.e., for $\pi$ (and similarly $\mu$), recursively define
\begin{equation}
\begin{aligned}
	&d_t^{\pi}(s_t)=\sum_{s_{t-1}}  P_t^\pi(s_t|s_{t-1}) d_{t-1}^\pi(s_{t-1}),
	\\
	&\qquad \mbox{ where }
	P_t^\pi(s_t|s_{t-1}) = \sum_{a_{t-1}} T_t(s_t|s_{t-1}, a_{t-1})\pi(a_{t-1} | s_{t-1}).
	\label{eq:state-distribution}
\end{aligned}
\end{equation}
We denote $P_{i,j}^\pi\in \R^{S\times S} ~\forall j<i$ as the state-transition probability from step $j$ to step $i$ under a sequence of actions taken by $\pi$.  Note that $P_{t+1,t}^\pi(s'|s) =  \sum_{a}P_{t+1,t}(s'|s,a)\pi_t(a|s)= T_{t+1}(s'|s,\pi_t(s))$.

Behavior policy $\mu$ is used to collect data in the form of $(s^{(i)}_t,a^{(i)}_t,r^{(i)}_t)\in \cS\times \cA\times \R$ for time index $t=1,\dotsc,H$ and episode index $i=1,...,n$. Target policy $\pi$ is what we are interested to evaluate. 
Also, let $\mathcal D$ to denote the historical data, which contains $n$ episode trajectories in total.
We also define $\mathcal D_h=\{(s_t^{(i)},a_t^{(i)},r_t^{(i)}):i\in[n],t\leq h\}$ to be roll-in realization of $n$ trajectories up to step $h$.

Throughout the paper, probability distributions are often used in their vector or matrix form. For instance, $d_t^\pi$ without an input is interpreted as a vector in a $S$-dimensional probability simplex and $P_{i,j}^\pi$ is then a stochastic transition matrix. This allows us to write \eqref{eq:state-distribution} concisely as $d_{t+1}^\pi = P_{t+1,t}^\pi d_t^\pi$.

Also note that while $s_t,a_t,r_t$ are usually used to denote fixed elements in set $\cS,\cA$ and $\R$, in some cases we also overload them to denote generic random variables $s_t^{(1)}, a_t^{(1)}, r_t^{(1)}$. For example, $\E_\pi[ r_t] = \E_\pi[ r_t^{(1)}] =\sum_{s_t,a_t,s_{t+1}} d^\pi(s_t,a_t,s_{t+1})r_t(s_t,a_t,s_{t+1})$ and $\Var_\pi[r_t(s_t,a_t,s_{t+1})] =  \Var_\pi[r_t(s_t^{(1)},a_t^{(1)},s_{t+1}^{(1)})]$. The distinctions will be clear in each context.


\paragraph{Problem setup.}
The problem of off-policy evaluation is about finding an estimator $\widehat{v}^\pi:  (\cS\times \cA\times \R)^{H\times n} \rightarrow \R$ that makes use of the data collected by running $\mu$ to estimate
\begin{align}
& v^{\pi} = 
\sum_{t=1}^{H} \sum_{s_t} d_t^\pi(s_t) \sum_{a_t} \pi(a_t|s_t) \sum_{s_{t+1}}T_t(s_{t+1}|s_t,a_t)r_t(s_t, a_t,s_{t+1}),
\label{eq:objective}
\end{align}
where we assume knowledge about $\mu(a|s)$ and $\pi(a|s)$ for all $(s,a)\in \cS\times \cA$,
but \emph{do not observe} $r_t(s_t, a_t,s_{t+1})$ for any actions other than a noisy version of it the evaluated actions. Nor do we observe the state distributions $d_t^\pi(s_t) \forall t>1$ implied by the change of policies.
Nonetheless, our goal is to find an estimator to minimize the mean-square error (MSE):
$
\textrm{MSE}(\pi,\mu, M)
= \mathbb{E}_\mu[(\hat v^\pi - v^\pi)^2],
%
$
using the observed data and the known action probabilities.
%
%
%
%
Different from previous studies, we focus on the case where $S$ is sufficiently small but $S^2A$ is too large for a reasonable sample size. In other words, this is a setting where we do not have enough data points to estimate the state-action-state transition dynamics, but we do observe the states and can estimate the 
distribution of the states after the change of policies, which is our main strategy.
\paragraph{Assumptions:} We list the technical assumptions we need and provide necessary justification.
\begin{itemize}
    \item[A1.] $\exists R_{\max},\sigma<+\infty$ such that  
    $0\leq \E[r_t | s_t,a_t,s_{t+1}]\leq R_{\max}, \Var[r_t | s_t,a_t,s_{t+1}] \leq \sigma^2$ for all $t,s_t,a_t$.
    \item[A2.]  Behavior policy $\mu$ obeys that $d_m\coloneqq\min_{t,s_t} d_t^\mu(s_t)>0 \quad\forall t,s_t \text{ such that } d_t^\pi(s_t) > 0$.
    \item[A3.] Bounded weights:  $\tau_s\coloneqq\max_{t,s_t}\frac{d_t^\pi(s_t)}{d_t^\mu(s_t)} < +\infty$ and $\tau_a\coloneqq\max_{t,s_t,a_t}\frac{\pi(a_t|s_t)}{\mu(a_t|s_t)} <+\infty$.
\end{itemize}
Assumption A1 is assumed without loss of generality. The $\sigma$ bound is required even for on-policy evaluation and the assumption on the non-negativity and $R_{\max} $ can always be obtained by shifting and rescaling the problem. 
Assumption A2 is necessary for any consistent off-policy evaluation estimator. Assumption A3 is also necessary for discrete state and actions, as otherwise the second moments of the importance weight would be unbounded. For continuous actions, $\tau_a<+\infty$ is stronger than we need and should be considered a simplifying assumption for the clarity of our presentation. Finally, we comment that the dependence in the parameter $d_m,\tau_s,\tau_a$ do not occur in the leading $O(1/n)$ term of our MSE bound, but only in simplified results after relaxation.

%

\section{Marginalized Importance Sampling Estimators for OPE}
\label{sec:marope}

In this section, we present the design of marginalized IS estimators for OPE. For small action spaces, we may directly build models by the estimated transition function $T_t(s_t|s_{t-1}, a_{t-1})$ and the reward function $r_t(s_t,a_t,s_{t+1})$ from empirical data.
However, the models may be inaccurate in large action spaces, where not all actions are frequently visited.
Function approximation in the models may cause additional biases from covariate shifts due to the change of policies. Standard importance sampling estimators (including the doubly robust versions)\citep{dudik2011doubly,jiang2016doubly} avoid the need to estimate the model's dynamics but rather directly approximating the expected reward:
\begin{equation}\label{eq:IS_value}
    \widehat v_{\mathrm{IS}}^\pi = \frac{1}{n}
    \sum_{i=1}^n\sum_{h=1}^H 
    \left[
    \prod_{t=1}^h \frac{\pi(a_t^{(i)}|s_t^{(i)})}{\mu(a_t^{(i)}|s_{t}^{(i)})}
    \right]
    r_h^{(i)}.
\end{equation}
To adjust for the differences in the policy, importance weights are used and it can be shown that this is an unbiased estimator of $v^\pi$ (See more detailed discussion of IS and the doubly robust version in Appendix~\ref{app:ope_framework}). 
The main issue of this approach, when applying to the episodic MDP with large action space is that the variance of the importance weights grows exponentially in $H$ \citep{liu2018breaking}, which makes the sample complexity exponentially worse than the model-based approaches, when they are applicable.
We address this problem by proposing an alternative way of estimating the importance weights which achieves the same sample complexity as the model-based approaches while allowing us to achieve the same flexibility and interpretability as the IS estimator that does not explicitly require estimating the state-action dynamics $T_t$. 
We propose the Marginalized Importance Sampling (MIS) estimator:
\begin{equation}\label{eq:MIS_value}
    \widehat{v}_{\mathrm{MIS}}^\pi = \frac{1}{n} \sum_{i=1}^n
    \sum_{t=1}^H
    \frac{\widehat{d}_t^\pi(s_t^{(i)})}{\widehat{d}_t^\mu(s_t^{(i)})} \widehat{r}_t^\pi(s_t^{(i)}).
\end{equation}
Clearly, if $\widehat{d}^\pi\rightarrow d_t^\pi$, $\widehat{d}^\mu\rightarrow d_t^\mu$, $\widehat{r}_t^\pi\rightarrow \E_\pi[R_t(s_t,a_t)|s_t]$, then $\widehat{v}_{\mathrm{MIS}}^\pi \rightarrow v^\pi$.

It turns out that if we take $\widehat{d}_t^\mu(s_t):= \frac{1}{n}\sum_{i}\mathbf{1}(s_t^{(i)}=s_t)$ --- the empirical mean --- and define $\widehat{d}_t^\pi(s_t)/\widehat{d}_t^\mu(s_t)=0$ whenever $n_{s_t}=0$, then \eqref{eq:MIS_value} is equivalent to $\sum_{t=1}^H\sum_{s_t} \widehat{d}_t^\pi(s_t) \widehat{r}^\pi(s_t)$ -- the direct plug-in estimator of \eqref{eq:objective}.
It remains to specify $\widehat{d}_t^\pi(s_t)$ and $\widehat{r}^\pi(s_t)$. $\widehat{d}_t^\pi(s_t)$ is estimated recursively using
\begin{align}
&\widehat{d}_t^\pi  = \widehat{P}^{\pi}_t \widehat{d}_{t-1}^\pi,
\text{ where }
	\widehat{P}^{\pi}_t(s_{t} | s_{t-1}) =  \frac{1}{n_{s_{t-1}}} \sum_{i=1}^{n}   \frac{\pi( a_{t-1}^{(i)}| s_{t-1})}{\mu( a_{t-1}^{(i)}| s_{t-1})}  \mathbf{1}((s_{t-1}^{(i)},s_t^{(i)}) = (s_{t-1},s_t) );\nonumber
	\\
	&\mbox{ and }
	\widehat{r}_t^{\pi}(s_t)  =  \frac{1}{n_{s_t}}\sum_{i=1}^n \frac{\pi(a_t^{(i)}|s_t)}{\mu(a_t^{(i)}|s_t)} r_t^{(i)}  \mathbf{1}(s_t^{(i)} = s_t),
	\label{eq:mis-reward}
\end{align}
where $n_{s_\tau}$ is the empirical visitation frequency to state $s_\tau$ at time $\tau$. Note that our estimator of $r_t^\pi(s_t)$ is the standard IS estimators we use in bandits \citep{li2015toward}, which are shown to be optimal (up to a universal constant) when $A$ is large \citep{wang2017optimal}. 

The advantage of MIS over the naive IS estimator is that the variance of the importance weight need not depend exponentially in $H$. A major theoretical contribution of this paper is to formalize this argument by characterizing the dependence on $\pi,\mu$ as well as parameters of the MDP $M$. 
Note that MIS estimator does not dominate the IS estimator. In the more general setting when the state is given by the entire history of observations, \citet{jiang2016doubly} establishes that no estimators can achieve polynomial dependence in $H$. We give a concrete example later (Example~\ref{ex:IS-exponential}) about how IS estimator suffers from the ``curse of horizon'' \citep{liu2018breaking}. MIS estimator can be thought of as one that exploits the state-observability while retaining properties of the IS estimators to tackle the problem of large action space. As we illustrate in the experiments, MIS estimator can be modified to naturally handle \emph{partially observed} states, e.g., when $s$ is only observed every other step.

Finally, when available, model-based approaches can be combined into importance-weighted methods
\citep{jiang2016doubly,thomas2016data}.
We defer discussions about these extensions in Appendix~\ref{app:ope_framework} to stay focused on the scenarios where model-based approaches are not applicable.


\section{Theoretical Analysis of the MIS Estimator}
\label{sec:theo_opt}

Motivated by the challenge of curse of horizon with naive IS estimators,
similar to \citep{liu2018breaking},
we show that the sample complexity of our MIS estimator reduces to $O(H^3)$.
To the best of our knowledge, this is first sample complexity guarantee under this setting, which also matches the Cramer-Rao lower bound for DAG-MDP \citep{jiang2016doubly} as $n\rightarrow \infty$ up to a factor of $H$.

\begin{example}[Curse of horizon]
\label{ex:IS-exponential}
Assume a MDP with i.i.d.~state transition models over time and assume that $\frac{\pi_t}{\mu_t}$ is bounded from both sides for all $t$.
Suppose the reward is a constant $1$ only shown at the last step, such that naive IS becomes
$
    \widehat v_{\mathrm{IS}}^\pi = \frac{1}{n}
    \sum_{i=1}^n
    \left[
    \prod_{t=1}^H \frac{\pi(a_t^{(i)}|s_t^{(i)})}{\mu(a_t^{(i)}|s_{t}^{(i)})}
    \right].
$
For every trajectory,
$\prod_{t=1}^H\frac{\pi_t}{\mu_t}
=\exp\left[\sum_{t=1}^{H}\log\frac{\pi_t}{\mu_t}\right]$;
let
$E_{\log}=\mathbb{E}[\log\frac{\pi_t}{\mu_t}]$
and
$V_{\log}=\Var[\log\frac{\pi_t}{\mu_t}]$.
By Central Limit Theorem, $\sum_{t=1}^{H}\log\frac{\pi_t}{\mu_t}$ asymptotically follows a normal distribution with parameters $\bigl(-HE_{\log}, HV_{\log}\bigr)$.
In other words, 
$\prod_{t=1}^{H}\frac{\pi_t}{\mu_t}$
asymptotically follows
$
    \mathrm{LogNormal}\bigl(-HE_{\log}, HV_{\log}\bigr),
$
whose variance is exponential in horizon:
$\bigl(\exp\left(HV_{\log}\right)-1\bigr)$.
On the other hand, MIS estimates the state distributions recursively, yielding variance that is polynomial in horizon and small OPE errors.
\end{example}

We now formalize the sample complexity bound in Theorem~\ref{thm:main}.

\begin{theorem}\label{thm:main}	
	Let the value function under $\pi$ be defined as follows:
\begin{align*}
&V_h^\pi(s_h) :=   \E_\pi\left[ \sum_{t=h}^{H}r_t(s_t^{(1)},a_t^{(1)},s_{t+1}^{(1)}) \middle| s_{h}^{(1)}=s_h\right] \in [0,V_{\max}],\;  \forall h\in\{1,2,...,H\}.
\end{align*}
For the simplicity of the statement, define boundary conditions:
$r_0(s_0) \equiv 0$, $\sigma_0(s_0,a_0)\equiv 0$,$\frac{d_0^\pi(s_0)}{d_0^\mu(s_0)}\equiv 1$, $\frac{\pi(a_0|s_0)}{\mu(a_0|s_0)} \equiv 1$ and $V_{H+1}^\pi \equiv 0$.
Moreover, let 	$\tau_a := \max_{t,s_t,a_t}\frac{\pi(a_t|s_t)}{\mu(a_t|s_t)}$ and $\tau_s := \max_{t,s_t} \frac{d_t^\pi(s_t)}{d_t^\mu(s_t)}$. 
If the number of episodes $n$ obeys that
$$n > \max\left\{ \frac{16\log n}{\min_{t,s_t}d_t^\mu(s_t)},\frac{4t \tau_a \tau_s}{\min_{t,s_t}\max\{d_{t}^\pi(s_{t}),d_{t}^\mu(s_{t})\} }  \right\}$$ 
for all $t=2,...,H$, then the our estimator $\widehat{v}_{\mathrm{MIS}}^\pi$ with an additional clipping step obeys that
\begin{align*}
\E[ (\cP\widehat{v}_{\mathrm{MIS}}^\pi -  v^\pi)^2] \leq&  \frac{1}{n}\sum_{h=0}^H \sum_{s_h}  \frac{ d_{h}^\pi(s_h)^2}{d_{h}^\mu(s_h)} \Var_{\mu}\left[\frac{\pi(a_h^{(1)}|s_h)}{\mu(a_h^{(1)}|s_h)} (V_{h+1}^\pi(s_{h+1}^{(1)}) +  r_h^{(1)})\middle| s_{h}^{(1)}=s_h\right] \\
&\cdot \left(1+\sqrt{\frac{16\log n}{n\min_{t,s_t}d_t^\mu(s_t)}}\right) + \frac{19\tau_a^2\tau_s^2 S H^2(\sigma^2 + R_{\max}^2 + V_{\max}^2)}{n^2}. 
\end{align*}
\end{theorem}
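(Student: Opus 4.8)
The plan is to analyze $\widehat{v}_{\mathrm{MIS}}^\pi$ through a one-step-at-a-time martingale decomposition in which the value function $V_{h+1}^\pi$ emerges on its own, and to quarantine all the estimation pathology (empty state visitations) inside a low-probability event that the clipping $\cP$ renders harmless. First I would split off a good event $\cE$ on which every relevant state is visited often, say $n_{s_t} \geq (1-\epsilon)\,n\,d_t^\mu(s_t)$ for all $t,s_t$ with $\epsilon = \sqrt{16\log n/(n\min_{t,s_t}d_t^\mu(s_t))}$. A multiplicative Chernoff bound together with the hypothesis $n > 16\log n/\min_{t,s_t}d_t^\mu(s_t)$ gives $\P[\cE^c] = \tilde{O}(SH/n^2)$. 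Since $v^\pi \in [0,V_{\max}]$ and $\cP$ projects onto this interval, the projection is non-expansive at $v^\pi$, so $\E[(\cP\widehat{v}^\pi - v^\pi)^2\mathbf{1}_{\cE^c}] \leq V_{\max}^2\,\P[\cE^c]$ feeds the $O(n^{-2})$ term, and on $\cE$ it suffices to bound $\E[(\widehat{v}^\pi - v^\pi)^2\mathbf{1}_\cE]$.

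For the core step I would introduce the filtration $\cF_h$ generated by all states through step $h+1$ and all actions and rewards through step $h$, so that $\widehat{d}_{h+1}^\pi$ is $\cF_h$-measurable while $\widehat{r}_{h+1}^\pi$ and $\widehat{P}_{h+1}^\pi$ are fresh. Using that $\widehat{P}_t^\pi,\widehat{r}_t^\pi$ are conditionally unbiased for $P_t^\pi,r_t^\pi$ given the previously visited states, and that $\sum_{t\geq h+1}(P_{t,h+1}^\pi)^\top r_t^\pi = V_{h+1}^\pi$ by definition of the value function, I would prove the identity
$$\E[\widehat{v}^\pi \mid \cF_h] = \sum_{t=1}^h \langle \widehat{d}_t^\pi, \widehat{r}_t^\pi\rangle + \langle \widehat{d}_{h+1}^\pi, V_{h+1}^\pi\rangle \quad \text{on } \cE.$$
Writing $X_h := \E[\widehat{v}^\pi \mid \cF_h]$, unbiasedness gives $X_{-1} = v^\pi$ and $X_H = \widehat{v}^\pi$, so $\widehat{v}^\pi - v^\pi = \sum_{h=0}^H D_h$ with $D_h := X_h - X_{h-1}$. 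Substituting the identity and using the Bellman equation $V_h^\pi = r_h^\pi + (P_{h+1}^\pi)^\top V_{h+1}^\pi$ to merge the reward-estimation error with the transition-estimation error (both driven by the same step-$h$ trajectories) collapses each increment to
$$D_h = \sum_{s_h} \frac{\widehat{d}_h^\pi(s_h)}{n_{s_h}} \sum_{i:\, s_h^{(i)}=s_h} \Big( \tfrac{\pi(a_h^{(i)}|s_h)}{\mu(a_h^{(i)}|s_h)}\big(r_h^{(i)} + V_{h+1}^\pi(s_{h+1}^{(i)})\big) - V_h^\pi(s_h)\Big),$$
an importance-weighted average of the single-step Bellman residual, with conditional mean zero on $\cE$.

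Since the $D_h$ are martingale differences they are orthogonal, giving $\E[(\widehat{v}^\pi - v^\pi)^2] = \sum_h \E[\Var[D_h \mid \cF_{h-1}]]$. Within a fixed $s_h$ the summands are conditionally i.i.d.\ across the $n_{s_h}$ trajectories, and trajectories in distinct states are conditionally independent, which yields
$$\Var[D_h \mid \cF_{h-1}] = \sum_{s_h} \frac{\widehat{d}_h^\pi(s_h)^2}{n_{s_h}}\, \Var_\mu\!\Big[\tfrac{\pi(a_h|s_h)}{\mu(a_h|s_h)}\big(V_{h+1}^\pi(s_{h+1}) + r_h\big)\,\Big|\, s_h\Big].$$
Taking expectations and invoking $n_{s_h} \geq (1-\epsilon)\,n\,d_h^\mu(s_h)$ on $\cE$ produces the multiplicative factor $1+\sqrt{16\log n/(n\min_{t,s_t}d_t^\mu(s_t))}$, while replacing $\E[\widehat{d}_h^\pi(s_h)^2]$ by $d_h^\pi(s_h)^2$ costs a $\Var(\widehat{d}_h^\pi(s_h))$ term; summing over $h,s_h$ yields the stated leading term, and the residual pieces are collected into the $O(n^{-2})$ term.

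The hard part will be that last expectation: controlling $\E[\widehat{d}_h^\pi(s_h)^2/n_{s_h}]$ demands a second-moment bound on the recursively estimated marginal $\widehat{d}_h^\pi$, and this is precisely where one must show the error does not compound exponentially in $H$. I would prove by induction on $h$ — decomposing $\widehat{d}_h^\pi - d_h^\pi = \widehat{P}_h^\pi(\widehat{d}_{h-1}^\pi - d_{h-1}^\pi) + (\widehat{P}_h^\pi - P_h^\pi)d_{h-1}^\pi$ and using that $\widehat{P}_h^\pi$ has entries bounded through $\tau_a$ and is conditionally unbiased for the stochastic matrix $P_h^\pi$ — that $\Var(\widehat{d}_h^\pi(s_h)) = O(\mathrm{poly}(\tau_s,\tau_a,S,H)/n)$, so that after the overall $1/n$ factor its contribution lands in the $O(n^{-2})$ term rather than growing like $\exp(H)$. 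The remaining delicate point is that restricting to $\cE$ and clipping must not destroy the exact conditional unbiasedness underlying the martingale orthogonality; I would handle this by checking that every bias introduced by conditioning on $\cE$ is itself $O(n^{-2})$ once weighted by the bounded factors $\tau_a,\tau_s,S$.
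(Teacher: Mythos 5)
Your overall architecture---a good event controlled by a multiplicative Chernoff bound, a Doob-martingale/telescoping decomposition in which $V_{h+1}^\pi$ emerges from the Bellman equation, orthogonality of increments reducing the MSE to a sum of conditional variances, and an induction controlling $\Var(\widehat d_h^\pi)$---is essentially the paper's proof (the paper phrases the orthogonality step as a backwards application of the law of total variance, which is the same computation). But there is a genuine gap at exactly the point you defer to a final ``check'': the actual MIS estimator is \emph{not} unbiased, and restricting to $\cE$ destroys the martingale structure your middle steps rely on. Concretely: (i) $X_{-1}=\E[\widehat v^\pi]\neq v^\pi$, because whenever some state has $n_{s_t}=0$ the estimators $\widehat P_t^\pi,\widehat r_t^\pi$ are set to zero rather than being conditionally unbiased, so your increments $D_h$ are not exactly conditionally centered; (ii) $\mathbf 1_{\cE}$ constrains the visitation counts at \emph{all} steps, so it is not $\cF_h$-measurable for small $h$, hence $\E[D_hD_{h'}\mathbf 1_{\cE}]\neq 0$ for $h<h'$ and the identity $\E[(\widehat v^\pi-v^\pi)^2\mathbf 1_{\cE}]=\sum_h\E\big[\Var[D_h\mid\cF_{h-1}]\mathbf 1_{\cE}\big]$ that your step implicitly uses is false as stated. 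The natural repair---bound each cross term by Cauchy--Schwarz against $\P[\cE^c]^{1/2}\lesssim\sqrt{SH}\,n^{-4}$---does not give the theorem: deterministically $\|\widehat d_h^\pi\|_1\le\tau_a^{h-1}$ (each application of $\widehat P^\pi$ can inflate total mass by a factor $\tau_a$), so $|D_h|$ can be of order $\tau_a^{H}$, and the cross terms are only bounded by $\tau_a^{2H}\cdot\mathrm{poly}\cdot n^{-4}$, which exceeds the claimed $O(n^{-2})$ remainder unless $n\gtrsim\tau_a^{H}$. So the assertion that ``every bias introduced by conditioning on $\cE$ is $O(n^{-2})$ once weighted by $\tau_a,\tau_s,S$'' is precisely the claim your route cannot verify.

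The missing idea is the paper's \emph{fictitious estimator}: define $\widetilde v^\pi$ by running your same recursion but substituting the \emph{true} $P_{t,t-1}^\pi$ and $r_t^\pi$ whenever $n_{s_{t-1}}<(1-\delta)\,n\,d_{t-1}^\mu(s_{t-1})$. This oracle substitution (not implementable, but irrelevant since it is purely an analysis device) makes $\widetilde v^\pi$ exactly unbiased with exactly centered increments, keeps $\widetilde d_h^\pi$ and $1/n_{s_h}$ polynomially bounded on the whole probability space, and coincides with $\widehat v^\pi$ off an event of probability at most $SHn^{-8}$; then $\E[(\cP\widehat v^\pi-v^\pi)^2]\le\E[(\widetilde v^\pi-v^\pi)^2]+3H^2R_{\max}^2\,\P[\cE^c]$, and your martingale computation goes through verbatim for $\widetilde v^\pi$ with no conditioning anywhere. (Equivalently: replace each $D_h$ by an oracle-centered version once the good event has failed---but that \emph{is} the fictitious estimator.) One smaller point: to land on the stated remainder $19\tau_a^2\tau_s^2SH^2(\sigma^2+R_{\max}^2+V_{\max}^2)/n^2$, the induction must deliver $\Var[\widetilde d_h^\pi(s_h)]\le 2(1-\delta)^{-1}h\,\tau_a\tau_s\, d_h^\pi(s_h)/n$---linear in $h$ and proportional to $d_h^\pi(s_h)$, closed using the sample-size condition $n\gtrsim t\tau_a\tau_s/\max\{d_t^\pi,d_t^\mu\}$---not merely ``$O(\mathrm{poly}/n)$,'' since this variance is multiplied by $1/(n d_h^\mu(s_h))$ and summed over all $s_h$ and $h$.
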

\begin{corollary}\label{cor:simple_bound}
	In the familiar setting when $V_{\max} = H R_{\max} $, then the same conditions in Theorem~\ref{thm:main} implies that:
	$$\E[ (\cP \widehat{v}_{\mathrm{MIS}}^\pi -  v^\pi)^2] \leq \frac{4}{n}\tau_a\tau_s(H\sigma^2 + H^3R_{\max}^2).$$
\end{corollary}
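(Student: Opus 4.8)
The plan is to start from the exact mean-squared error bound of Theorem~\ref{thm:main} and collapse it into the advertised closed form using the three uniform ratio/variance estimates implied by assumptions A1 and A3, together with the sample-size hypotheses. Throughout I write $G_h := r_h^{(1)} + V_{h+1}^\pi(s_{h+1}^{(1)})$ for the one-step return at stage $h$, so that the leading part of Theorem~\ref{thm:main} is $\frac1n\sum_{h=0}^H\sum_{s_h}\frac{d_h^\pi(s_h)^2}{d_h^\mu(s_h)}\Var_\mu\!\left[\frac{\pi(a_h|s_h)}{\mu(a_h|s_h)}G_h\,\middle|\,s_h\right]$ times the factor $\bigl(1+\sqrt{16\log n/(n\min_{t,s_t}d_t^\mu(s_t))}\bigr)$. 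First I would dispose of this multiplicative factor: the hypothesis $n>16\log n/\min_{t,s_t}d_t^\mu(s_t)$ forces the square root below $1$, so the factor is at most $2$ and is folded into the leading constant.

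The core step is to bound each conditional variance. Using $\Var\le$ second moment and then extracting one factor of the importance weight via $\frac{\pi(a_h|s_h)}{\mu(a_h|s_h)}\le\tau_a$, the Markov/conditional-independence structure (the reward and transition depend on the history only through $(s_h,a_h)$) yields the change-of-measure identity
\[
\Var_\mu\!\left[\frac{\pi(a_h|s_h)}{\mu(a_h|s_h)}G_h\,\middle|\,s_h\right]\le \tau_a\,\E_\mu\!\left[\frac{\pi(a_h|s_h)}{\mu(a_h|s_h)}G_h^2\,\middle|\,s_h\right]=\tau_a\,\E_\pi[G_h^2\mid s_h].
\]
I would then bound $\E_\pi[G_h^2\mid s_h]$ by conditioning further on $(a_h,s_{h+1})$: the reward contributes variance at most $\sigma^2$ by A1, while the mean part $r_h(s_h,a_h,s_{h+1})+V_{h+1}^\pi(s_{h+1})$ lies in $[0,R_{\max}+V_{\max}]$, so $\E_\pi[G_h^2\mid s_h]\le \sigma^2+(R_{\max}+V_{\max})^2$. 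Substituting $V_{\max}=HR_{\max}$ turns this into $O(\sigma^2+H^2R_{\max}^2)$, and the boundary condition $\sigma_0\equiv0$, $r_0\equiv0$ removes the $\sigma^2$ contribution at $h=0$.

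It remains to handle the state weights and the sum over stages. The weighted sum over states collapses, since $\frac{d_h^\pi(s_h)^2}{d_h^\mu(s_h)}\le\tau_s\,d_h^\pi(s_h)$ and $\sum_{s_h}d_h^\pi(s_h)=1$ give $\sum_{s_h}\frac{d_h^\pi(s_h)^2}{d_h^\mu(s_h)}\E_\pi[G_h^2\mid s_h]\le\tau_s\,\E_\pi[G_h^2]$. Summing the resulting per-stage estimate $\tau_a\tau_s\bigl(\sigma^2+(R_{\max}+V_{\max})^2\bigr)$ over the $H+1$ stages accumulates the reward-variance terms into an $H\sigma^2$ contribution and the value-function terms into an $H\cdot H^2R_{\max}^2=H^3R_{\max}^2$ contribution, which is exactly the shape $\tau_a\tau_s(H\sigma^2+H^3R_{\max}^2)$: one power of $H$ comes from the sum over stages and two from $V_{\max}^2$. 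Finally I would note that the second-order term $\frac{19\tau_a^2\tau_s^2SH^2(\sigma^2+R_{\max}^2+V_{\max}^2)}{n^2}$ is of order $1/n^2$; using $S\min_{s}\max\{d_t^\pi(s),d_t^\mu(s)\}\le 2$ (so the sample-size condition controls $S/n$), it is lower order relative to the leading $1/n$ term and folds into the constant for $n$ past the stated threshold.

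Since all of the genuinely analytic content lives in Theorem~\ref{thm:main}, the corollary is essentially bookkeeping. The one place to be careful is the change-of-measure identity above, which rests on the conditional-independence structure of the MDP; the only other subtlety is the $H$-power accounting that produces the cubic dependence. I expect the explicit numerical constant $4$ to be the least delicate part — it simply absorbs the $(1+\sqrt{\cdot})\le 2$ factor, the loose $(R_{\max}+V_{\max})^2\le O(H^2R_{\max}^2)$ bound, and the dominated second-order term — and I would not attempt to optimize it.
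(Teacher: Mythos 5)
Your proposal is sound and reaches a bound of the stated form, but by a genuinely different route than the paper's. You use Theorem~\ref{thm:main} as a black box: bound the multiplicative factor by $2$, bound each conditional variance by $\tau_a\bigl(\sigma^2+(R_{\max}+V_{\max})^2\bigr)$ via the same second-moment/change-of-measure argument that the paper runs inside the proof of the theorem, collapse the state sum with $\sum_{s_h}d_h^\pi(s_h)^2/d_h^\mu(s_h)\le\tau_s$, and absorb the $19\tau_a^2\tau_s^2SH^2(\cdot)/n^2$ remainder via $S\min_s\max\{d_t^\pi(s),d_t^\mu(s)\}\le 2$, so that the sample-size hypothesis yields $S/n\le 1/(2H\tau_a\tau_s)$. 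The paper does something you could not have inferred from the theorem statement alone: it re-opens the proof of Theorem~\ref{thm:main} and re-bounds the error-propagation term \eqref{eq:thm_main_proof_second_term} using $\frac{4h\tau_a\tau_s}{n}\cdot\frac{d_h^\pi(s_h)}{d_h^\mu(s_h)}\le\frac{d_h^\pi(s_h)^2}{d_h^\mu(s_h)}+d_h^\pi(s_h)$, turning that term into another copy of the leading term plus an on-policy term, and thereby never incurring the $19/n^2$ remainder at all. The trade-off: your argument is more modular and reusable, but intrinsically lossier in constants --- after conversion, the $19/n^2$ term alone contributes at least $\frac{19}{2n}\tau_a\tau_s HV_{\max}^2=\frac{9.5}{n}\tau_a\tau_s H^3R_{\max}^2$, so the literal constant $4$ is unreachable by your route; relatedly, your remark that this term is ``lower order'' is imprecise, since after applying the sample-size condition it sits at the same $1/n$ order and merely folds into the constant, which is exactly how you then use it. In fairness, the paper's own one-line proof only claims a bound ``proportional to'' $n^{-1}H(\tau_a\tau_s+\tau_a)(\sigma^2+H^2R_{\max}^2)$ and never verifies the constant $4$ either, so your choice not to chase the constant leaves you on essentially the same footing as the paper.
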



We make a few remarks about the results in Theorem~\ref{thm:main}.

\noindent\textbf{Dependence on $S,A$ and the weights.} The leading term in the variance bound very precisely calculates the MSE of a clipped version of our estimator $\widehat{v}_{\mathrm{MIS}}$\footnote{The clipping step to $[0, HR_{\max}]$ or $[0,V_{\max}]$ should not be alarming. It is required only for technical reasons, and the clipped estimator is a valid estimator to begin with. Since the true policy value must be within the range, the clipping step is only going to improve the MSE. }  modulo a $(1+O(n^{-1/2}))$ multiplicative factor and an $O(1/n^2)$ additive factor. Specifically, our bound does not explicitly depend on $S$ and $A$ but instead on how similar $\pi$ and $\mu$ are. This allows the method to handle the case when the action space is continuous. The dependence on $\tau_a,\tau_s$ only appear in the low-order terms, while the leading term depends only on the second moments of the importance weights.

\noindent\textbf{Dependence on $H$.} In general, our sample complexity upper bound is proportional to $H^3$, as Corollary~\ref{cor:simple_bound} indicates. Our bound reveals that in several cases it is possible to achieve a smaller exponent on $H$ for specific triplets of $(M,\pi,\mu)$. For instance, when $\pi \approx \mu$, such that $\tau_a,\tau_s = 1 + O(1/H)$, the variance bound gives $O((V_{\max }^2+ H\sigma^2)/n)$ or $O((H^2R_{\max }^2 + H\sigma^2)/n)$, which matches the MSE bound (up to a constant) of the simple-averaging estimator that knows $\pi=\mu$ a-priori. (See Remark~\ref{rmk:pi_eq_mu_case} in the Appendix for more details). If $V_{\max}$ is a constant that does not depend on $H$ (this is often the case in games when there is a fixed reward at the end), then the sample complexity is only $O(H)$.


\noindent\textbf{Optimality.}  
Comparing to the Cramer-Rao lower bound of the Theorem 3 in \citep{jiang2016doubly}, which we paraphrase below
\begin{equation}\label{eq:cr_lowerbound_statement}
\frac{1}{n}\sum_{h=1}^H\sum_{s_h} \frac{d_h^\pi(s_h)^2}{d_h^\mu(s_h)}\sum_{a_h}\frac{\pi_h(a_{h}|s_h)^2}{\mu_h(a_{h}|s_h)}\Var\left[V_{h+1}^\pi(s_{h+1}^{(1)}) + r_h^{(1)}\middle|  s_{h}^{(1)} = s_h, a_{h}^{(1)} = a_h\right],
\end{equation}
the MSE of our estimator is asymptotically bigger by an additive factor of \begin{equation}\label{eq:gap_from_cr_lowerbound}
\frac{1}{n}\sum_{h=1}^H\sum_{s_h} \frac{d_h^\pi(s_h)^2}{d_h^\mu(s_h)}\Var_\mu\left[\frac{\pi_h(a_{h}^{(1)}|s_h)}{\mu_h(a_{h}^{(1)}|s_h)}
Q_h^\pi(s_{h}, a_{h}^{(1)})\right],
\end{equation}
where $Q_h^\pi(s_h,a_h) := \E \big[(V_{h+1}^\pi(s_{h+1}^{(1)}) +  r_h^{(1)})\big| s_{h}^{(1)}=s_h,a_{h}^{(1)}=a_h\big]$ is the standard $Q$-function the MDP. The gap is significant as the CR lower bound \eqref{eq:cr_lowerbound_statement} itself only has a worst-case bound of $H^2\tau_s\tau_a/n$ \footnote{This is somewhat surprising as each of the $H$ summands in the expression can be as large as $H^2$.}, while \eqref{eq:gap_from_cr_lowerbound} is proportional to $H^3\tau_s\tau_a/n$. This implies that our estimator is optimal up to a factor of $H$.  See Remark~\ref{rmk:cr_lowerbound} for more details in the appendix.

It is an intriguing open question whether this additional factor of $H$ can be removed. Our conjecture is that the answer is negative and what we established in Theorem~\ref{thm:main} matches the \emph{correct} information-theoretic limit for any methods in the cases when the action space $\cA$ is continuous (or significantly larger than $n$).
This conjecture is consistent with an existing lower bound in the simpler contextual bandits setting, where \citet{wang2017optimal} established that a variance of expectation term analogous to the one above cannot be removed, and no estimators can asymptotically attain the CR lower bound for all problems in the large state/action space setting.

\subsection{Proof Sketch}
In this section, we briefly describe the main technical components in the proof of Theorem~\ref{thm:main}.  More detailed arguments are deferred to the full proof in Appendix~\ref{sec:full_theo}. 

Recall that \eqref{eq:MIS_value} is equivalent to $\sum_{t=1}^H\sum_{s_t} \widehat{d}_t^\pi(s_t) \widehat{r}^\pi(s_t)$, where 
$
\widehat{r}^\pi(s_t)
$
is estimated with importance sampling and $\widehat{d}_t^\pi(s_t)$ is recursively estimated using $\widehat{d}_{t-1}^\pi(s_{t-1})$ and the importance sampling estimator of the transition matrix $P^\pi_{t}(s_t|s_{t-1})$ under $\pi$.  While the MIS estimator is easy to state, it is not straightforward to analyze. 
We highlight three challenges below.
\begin{enumerate}
	\item[1.] \textit{Dependent data and complex estimator:} While the episodes are independent, the data within each episode are not. Each time step of our MIS estimator uses the data from all episodes up to that time step. 
	\item[2.] \textit{An annoying bias:} There is a non-zero probability that some states $s_t$ at time $t$ is not visited at all in all $n$ episodes. This creates a bias in the estimator of $\hat{d}^\pi_h$ for all time $h>t$.  While the probability of this happening is extremely small, conditioning on the high probability event breaks the natural conditional independences, which makes it hard to analyze.
		\item[3.] \textit{Error propagation:} The recursive estimator $\hat{d}_t^\pi$ is affected by all estimation errors in earlier time steps. Naive calculation of the error with a constant slack in each step can lead to a ``snowball'' effect that causes an exponential blow-up. 
\end{enumerate}
All these issues require delicate handling because otherwise the MSE calculation will not be tight. Our solutions are as follows.

\noindent\textbf{Defining the appropriate filtration.} The first observation is that we need to have a convenient representation of the data. Instead of considering the $n$ episodes as independent trajectories, it is more useful to think of them all together as a Markov chain of multi-dimensional observations of $n$ \emph{state, action, reward} triplets. Specifically, we define the ``cumulative'' data up to time $t$ by
$\text{Data}_t := \left\{s_{1:{t}}^{(i)}, a_{1:{t-1}}^{(i)},r_{1:{t-1}}^{(i)} \right\}_{i=1}^n$.
Also, we observe that the state of the Markov chain at time $t$ can be summarized by $n_{s_t}$ --- the number of times state $s_t$ is visited.

\noindent\textbf{Fictitious estimator technique.} 
We address the bias issue  by defining a fictitious estimator $\widetilde{v}^\pi$. The fictitious estimator is constructed by, instead of $\hat{d}^\pi_t$ and $\hat{r}^\pi_t$, the fictitious version of these estimators $\tilde{d}^\pi_t$ and $\tilde{r}^\pi_t$, where $\tilde{d}^\pi_t$  is constructed recursively using
$$
\tilde{d}^\pi_t(s_t) =  \sum_{s_{t-1}} \tilde{P}^\pi(s_{t}|s_{t-1})\tilde{d}^\pi_{t-1}(s_{t-1}).
$$
The key difference is that whenever $n_{s_t}< \E_\mu n_{s_t}(1-\delta)$ for some $0<\delta<1$, we assign
$ \tilde{P}^\pi(s_{t+1}|s_{t}) =  P^\pi(s_{t+1}|s_{t})$ and $\tilde{r}^\pi(s_t) = \E_\pi[r_t | s_t]$ --- the true values of interest. This ensures that the fictitious estimator is always unbiased (see Lemma~\ref{lem:unbiasedness_fictitious}). Note that this fictitious estimator cannot be implemented in practice. It is used as a purely theoretical construct that simplifies the analysis of the (biased) MIS estimator. In Lemma~\ref{lem:fictitious_approximation}, we show that the $\tilde{v}^\pi$ and $\hat{v}^\pi$ are exponentially close to each other.


\noindent\textbf{Disentangling the dependency by backwards peeling.} The fictitious estimator technique reduces the problem of calculating the MSE of the MIS estimator to a variance analysis of the fictitious estimator. By recursively applying the law of total variance backwards that peels one item at a time from $\text{Data}_t$, we establish an exact linear decomposition of the variance of the fictitious estimator (Lemma~\ref{lem:var_decomp_fictitious}): 
{\small
	\begin{align*}
\Var[\widetilde{v}^\pi] =  \sum_{h=0}^H \sum_{s_h}  \E\left[\frac{\widetilde{d}_h^\pi(s_h)^2}{n_{s_h}} \mathbf{1}{\left(n_{s_h}\geq \frac{n d_h^{\mu}(s_h)}{(1-\delta)^{-1}}\right)}\right]   \Var_\mu\left[\frac{\pi(a_h^{(1)}|s_h)}{\mu(a_h^{(1)}|s_h)} (V_{h+1}^\pi(s_{h+1}^{(1)}) +  r_h^{(1)})\middle| s_{h}^{(1)}=s_h\right].
\end{align*}
}
Observe that the value function $V_t^\pi$ shows up naturally. 
This novel decomposition can be thought of as a generalization of the celebrated Bellman-equation of variance  \citep{sobel1982variance} in the off-policy, episodic MDP setting with a finite sample and can be of independent interest. 

\noindent\textbf{Characterizing the error propagation in $\tilde{d}_h^\pi(s_h)$.}
Lastly, we bound the error term in the state distribution estimation as follows
$$
\E\left[   \frac{\widetilde{d}_{h}^\pi(s_h)^2}{n_{s_h}}   \mathbf{1}{\left( n_{s_h}\geq \frac{n d_h^{\mu}(s_h)}{(1-\delta)^{-1}}\right)}\right]  \leq    \frac{(1-\delta)^{-1}}{n} \left( \frac{d_{h}^\pi(s_h)^2 }{d_h^\mu(s_h)} + \Var\left[\widetilde{d}_{h}^\pi(s_h)\right]\right),
$$
which reduces the problem to bounding $\Var[\widetilde{d}_{h}^\pi(s_h)]$. We show (in Theorem~\ref{thm:fictitious_error_prop}) that instead of an exponential blow-up as will a concentration-inequality based argument imply, the variance increases at most linearly in $h$:
$
\Var[\widetilde{d}_h^\pi(s_h)] \leq  \frac{2(1-\delta)^{-1} h d_h^\pi(s_h)}{n}.
$
%
The proof uses a novel decomposition of $\Cov(\widetilde{d}_h^\pi)$  (Lemma~\ref{lem:ficticious_cov}), which is derived using a similar backwards peeling argument as before. Finally, Theorem~\ref{thm:main} is established by appropriately choosing $\delta =  O(\sqrt{\log n/n\min_{t,s_t}d_t^\mu(s_t)})$.

Due to space limits, we can only highlight a few key elements of the proof. We invite the readers to check out a more detailed exposition in Appendix~\ref{sec:full_theo}.

\section{Experiments}
\label{sec:exp}

Throughout this section, we present the empirical results to illustrate the comparison among different estimators. We demonstrate the effectiveness of our proposed marginalized estimator by comparing it with different classic estimators on several domains.

The methods we compare in this section are: \textit{direct method} (DM), \textit{importance sampling} (IS), \textit{weighted importance sampling} (WIS), \textit{importance sampling with stationary state distribution} (SSD-IS), and \textit{marginalized importance sampling} (MIS). DM uses the model-based approach to estimate $
  T_t(s_t|s_{t-1},a_{t-1}),
  r_t(s_t,a_t)
$ by enumerating all tuples of $(s_{t-1}, a_{t-1}, s_t)$, IS is the step-wise importance sampling method, WIS uses the step-wise weighted (self-normalized) importance sampling method, SSD-IS denotes the method of importance sampling with stationary state distribution proposed by \citep{liu2018breaking}\footnote{Our implementation of SSD-IS for the discrete state case is described in Appendix~\ref{sec:ssd_is_details}. 
}, and MIS is our proposed marginalized method. Note that our MIS also uses the trick of self-normalization to obtain better performance, but the MIS normalization is different: we normalize the estimate $\widehat d_t^\pi$ to the probability simplex, whereas WIS normalizes the importance weights.
We provide further results by comparing doubly robust estimator, weighted doubly robust estimator, and our estimators in Appendix \ref{sec:exp_app}.
We use logarithmic scales in all figures and include $95\%$ confidence intervals as error bars from 128 runs. Our metric is the relative root mean squared error (\textbf{Relative-RMSE}), which is \textbf{the ratio of RMSE and the true value} $v^\pi$.

\paragraph{Time-invariant MDPs}
\begin{wrapfigure}{r}{0.4\textwidth}
    \centering
    \begin{subfigure}[b]{0.48\linewidth}
        \centering
        \includegraphics[width=\linewidth, page=3]{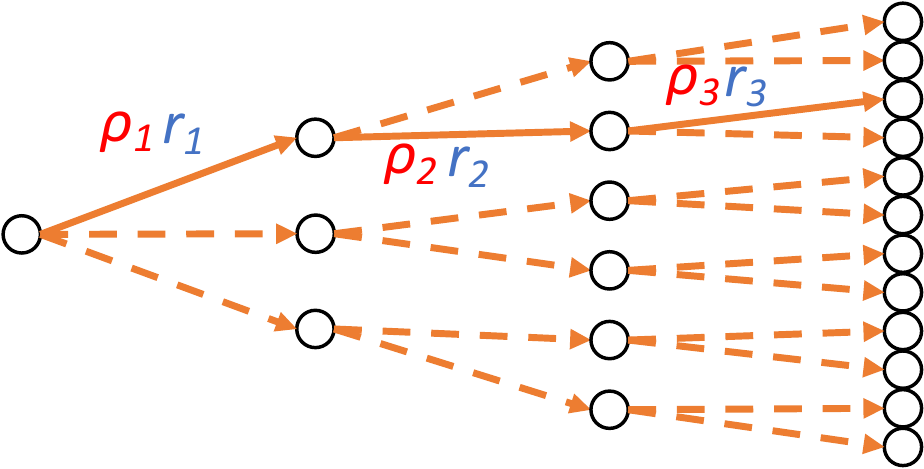}
        \caption{ModelWin}
        \label{fig:ModelWinMDP-main}
    \end{subfigure}
    \begin{subfigure}[b]{0.48\linewidth}
        \centering
        \includegraphics[width=\linewidth, page=4]{figures/MarginalMDP-crop.pdf}
        \caption{ModelFail}
        \label{fig:ModelFailMDP-main}
    \end{subfigure}
    \caption{MDPs of OPE domains.}
    \label{fig:modelwinfail}
\end{wrapfigure}
We first test our methods on the standard ModelWin and ModelFail models with time-invariant MDPs,  first introduced by \citet{thomas2016data}.
The \textbf{ModelWin} domain simulates a fully observable MDP, depicted in Figure~\ref{fig:ModelWinMDP-main}.
On the other hand, the \textbf{ModelFail} domain (Figure~\ref{fig:ModelFailMDP-main}) simulates a partially observable MDP, where the agent can only tell the difference between $s_1$ and the ``other'' unobservable states.
A detailed description of these two domains can be found in Appendix \ref{sec:exp_app}.
For both problems, the target policy $\pi$ is to always select $a_1$ and $a_2$ with probabilities $0.2$ and $0.8$, respectively, and the behavior policy $\mu$ is a uniform policy.

We provide two types of experiments to show the properties of our marginalized approach. The first kind is with different numbers of episodes, where we use a fixed horizon $H = 50$. The second kind is with different horizons, where we use a fixed number of episodes $n = 1024$.
%
We use MIS only with observable states and the partial trajectories between them. Details about applying MIS with partial observability can be found in Appendix \ref{app:ope_framework}.
While this approach is general in more complex applications,
for ModelFail, the agent always visits $s_1$ at every other step and we can simply replace
$
\frac{\pi(a_{t}^{(i)}|s_{t}^{(i)})}
{\mu(a_{t}^{(i)}|s_{t}^{(i)})}
$
with
$
\frac{\pi(a_{2\tau}^{(i)}|s_{2\tau}^{(i)} = ?)}
{\mu(a_{2\tau}^{(i)}|s_{2\tau}^{(i)} = ?)}
\frac{\pi(a_{2\tau-1}^{(i)}|s_{2\tau-1}^{(i)})}
{\mu(a_{2\tau-1}^{(i)}|s_{2\tau-1}^{(i)})}
$
for $t=2\tau-1$ in \eqref{eq:mis-reward}.

\begin{figure*}[thb]
    \centering
    \begin{subfigure}[b]{1\linewidth}
        \centering
        \includegraphics[width=0.6\linewidth]{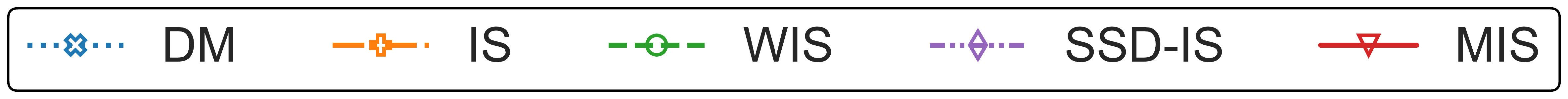}
    \end{subfigure}\\
    \begin{subfigure}[b]{0.2345\linewidth}
        \centering
        \includegraphics[width=\linewidth]{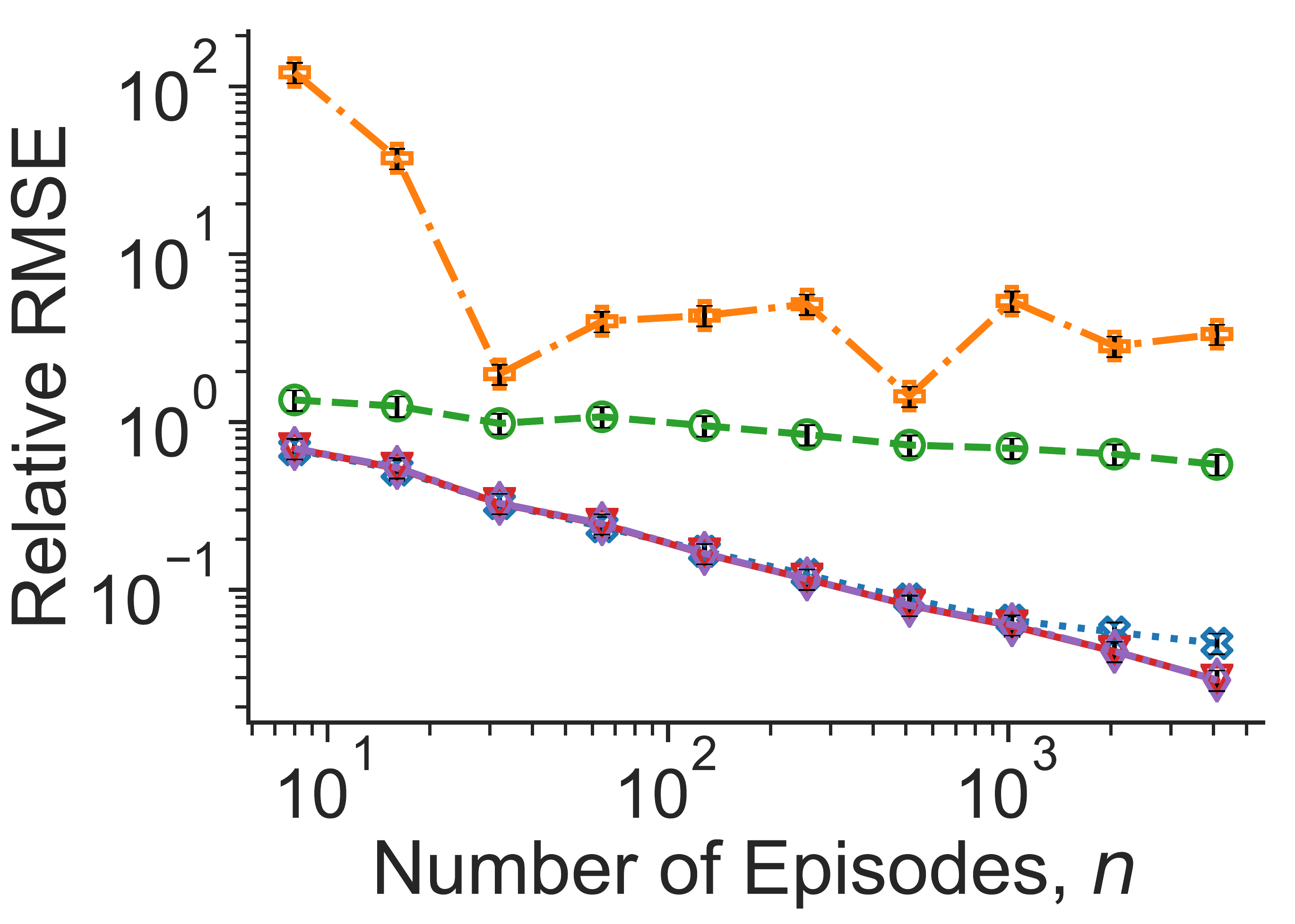}
        \caption{ModelWin with different number of episodes $n$.}
        \label{fig:tinv_ModelWin_n}
    \end{subfigure}
    ~
    \begin{subfigure}[b]{0.2345\linewidth}
        \centering
        \includegraphics[width=\linewidth]{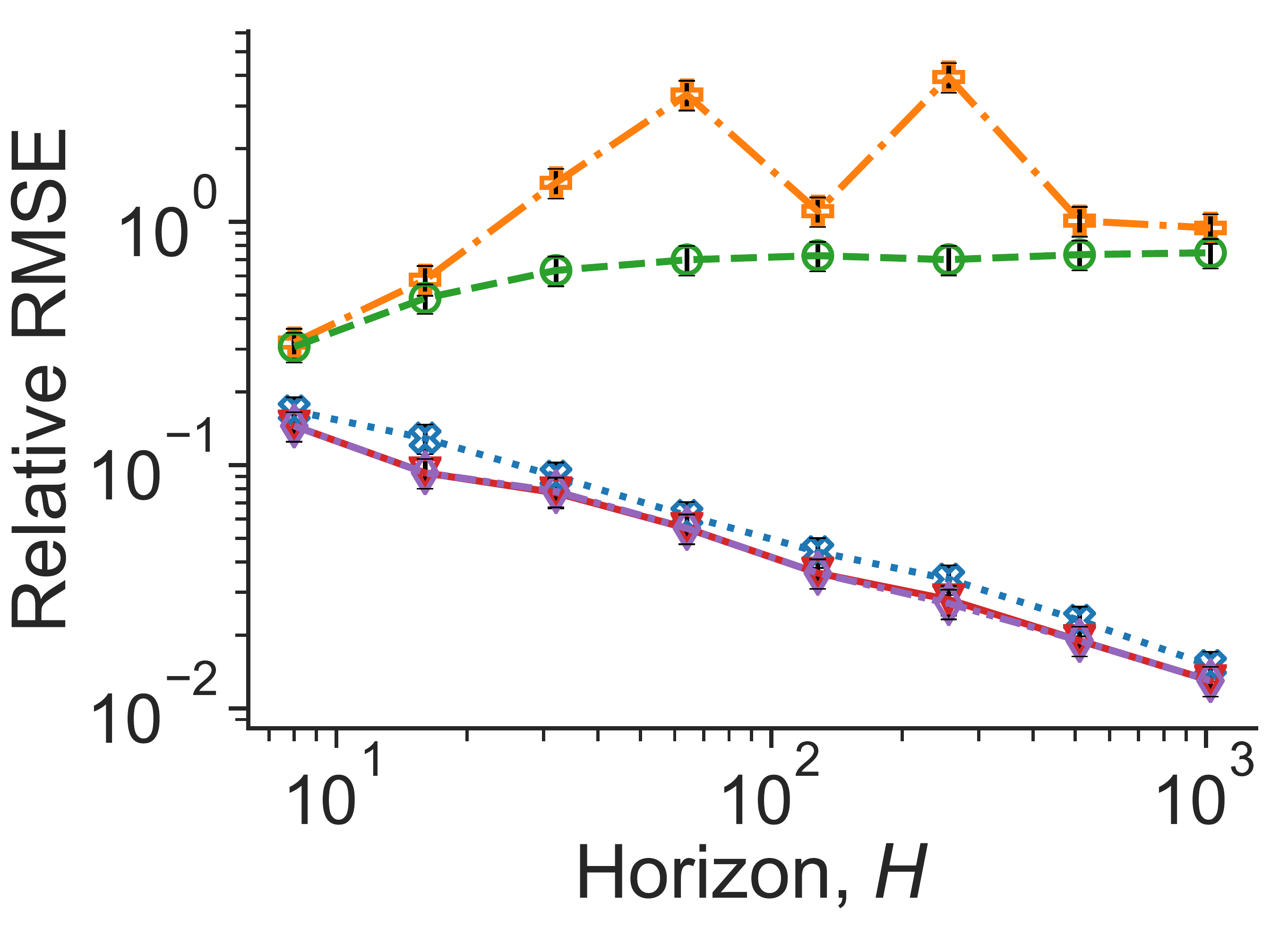}
        \caption{ModelWin with different horizon $H$.}
        \label{fig:tinv_ModelWin_H}
    \end{subfigure}
    ~
    \begin{subfigure}[b]{0.2345\linewidth}
        \centering
        \includegraphics[width=\linewidth]{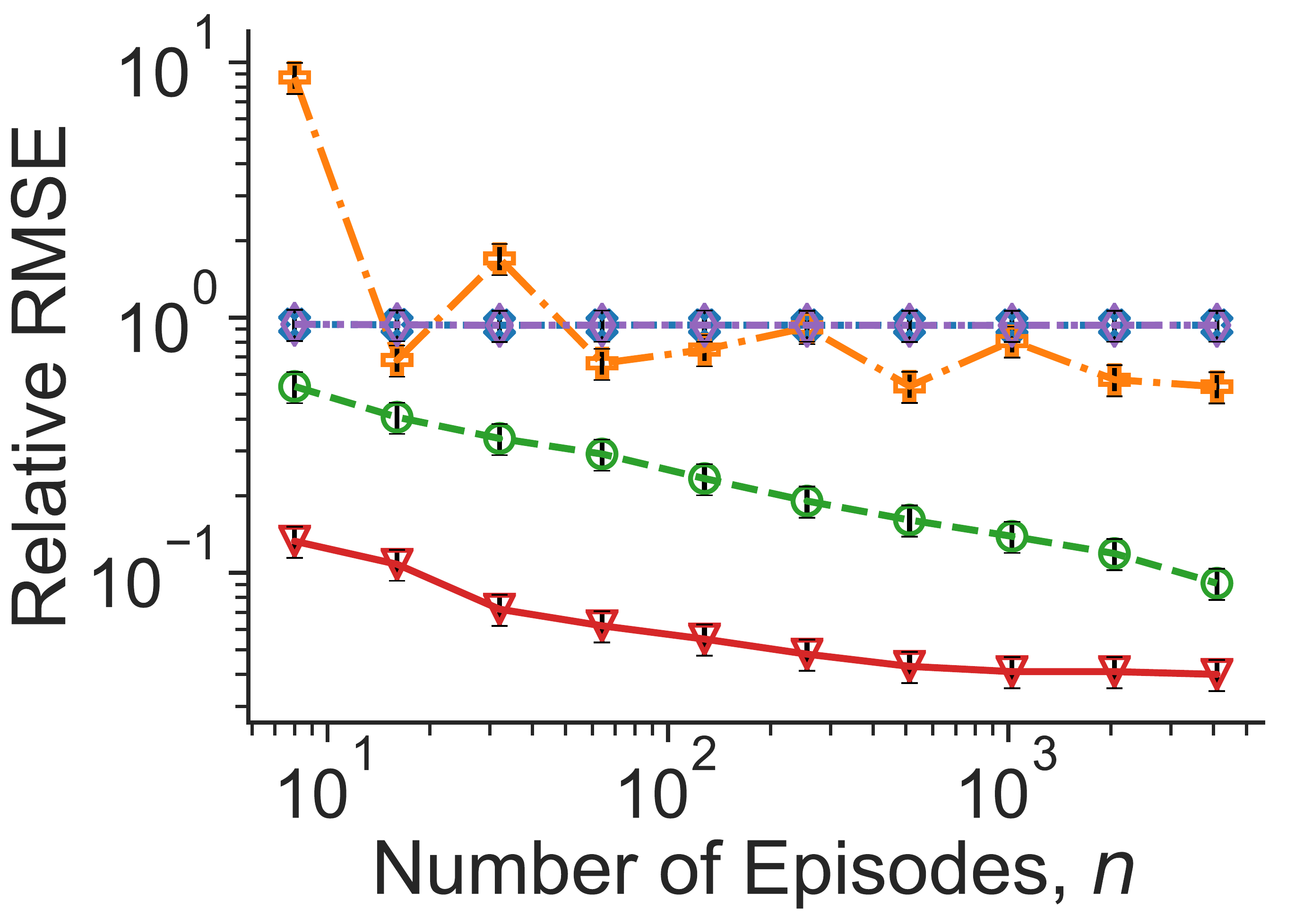}
        \caption{ModelFail with different number of episodes $n$.}
        \label{fig:tinv_ModelFail_n}
    \end{subfigure}
    ~
    \begin{subfigure}[b]{0.2345\linewidth}
        \centering
        \includegraphics[width=\linewidth]{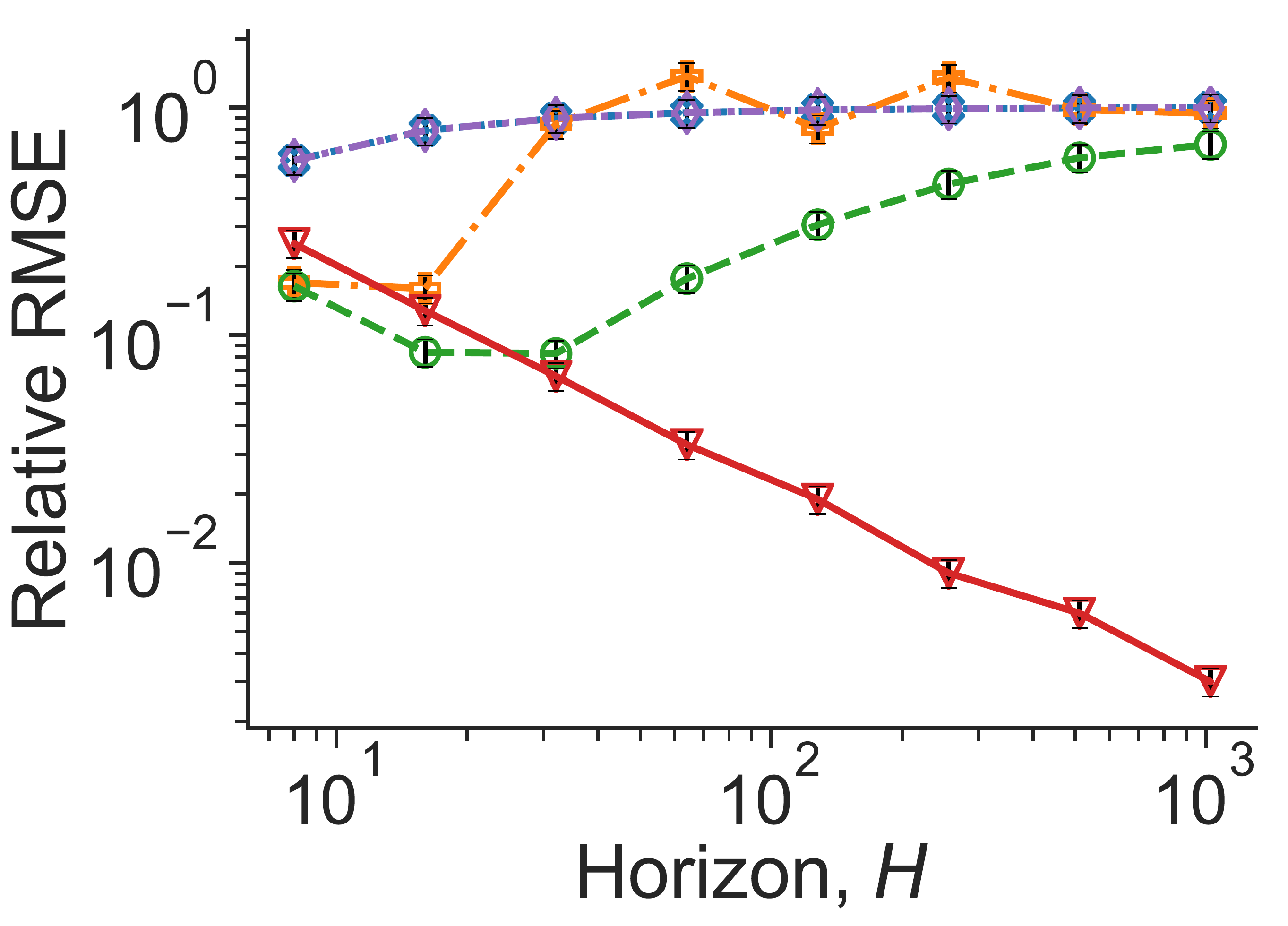}
        \caption{ModelFail with different horizon $H$.}
        \label{fig:tinv_ModelFail_H}
    \end{subfigure}
    \caption{Results on Time-invariant MDPs. MIS matches DM on ModelWin and outperforms IS/WIS on ModelFail, both of which are the best existing methods on their respective domains.}
    \label{fig:modelwinfail_timinv}
\end{figure*}

Figure~\ref{fig:modelwinfail_timinv} shows the results in the time-invariant ModelWin MDP and ModelFail MDP.
The results clearly demonstrate that MIS maintains a polynomial dependence on $H$ and matches the best alternatives such as DM in Figure~\ref{fig:tinv_ModelWin_H} and IS at the beginning of Figure~\ref{fig:tinv_ModelFail_H}.
Notably, the IS in Figure~\ref{fig:tinv_ModelFail_H} reflects a bias-variance trade-off, that its RMSE is smaller at short horizons due to unbiasedness yet larger at long horizons due to high variance.
\paragraph{Time-varying, non-mixing MDPs with continuous actions.}
We also test our approach in simulated MDP environments
where the states are binary, the actions are continuous between [0,1] and the state transition models are time-varying with a finite horizon $H$.
The agent starts at State 1.
At every step, the environment samples a random parameter $p\in[0.5/H, 0.5-0.5/H]$.
Any agent in State 1 will transition to State 0 if and only if it samples an action between $[p-0.5/H, p+0.5/H]$.
On the other hand, State 0 is a sinking state.
The agent collects rewards at State 0 in the latter half of the steps $(t\geq H/2)$.
Thus, the agent wants to transition to State 0, but the transition probability is inversely proportional to the horizon $H$ for uniform action policies.
We pick the behavior policy to be uniform on $[0,1]$ and the target policy to be uniform on $[0,0.5]$ with $95\%$ total probability and $5\%$ chance uniformly distributed on $[0.5,1]$.


\begin{wrapfigure}{r}{0.52\textwidth}
\centering
\includegraphics[width=0.7\linewidth]{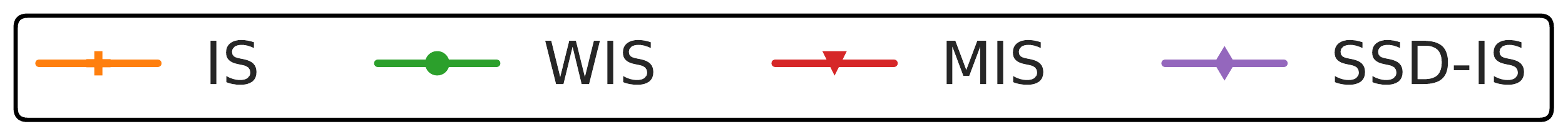}\\
\centering
    \begin{subfigure}[b]{0.48\linewidth}
        \centering
        \includegraphics[width=\linewidth]{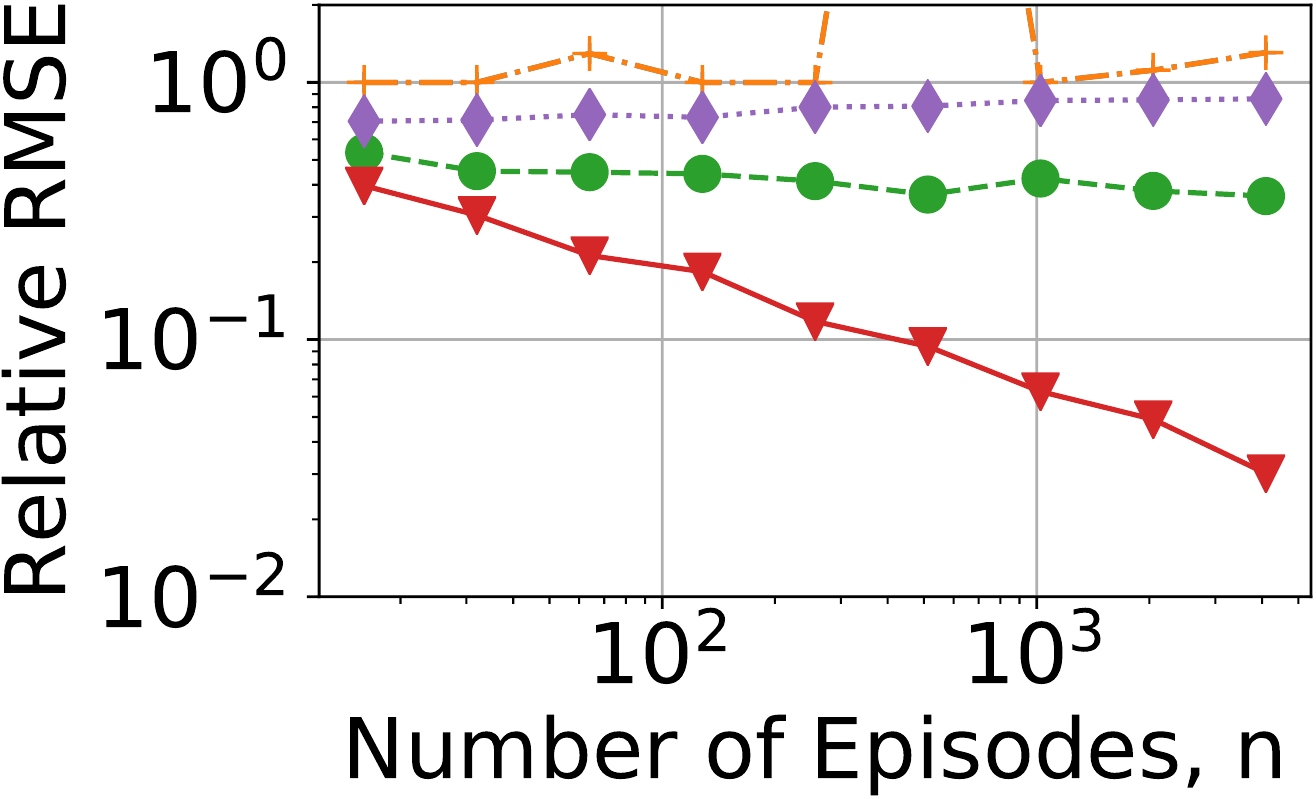}
        \caption{Dependency on $n$}
        \label{fig:tvar_n}
    \end{subfigure}
    ~
    \begin{subfigure}[b]{0.48\linewidth}
        \centering
        \includegraphics[width=\linewidth]{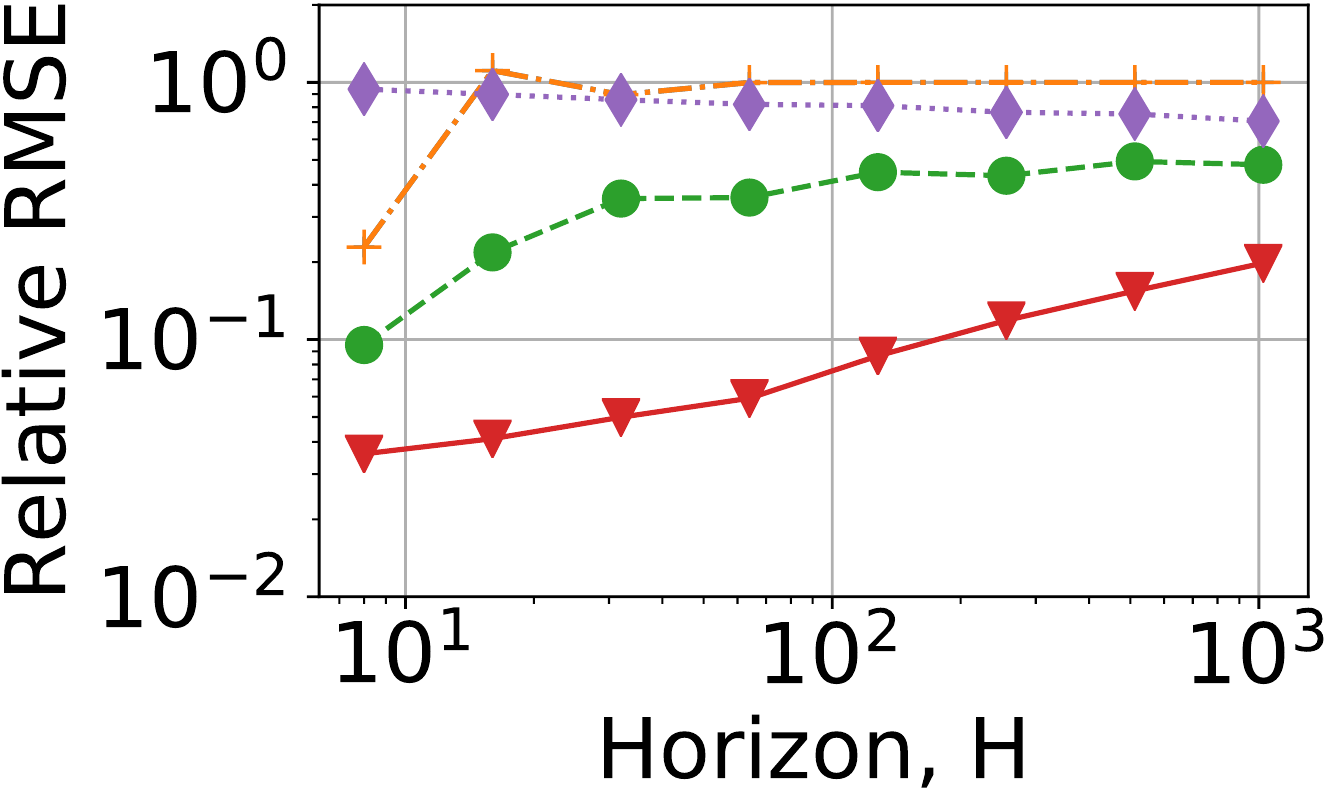}
        \caption{Dependency on $H$}
        \label{fig:tvar_H}
    \end{subfigure}
\caption{Time-varying MDPs}
\label{fig:tvar}
\end{wrapfigure}

Figure~\ref{fig:tvar_n} shows the asymptotic convergence rates of RMSE with respect to the number of episodes, given fixed horizon $H=64$. MIS converges at a $O(\nicefrac{1}{\sqrt{n}})$ rate from the very beginning. In comparison, neither IS or MIS has entered their asymptotic $n^{-1/2}$ regime yet with $n \leq 4,096$. 
SSD-IS does not improve as $n$ gets larger, because the stationary state distribution (a point mass on State $0$) is not a good approximation of the average probability of visiting State 0 for $t \in [H/2,H]$.
We exclude DM because it requires additional model assumptions to apply to continuous action spaces.

Figure~\ref{fig:tvar_H} shows the Relative RMSE dependency in $H$, fixing the number of episodes $n=1024$. We see that as $H$ gets larger, the Relative RMSE scales as $O(\sqrt{H})$ for MIS and stays roughly constant for SSD-IS. Since the true reward $v^\pi \propto H$, the result matches the worst-case bound of a $O(H^3)$ MSE in Corollary~\ref{cor:simple_bound}. SSD-IS saves a factor of $H$ in variance, as it marginalizes over the $H$ steps, but introduces a large bias as we have seen in Figure~\ref{fig:tvar_n}.
IS and WIS worked better for small $H$, but quickly deteriorates as $H$ increases. 
Together with Figure~\ref{fig:tvar_n}, we may conclude that 
In conclusion, MIS is the only method, among the alternatives in this example, that produces a consistent estimator with low variance. 

\paragraph{Mountain Car.}
\begin{figure}[htb]
\centering
\includegraphics[width=0.5\linewidth]{figures/legends.pdf}\\
\centering
\includegraphics[width=0.3\linewidth]{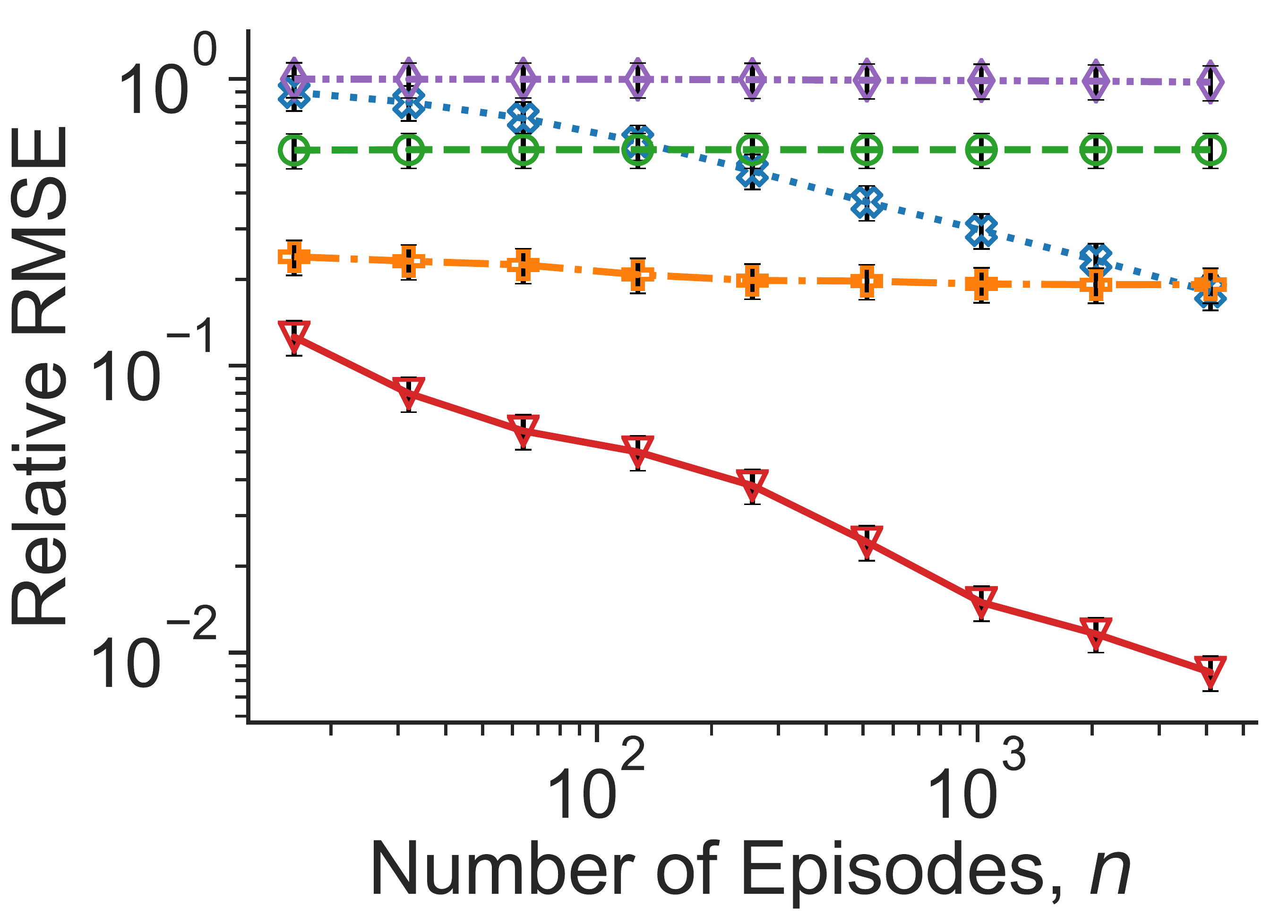}
\caption{Mountain Car with different number of episodes.}
\label{fig:tvar_MountainCar_n}
\end{figure}
Finally, we benchmark our estimator on the Mountain Car domain \citep{singh1996reinforcement}, where an under-powered car drives up a steep valley by ``swinging'' on both sides to gradually build up potential energy. 
%
To construct the stochastic behavior policy $\mu$ and stochastic evaluated policy $\pi$, we first compute the optimal Q-function using Q-learning and use its softmax policy of the optimal Q-function as evaluated policy $\pi$ (with the temperature of $1$). For the behavior policy $\mu$, we also use the softmax policy of the optimal Q-function but set the temperature to $1.25$. Note that this is a finite-horizon MDP with continuous state. We apply MIS by discretizing the state space as in \citep{jiang2016doubly}.


The results
, shown in Figure~\ref{fig:tvar_MountainCar_n}, demonstrate the effectiveness of our approach in a common benchmark control task, where the ability to evaluate under long horizons is required for success. Note that Mountain Car is an episodic environment with a absorbing state, so it is not a setting that SSD-IS is designed for. We include the the detailed description on the experimental setup and discussion on the results in Appendix \ref{sec:exp_app}.

\section{Conclusions}
\label{sec:con}

In this paper, we propose a marginalized importance sampling (MIS) method for the problem of off-policy evaluation in reinforcement learning.
Our approach gets rid of the burden of horizon by using an estimated marginal state distribution of the target policy at every step instead of the cumulative product of importance weights. 

Comparing to the pioneering work of \citet{liu2018breaking} that uses a similar philosophy, this paper focuses on the finite state episodic setting with an potentially infinite action space. We proved the first finite sample error bound for such estimators with polynomial dependence in all parameters. The error bound is tight in that it matches the asymptotic variance of a fictitious estimator that has access to oracle information up to a low-order additive factor. Moreover, it is within a factor of $O(H)$ of the Cramer-Rao lower bound of this problem in \citep{jiang2016doubly}. We conjecture that this additional factor of $H$ is required for any estimators in the \emph{infinite action} setting.

Our experiments demonstrate that the MIS estimator is effective in practice as it achieves substantially better performance than existing approaches in a number of benchmarks.

\section*{Acknowledgement}

The authors thank Yu Bai, Murali Narayanaswamy, Lin F. Yang, Nan Jiang, Phil Thomas, Ying Yang for helpful discussion and Amazon internal review committee for the feedback on an early version of the paper. We also acknowledge the NeurIPS area chair, anonymous reviewers for helpful comments and Ming Yin for carefully proofreading the paper.

YW was supported by a start-up grant from UCSB CS department, NSF-OAC 1934641 and a gift from AWS ML Research Award.

\bibliography{paper}

\begin{thebibliography}{}

\bibitem[Bottou et~al., 2013]{bottou2013counterfactual}
Bottou, L., Peters, J., Qui{\~n}onero-Candela, J., Charles, D.~X., Chickering,
  D.~M., Portugaly, E., Ray, D., Simard, P., and Snelson, E. (2013).
\newblock Counterfactual reasoning and learning systems: The example of
  computational advertising.
\newblock {\em The Journal of Machine Learning Research}, 14(1):3207--3260.

\bibitem[Chapelle et~al., 2015]{chapelle2015simple}
Chapelle, O., Manavoglu, E., and Rosales, R. (2015).
\newblock Simple and scalable response prediction for display advertising.
\newblock {\em ACM Transactions on Intelligent Systems and Technology (TIST)},
  5(4):61.

\bibitem[Chernoff et~al., 1952]{chernoff1952measure}
Chernoff, H. et~al. (1952).
\newblock A measure of asymptotic efficiency for tests of a hypothesis based on
  the sum of observations.
\newblock {\em The Annals of Mathematical Statistics}, 23(4):493--507.

\bibitem[Dud{\'\i}k et~al., 2011]{dudik2011doubly}
Dud{\'\i}k, M., Langford, J., and Li, L. (2011).
\newblock Doubly robust policy evaluation and learning.
\newblock In {\em International Conference on Machine Learning}, pages
  1097--1104. Omnipress.

\bibitem[Ernst et~al., 2006]{ernst2006clinical}
Ernst, D., Stan, G.-B., Goncalves, J., and Wehenkel, L. (2006).
\newblock Clinical data based optimal sti strategies for hiv: a reinforcement
  learning approach.
\newblock In {\em Decision and Control, 2006 45th IEEE Conference on}, pages
  667--672. IEEE.

\bibitem[Farajtabar et~al., 2018]{farajtabar2018more}
Farajtabar, M., Chow, Y., and Ghavamzadeh, M. (2018).
\newblock More robust doubly robust off-policy evaluation.
\newblock In {\em International Conference on Machine Learning (ICML-18)},
  volume~80, pages 1447--1456, Stockholmsmässan, Stockholm Sweden. PMLR.

\bibitem[Gelada and Bellemare, 2019]{gelada2019off}
Gelada, C. and Bellemare, M.~G. (2019).
\newblock Off-policy deep reinforcement learning by bootstrapping the covariate
  shift.
\newblock In {\em AAAI Conference on Artificial Intelligence (AAAI-19)},
  volume~33, pages 3647--3655.

\bibitem[Gottesman et~al., 2019]{gottesman2019combining}
Gottesman, O., Liu, Y., Sussex, S., Brunskill, E., and Doshi-Velez, F. (2019).
\newblock Combining parametric and nonparametric models for off-policy
  evaluation.
\newblock In {\em International Conference on Machine Learning (ICML-19)}.

\bibitem[Guo et~al., 2017]{guo2017using}
Guo, Z., Thomas, P.~S., and Brunskill, E. (2017).
\newblock Using options and covariance testing for long horizon off-policy
  policy evaluation.
\newblock In {\em Advances in Neural Information Processing Systems (NIPS-17)},
  pages 2492--2501.

\bibitem[Hallak and Mannor, 2017]{hallak2017consistent}
Hallak, A. and Mannor, S. (2017).
\newblock Consistent on-line off-policy evaluation.
\newblock In {\em International Conference on Machine Learning (ICML-17)},
  pages 1372--1383. JMLR. org.

\bibitem[Hirano et~al., 2003]{hirano2003efficient}
Hirano, K., Imbens, G.~W., and Ridder, G. (2003).
\newblock Efficient estimation of average treatment effects using the estimated
  propensity score.
\newblock {\em Econometrica}, 71(4):1161--1189.

\bibitem[Jiang and Li, 2016]{jiang2016doubly}
Jiang, N. and Li, L. (2016).
\newblock Doubly robust off-policy value evaluation for reinforcement learning.
\newblock In {\em International Conference on Machine Learning (ICML-16)},
  pages 652--661. JMLR. org.

\bibitem[Li et~al., 2015]{li2015toward}
Li, L., Munos, R., and Szepesvari, C. (2015).
\newblock Toward minimax off-policy value estimation.
\newblock In {\em Artificial Intelligence and Statistics (AISTATS-15)}, pages
  608--616.

\bibitem[Liu et~al., 2018a]{liu2018breaking}
Liu, Q., Li, L., Tang, Z., and Zhou, D. (2018a).
\newblock Breaking the curse of horizon: Infinite-horizon off-policy
  estimation.
\newblock In {\em Advances in Neural Information Processing Systems
  (NeurIPS-18)}, pages 5361--5371.

\bibitem[Liu et~al., 2018b]{liu2018representation}
Liu, Y., Gottesman, O., Raghu, A., Komorowski, M., Faisal, A.~A., Doshi-Velez,
  F., and Brunskill, E. (2018b).
\newblock Representation balancing mdps for off-policy policy evaluation.
\newblock In {\em Advances in Neural Information Processing Systems
  (NeurIPS-18)}, pages 2649--2658.

\bibitem[Mandel et~al., 2014]{mandel2014offline}
Mandel, T., Liu, Y.-E., Levine, S., Brunskill, E., and Popovic, Z. (2014).
\newblock Offline policy evaluation across representations with applications to
  educational games.
\newblock In {\em International conference on Autonomous agents and multi-agent
  systems}, pages 1077--1084. International Foundation for Autonomous Agents
  and Multiagent Systems.

\bibitem[Murphy et~al., 2001]{murphy2001marginal}
Murphy, S.~A., van~der Laan, M.~J., Robins, J.~M., and Group, C. P. P.~R.
  (2001).
\newblock Marginal mean models for dynamic regimes.
\newblock {\em Journal of the American Statistical Association},
  96(456):1410--1423.

\bibitem[Precup et~al., 2000]{precup2000eligibility}
Precup, D., Sutton, R.~S., and Singh, S.~P. (2000).
\newblock Eligibility traces for off-policy policy evaluation.
\newblock In {\em International Conference on Machine Learning (ICML-00)},
  pages 759--766. Morgan Kaufmann Publishers Inc.

\bibitem[Raghu et~al., 2017]{raghu2017continuous}
Raghu, A., Komorowski, M., Celi, L.~A., Szolovits, P., and Ghassemi, M. (2017).
\newblock Continuous state-space models for optimal sepsis treatment: a deep
  reinforcement learning approach.
\newblock In {\em Machine Learning for Healthcare Conference}, pages 147--163.

\bibitem[Singh and Sutton, 1996]{singh1996reinforcement}
Singh, S.~P. and Sutton, R.~S. (1996).
\newblock Reinforcement learning with replacing eligibility traces.
\newblock {\em Machine learning}, 22(1-3):123--158.

\bibitem[Sobel, 1982]{sobel1982variance}
Sobel, M.~J. (1982).
\newblock The variance of discounted markov decision processes.
\newblock {\em Journal of Applied Probability}, 19(4):794--802.

\bibitem[Sutton and Barto, 1998]{sutton1998reinforcement}
Sutton, R.~S. and Barto, A.~G. (1998).
\newblock {\em Reinforcement learning: An introduction}, volume~1.
\newblock MIT press Cambridge.

\bibitem[Tang et~al., 2013]{tang2013automatic}
Tang, L., Rosales, R., Singh, A., and Agarwal, D. (2013).
\newblock Automatic ad format selection via contextual bandits.
\newblock In {\em ACM International Conference on Information \& Knowledge
  Management (CIKM-13)}, pages 1587--1594. ACM.

\bibitem[Theocharous et~al., 2015]{theocharous2015personalized}
Theocharous, G., Thomas, P.~S., and Ghavamzadeh, M. (2015).
\newblock Personalized ad recommendation systems for life-time value
  optimization with guarantees.
\newblock In {\em International Joint Conferences on Artificial Intelligence
  (IJCAI-15)}, pages 1806--1812.

\bibitem[Thomas and Brunskill, 2016]{thomas2016data}
Thomas, P. and Brunskill, E. (2016).
\newblock Data-efficient off-policy policy evaluation for reinforcement
  learning.
\newblock In {\em International Conference on Machine Learning (ICML-16)},
  pages 2139--2148.

\bibitem[Thomas, 2015]{thomas2015safe}
Thomas, P.~S. (2015).
\newblock {\em Safe reinforcement learning}.
\newblock PhD thesis, University of Massachusetts Amherst.

\bibitem[Thomas et~al., 2015]{thomas2015high}
Thomas, P.~S., Theocharous, G., and Ghavamzadeh, M. (2015).
\newblock High-confidence off-policy evaluation.
\newblock In {\em AAAI Conference on Artificial Intelligence (AAAI-15)}, pages
  3000--3006.

\bibitem[Thomas et~al., 2017]{thomas2017predictive}
Thomas, P.~S., Theocharous, G., Ghavamzadeh, M., Durugkar, I., and Brunskill,
  E. (2017).
\newblock Predictive off-policy policy evaluation for nonstationary decision
  problems, with applications to digital marketing.
\newblock In {\em AAAI Conference on Artificial Intelligence (AAAI-17)}, pages
  4740--4745.

\bibitem[Wang et~al., 2017]{wang2017optimal}
Wang, Y.-X., Agarwal, A., and Dud{\i}k, M. (2017).
\newblock Optimal and adaptive off-policy evaluation in contextual bandits.
\newblock In {\em International Conference on Machine Learning (ICML-17)},
  pages 3589--3597.

\end{thebibliography}
\bibliographystyle{apalike}

\clearpage

\appendix

\clearpage
\begin{center}
{\LARGE Appendix}
\end{center}

\section{Concentration inequalities}

\begin{lemma}[Multiplicative Chernoff bound \citep{chernoff1952measure} ]\label{lem:chernoff_multiplicative}
	Let $X$ be a Binomial random variable with parameter $p,n$. For any $\delta>0$, we have that 
	$$
	\P[X > (1+\delta)pn] <  \left(\frac{e^\delta}{(1+\delta)^{1+\delta}}\right)^{np}
	$$
	and 
	$$
	\P[X <(1-\delta)pn] < \left( \frac{e^{-\delta}}{(1-\delta)^{1-\delta}} \right)^{np}.  
	$$
\end{lemma}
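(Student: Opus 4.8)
The plan is to use the classical Chernoff method, i.e.\ the exponential form of Markov's inequality. First I would write $X = \sum_{i=1}^n X_i$ as a sum of $n$ i.i.d.\ Bernoulli$(p)$ random variables, so that the moment generating function factorizes: for any $t\in\R$,
$$
\E[e^{tX}] = \prod_{i=1}^n \E[e^{tX_i}] = (1 - p + p e^t)^n.
$$
Applying the elementary inequality $1+x\le e^x$ with $x = p(e^t-1)$ then yields the convenient bound $\E[e^{tX}] \le e^{np(e^t-1)}$.

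For the upper tail, I would apply Markov's inequality to the nonnegative random variable $e^{tX}$ with $t>0$:
$$
\P[X > (1+\delta)np] = \P\big[e^{tX} > e^{t(1+\delta)np}\big] \le e^{np(e^t-1)\,-\,t(1+\delta)np}.
$$
The exponent is convex in $t$, and minimizing it over $t>0$ by setting the derivative to zero gives $e^t = 1+\delta$, i.e.\ $t=\log(1+\delta)>0$. Substituting this optimizer collapses the exponent to $np\big[\delta - (1+\delta)\log(1+\delta)\big]$, which is exactly $\log\!\big((e^\delta/(1+\delta)^{1+\delta})^{np}\big)$, establishing the first inequality. The lower-tail bound follows the identical template but with $t<0$: since the event $\{X < (1-\delta)np\}$ is equivalent to $\{tX > t(1-\delta)np\}$ when $t<0$, Markov's inequality gives $\P[X<(1-\delta)np] \le e^{np(e^t-1)\,-\,t(1-\delta)np}$, and the same optimization yields $e^t = 1-\delta$, i.e.\ $t=\log(1-\delta)<0$, producing the claimed $(e^{-\delta}/(1-\delta)^{1-\delta})^{np}$.

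The derivation is entirely routine, so there is no substantive obstacle; the only place requiring a little care is the sign bookkeeping in the lower tail, where one must take $t<0$ both so that the event direction flips correctly under the exponential map and so that the optimizer $\log(1-\delta)$ is genuinely negative (which holds precisely in the meaningful range $0<\delta<1$, the regime in which $(1-\delta)^{1-\delta}$ is defined and the right-hand side is a nontrivial bound). Finally, the strict inequalities in the statement, as opposed to $\le$, follow from the strictness of $1+x < e^x$ for $x\neq 0$, applied at the nonzero optimal value of $t$.
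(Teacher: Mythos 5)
Your proof is correct: it is the classical Chernoff method (exponential Markov bound on the MGF, optimized at $t=\log(1+\delta)$ resp.\ $t=\log(1-\delta)$), and your sign bookkeeping for the lower tail, the restriction to $0<\delta<1$ there, and the observation that strictness comes from $1+x<e^x$ for $x\neq 0$ are all sound. The paper itself gives no proof of this lemma — it imports it directly from \citet{chernoff1952measure} — and your argument is precisely the standard one that the citation refers to, so there is nothing to reconcile.
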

	A slightly weaker bound that suffices for our propose is the following:
$$
\P[X < (1-\delta)pn] <  e^{-\frac{\delta^2 pn}{2}}
$$
If we take $\delta = \sqrt{\frac{20\log(n)}{pn}}$, 
$$
\P[X < (1-\delta)pn] <  n^{-10}.
$$

\section{Theoretical analysis of the marginalized IS estimator}
\label{sec:full_theo}


Recall that the marginalized IS estimators are of the following form:
$$
\widehat{v}^{\pi}  =  \sum_{t = 1}^H\sum_{s_t}  \widehat{d}_{t}^{\pi}(s_t) \widehat{r}_t^{\pi}(s_t),
$$
where we recursively estimate the state-marginal under the target policy $\pi$ using
$$
\widehat{d}_{t}^{\pi}(s_t)  =  \sum_{s_{t-1}}  \widehat{P}^{\pi}_{t-1,t}(s_t | s_{t-1})  \widehat{d}_{t-1}^{\pi}(s_{t-1}).
$$

We focus on the setting where the number of actions is large and possibly unbounded, in which case, we use importance sampling based estimators of $ \widehat{P}^{\pi}_{t-1,t} $ and $\widehat{r}_t^{\pi}(s_t) $ instead to get bounds that are independent to $A$. Specifically, we use:
$$
\widehat{P}^{\pi}_{t-1}(s_{t} | s_{t-1}) =  \frac{1}{n_{s_{t-1}}} \sum_{i=1}^{n}   \frac{\pi( a_{t-1}^{(i)}| s_{t-1})}{\mu( a_{t-1}^{(i)}| s_{t-1})}  \mathbf{1}(s_{t-1}^{(i)} = s_{t-1}, a_{t-1}^{(i)}, s_t^{(i)} = s_t) .
$$
and
$$\widehat{r}_t^{\pi}(s_t)  =  \frac{1}{n_{s_t}}\sum_{i=1}^n \frac{\pi(a_t^{(i)}|s_t)}{\mu(a_t^{(i)}|s_t)} r_t^{(i)}  \mathbf{1}(s_t^{(i)} = s_t).$$

The main challenge in analyzing these involves finding a way to decompose the error in the face of the complex recursive structure, as well as to deal with the bias of the estimator. 


\paragraph{Constructing a fictitious estimator.}
Our proof makes novel use of a fictitious estimator $\widetilde{v}^\pi$ which uses $\widetilde{d}_t^\pi = \widetilde{P}_{t+1,t}^\pi \widetilde{d}_{t-1}^\pi$ and $\widetilde{r}_t^\pi$ instead of $\widehat{d}_t^\pi = \widehat{P}_{t+1,t}^\pi(\cdot | s_t) \widehat{d}_{t-1}^\pi$ and $\widehat{r}_t^\pi$ in the original estimator $\widehat{v}^\pi$.

To write it down more formally,
$$
\widetilde{v}^\pi :=  \sum_{t=1}^H \sum_{s_t}\widetilde{d}_t^\pi(s_t)  \widetilde{r}_t^\pi(s_t)
$$
where $\widetilde{d}_t^\pi(s_t)$ is constructed recursively using 
$$
\widetilde{d}_t^\pi =  \widetilde{\P}_{t,t-1}^\pi   \widetilde{d}_{t-1}^\pi
$$
as in our regular estimator for $t=2,3,4,...,H$, and $\widetilde{d}_1^\pi  = \widehat{d}_1$.
In particular, 
$$ \widetilde{r}_t^\pi(s_t) =  \begin{cases}
\widehat{r}_t^\pi(s_t)  & \mbox{ if } n_{s_t} \geq n d_t^\mu(s_t)(1-\delta)\\
r_t^\pi(s_t) &\mbox{ otherwise;}
\end{cases}$$ 
and
$$
\widetilde{\P}_{t,t-1}^\pi(\cdot | s_{t-1})  = \begin{cases}
\widehat{\P}_{t,t-1}^\pi    &\mbox{ if }n_{s_{t-1}} \geq n d_t^\mu(s_{t-1})(1-\delta)\\
\P_{t,t-1}^\pi &\mbox{ otherwise.}
\end{cases}
$$
In the above, $0<\delta<1$ is a parameter that we will choose later.

This estimator $\widetilde{v}^\pi$ is fictitious because it is \emph{not implementable} using the data\footnote{It depends on unknown information such as $d_t^\mu$, $\P_{t,t-1}^\pi$, exact conditional expectation of the reward $r_t^\pi$ and so on.}, but it is somewhat easier to work with and behaves essentially the same as our actual estimator $\widehat{v}^\pi$. As a result, we can analyze our estimator through analyzing $\widetilde{v}^\pi$. The following lemma formalizes the idea.

\begin{lemma}\label{lem:fictitious_approximation}
	Let $\widehat{v}^\pi$ be our MIS estimator and $\cP$ be the projection operator to $[0,HR_{\max }]$ and $\widetilde{v}^\pi$ be the unbiased fictitious estimator that we described above with parameter $\delta$. The MSE of the clipped version of our MIS estimator  obeys
	$$
	\E[(\cP \widehat{v}^\pi- v^\pi)^2]  \leq \E[(\widetilde{v}^\pi- v^\pi)^2] + 3H^3SR_{\max}^2 e^{-\frac{\delta^2n \min_{t,s_t}d^{\mu}_t(s_t)}{2}}
	$$
\end{lemma}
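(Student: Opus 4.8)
The plan is to couple the clipped MIS estimator with the fictitious estimator on a high-probability ``good event'' and to control the rare complementary event crudely, exploiting that both estimators are bounded after projection. Define the good event
$$
\cE := \left\{ n_{s_t} \geq n d_t^\mu(s_t)(1-\delta) \ \text{ for all } t\in\{2,\dots,H\} \text{ and all } s_t \text{ with } d_t^\mu(s_t)>0 \right\}.
$$
The key structural observation is that the fictitious estimator was engineered to differ from $\widehat{v}^\pi$ only through the case-switches in $\widetilde{\P}_{t,t-1}^\pi$ and $\widetilde{r}_t^\pi$, and those switches select the true (unimplementable) quantities precisely when some visitation count falls below its threshold. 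Hence on $\cE$ every switch selects the hat-estimators, so $\widetilde{\P}_{t,t-1}^\pi = \widehat{\P}_{t,t-1}^\pi$ and $\widetilde{r}_t^\pi = \widehat{r}_t^\pi$ throughout; propagating this equality through the common recursion $\widetilde{d}_t^\pi = \widetilde{\P}_{t,t-1}^\pi \widetilde{d}_{t-1}^\pi$ (with identical initialization $\widetilde{d}_1^\pi = \widehat{d}_1$) gives $\widetilde{d}_t^\pi = \widehat{d}_t^\pi$ for every $t$, and therefore $\widetilde{v}^\pi = \widehat{v}^\pi$ on $\cE$.

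With the coupling in hand, I would split the MSE as
$$
\E[(\cP\widehat{v}^\pi - v^\pi)^2] = \E\big[(\cP\widehat{v}^\pi - v^\pi)^2 \mathbf{1}_\cE\big] + \E\big[(\cP\widehat{v}^\pi - v^\pi)^2 \mathbf{1}_{\cE^c}\big].
$$
On $\cE$ we have $\cP\widehat{v}^\pi = \cP\widetilde{v}^\pi$, and since $v^\pi\in[0,HR_{\max}]$ the projection onto this interval fixes $v^\pi$ and is non-expansive, so $(\cP\widetilde{v}^\pi - v^\pi)^2 \le (\widetilde{v}^\pi - v^\pi)^2$; hence the first term is at most $\E[(\widetilde{v}^\pi - v^\pi)^2]$. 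On $\cE^c$ both $\cP\widehat{v}^\pi$ and $v^\pi$ lie in $[0,HR_{\max}]$, so the squared error is deterministically bounded by $H^2R_{\max}^2$, and the second term is at most $H^2R_{\max}^2\,\P[\cE^c]$.

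It then remains to bound $\P[\cE^c]$. Since each count $n_{s_t}=\sum_{i=1}^n \mathbf{1}(s_t^{(i)}=s_t)$ is $\mathrm{Binomial}(n, d_t^\mu(s_t))$ across the $n$ independent episodes, the weak multiplicative Chernoff bound (Lemma~\ref{lem:chernoff_multiplicative}) gives $\P[n_{s_t} < (1-\delta)n d_t^\mu(s_t)] < e^{-\delta^2 n d_t^\mu(s_t)/2} \le e^{-\delta^2 n \min_{t,s_t} d_t^\mu(s_t)/2}$. A union bound over the at most $HS$ relevant pairs $(t,s_t)$ yields $\P[\cE^c] \le HS\, e^{-\delta^2 n \min_{t,s_t} d_t^\mu(s_t)/2}$. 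Combining the three displays bounds the excess MSE by $H^3 S R_{\max}^2\, e^{-\delta^2 n \min_{t,s_t} d_t^\mu(s_t)/2}$, comfortably within the claimed $3H^3 S R_{\max}^2$ prefactor (the extra slack absorbs the precise count of active $(t,s_t)$ pairs and any looseness in the union bound).

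The only delicate point is the first step: one must verify that $\cE$ is defined to match every case-split in the fictitious construction exactly, so that the coupling $\widehat{v}^\pi=\widetilde{v}^\pi$ holds verbatim on $\cE$ (this also rules out any division-by-zero in $\widehat{\P}$ and $\widehat{r}$, since $\cE$ forces every active count to be strictly positive). Everything else is a two-case bound followed by a standard concentration-and-union-bound estimate; crucially, no tightness in the leading MSE term is sacrificed, because the discrepancy between $\cP\widehat{v}^\pi$ and $\widetilde{v}^\pi$ is confined to the exponentially unlikely event $\cE^c$ where the clipped estimator is in any case bounded.
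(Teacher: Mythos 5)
Your proposal is correct and rests on the same three pillars as the paper's proof: the coupling $\widehat{v}^\pi = \widetilde{v}^\pi$ off the bad event, boundedness of the clipped estimator and of $v^\pi$ in $[0,HR_{\max}]$, and a Chernoff-plus-union bound on the bad event's probability. Where you differ is the decomposition. The paper introduces a \emph{conditional} projection operator $\cP_E$ (clipping only when the bad event holds), writes $\cP_E\widehat{v}^\pi - v^\pi = (\cP_E\widehat{v}^\pi - \cP_E\widetilde{v}^\pi) + (\cP_E\widetilde{v}^\pi - v^\pi)$, expands the square, and bounds the three resulting terms on the bad event by $H^2R_{\max}^2$ each --- which is exactly where the factor $3$ in the statement comes from. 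You instead split $\E[(\cP\widehat{v}^\pi - v^\pi)^2]$ with indicators $\mathbf{1}_\cE + \mathbf{1}_{\cE^c}$: on $\cE$ the coupling plus non-expansiveness gives the fictitious MSE directly, and on $\cE^c$ the squared error is crudely bounded by $H^2R_{\max}^2$. This avoids the cross term entirely and yields the sharper additive error $H^3SR_{\max}^2\,e^{-\delta^2 n \min_{t,s_t}d_t^\mu(s_t)/2}$, i.e., constant $1$ rather than $3$; the two routes are otherwise interchangeable. One small correction: your displayed definition of $\cE$ ranges over $t\in\{2,\dots,H\}$, but the fictitious construction also switches on $n_{s_1}$ (both in $\widetilde{r}_1^\pi$ and in $\widetilde{\P}_{2,1}^\pi$), so $\cE$ must include $t=1$ as well; since your union bound already counts ``at most $HS$ relevant pairs,'' nothing else in the argument changes.
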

\begin{proof}[Proof of Lemma~\ref{lem:fictitious_approximation}]
	Let $E$ denotes the event of $\{\exists t, s_t, \;\text{s.t.}\; n_{s_t} <  n d_t^\mu(s_t)(1-\delta)  \}$.
	Let $\cP_E$ be the \emph{conditional} projection operator that clips the value to $[0,HR_{\max}]$ whenever $E$ is true.
	Note that for any $x\in\R$, we have $\cP (\cP_E x) = \cP x$.  By the non-expansiveness of $\cP$, 
	\begin{align*}
	&\E[(\cP \widehat{v}^\pi  - v^\pi)^2] \leq  \E[(\cP_E \widehat{v}^\pi- v^\pi)^2] =  \E[  (\cP_E \widehat{v}^\pi - \cP_E \widetilde{v}^\pi + \cP_E\widetilde{v}^\pi - v^\pi)^2 ]  \\
	=& \E[(\cP_E \widehat{v}^\pi - \cP_E\widetilde{v}^\pi)^2] +2\E[(\cP_E \widehat{v}^\pi - \cP_E\widetilde{v}^\pi)(\cP_E\widetilde{v}^\pi - v^\pi)]  + \E[(\cP_E \widetilde{v}^\pi - v^\pi)^2]\\
	=& \P[E] \E\big[(\cP_E\widehat{v}^\pi - \cP_E\widetilde{v}^\pi)^2+ 2(\cP_E\widehat{v}^\pi - \cP_E\widetilde{v}^\pi)(\cP_E\widetilde{v}^\pi - v^\pi) \big|E\big] + \P[E^c] \cdot 0  + \E[(\cP_E \widetilde{v}^\pi - \cP_E v^\pi)^2]\\
	\leq& 3 \P[E]H^2R_{\max}^2+ \E[(\widetilde{v}^\pi - v^\pi)^2]. 
	\end{align*}
	The third line is by the law of total expectation and the fact that whenever $E$ is not true, $\widehat{v}^\pi = \widetilde{v}^\pi$.  The last line uses the fact that $\cP_E \widehat{v}^\pi,\cP_E \widetilde{v}^\pi, v^\pi$ are all within $[0,HR_{\max}]$ when conditioning on $E$ as well as the non-expansiveness of the projection operator which implies that 
	$$
	\E[(\cP_E(\widetilde{v}^\pi - v^\pi))^2] \leq \E[(\widetilde{v}^\pi - v^\pi)^2].
	$$
	It remains to bound	$
	\P[E]
	$.
	By the multiplicative Chernoff bound (Lemma~\ref{lem:chernoff_multiplicative} in the Appendix) we get that
		$$
	\P\left[ n_{s_t} < n d^{\mu}_t(s_t)(1-\delta) \right] \leq  e^{-\frac{\delta^2 n d^{\mu}_t(s_t)}{2}} 
	$$
	By a union bound over each $t$ and $s_t$, we have
	\begin{align*}
	\P[E] &\leq \sum_{t}\sum_{s_t} \P[n_{s_t,t} < n d^{\mu}_t(s_t)(1-\delta)] \leq HS e^{-\frac{\delta^2 n \min_{t,s_t} d^{\mu}_t(s_t)}{2}}
	\end{align*}
	as stated.
\end{proof}

Lemma~\ref{lem:fictitious_approximation} establishes that  when $n \geq \frac{\mathrm{polylog}(S,H,n)}{\min_{t,s_t}d^\mu_{t}(s_t)}$, we can bound the MSE of a projected version of our estimator using the MSE of the fictitious estimator. The projection to $[0,HR_{\max}]$ is a post-processing that we needed in our proof for technical reasons, and we know that $\E[(\cP \widehat{v}^\pi - v^\pi)^2 ]\leq \E[(\widehat{v}^\pi - v^\pi)^2]$ so it only improves the performance.

\paragraph{Properties of the Fictitious Estimator.}
Now let us prove that $\widetilde{v}^\pi$ is unbiased and also analyze its variance. Recall that the estimator is the following:
$$
\widetilde{v}^{\pi}  =  \sum_{t = 1}^H\sum_{s_t}  \widetilde{d}_{t}^{\pi}(s_t)\widetilde{r}_t^{\pi}(s_t) = \sum_{t=1}^H  \langle\widetilde{d}_{t}^{\pi},  \widetilde{r}_t^{\pi} \rangle 
$$
where we denote quantities $\widetilde{d}_{t}^{\pi},\widetilde{r}_t^{\pi}$ in vector forms in $\R^{S}$.  

In the remainder of this section, we will use $E_t$ as a short hand to denote the event such that $\{n_{s_t} \geq nd_t^\mu(s_t)(1-\delta)\}$, and $\mathbf{1}(E_t)$ be the corresponding indicator function.

\begin{lemma}[Unbiasedness of $\widetilde{v}^\pi$]\label{lem:unbiasedness_fictitious}
	$\E[\widetilde{v}^\pi] = v^\pi$ for all $\delta<1$.
\end{lemma}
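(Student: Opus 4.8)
The plan is to reduce the claim to the single identity $\E[\widetilde{d}_t^\pi] = d_t^\pi$ for every $t$, and then establish that by induction along the filtration generated by the cumulative data $\text{Data}_t = \{s_{1:t}^{(i)}, a_{1:t-1}^{(i)}, r_{1:t-1}^{(i)}\}_{i=1}^n$. First I would condition on $\text{Data}_t$, which freezes the states at time $t$ (hence fixes $\widetilde{d}_t^\pi$ and the events $E_t$) while leaving the step-$t$ actions and rewards random, and show $\E[\widetilde{r}_t^\pi(s_t)\mid \text{Data}_t] = r_t^\pi(s_t)$. On $E_t^c$ the fictitious reward is deterministically $r_t^\pi(s_t)$; on $E_t$ it is the importance-sampling estimate $\widehat{r}_t^\pi(s_t)$, and for each of the $n_{s_t}\ge 1$ episodes visiting $s_t$ the conditional mean is $\E_\mu[\tfrac{\pi(a|s_t)}{\mu(a|s_t)} r_t \mid s_t] = \sum_a \pi(a|s_t) r_t^\pi(s_t,a) = r_t^\pi(s_t)$. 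Since $\widetilde{d}_t^\pi$ is $\text{Data}_t$-measurable, the tower rule gives $\E[\langle \widetilde{d}_t^\pi, \widetilde{r}_t^\pi\rangle] = \langle \E[\widetilde{d}_t^\pi], r_t^\pi\rangle$, so $\E[\widetilde{v}^\pi] = \sum_{t=1}^H \langle \E[\widetilde{d}_t^\pi], r_t^\pi\rangle$ and it remains to control $\E[\widetilde{d}_t^\pi]$.

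The core is the induction $\E[\widetilde{d}_t^\pi] = d_t^\pi$. The base case $t=1$ is immediate because $\widetilde{d}_1^\pi = \widehat{d}_1$ is the empirical initial distribution, so $\E[\widehat{d}_1] = d_1 = d_1^\pi$. For the inductive step I would condition on $\text{Data}_{t-1}$, which fixes $\widetilde{d}_{t-1}^\pi$ and all counts $n_{s_{t-1}}$ (hence all events $E_{t-1}$), and compute $\E[\widetilde{P}_{t,t-1}^\pi(s_t|s_{t-1})\mid \text{Data}_{t-1}]$ row by row. On a low-count row the fictitious transition is the true $P_{t,t-1}^\pi(s_t|s_{t-1})$; on a high-count row it is $\widehat{P}_{t,t-1}^\pi$, whose per-episode conditional mean is $\E_\mu[\tfrac{\pi(a|s_{t-1})}{\mu(a|s_{t-1})}\mathbf{1}(s_t^{(i)}=s_t)\mid s_{t-1}] = \sum_a \pi(a|s_{t-1}) T_t(s_t|s_{t-1},a) = P_{t,t-1}^\pi(s_t|s_{t-1})$ over the $n_{s_{t-1}}\ge 1$ episodes in $s_{t-1}$ (positivity of the count on $E_{t-1}$ for the relevant states follows from A2, and states with $d_t^\pi=0$ contribute nothing). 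In both cases $\E[\widetilde{P}_{t,t-1}^\pi\mid \text{Data}_{t-1}] = P_{t,t-1}^\pi$, so $\E[\widetilde{d}_t^\pi\mid \text{Data}_{t-1}] = P_{t,t-1}^\pi \widetilde{d}_{t-1}^\pi$, and taking expectations with the inductive hypothesis gives $\E[\widetilde{d}_t^\pi] = P_{t,t-1}^\pi d_{t-1}^\pi = d_t^\pi$.

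The conceptual point I would stress is that the fictitious substitution is engineered exactly so that unbiasedness survives: the true quantities $P_{t,t-1}^\pi$ and $r_t^\pi$ spliced in on the low-count events coincide with the conditional expectations of the importance-sampling estimates used on the complementary events, so the two cases glue together with no bias — in contrast to $\widehat{v}^\pi$, whose bias originates from defaulting the ratio to $0$ when $n_{s_t}=0$. The main obstacle is the measurability bookkeeping: I must verify that conditioning on $\text{Data}_{t-1}$ simultaneously freezes $\widetilde{d}_{t-1}^\pi$, determines which rows are substituted, and leaves the step-$(t-1)$ actions and step-$t$ states with their correct Markov conditional laws (draw $a$ from $\mu(\cdot|s_{t-1})$, then $s_t$ from $T_t(\cdot|s_{t-1},a)$) across the independent episodes. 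Linearity of expectation absorbs the facts that distinct rows use disjoint episodes and that episodes within a row are conditionally independent, so no independence beyond the episode-level Markov structure is needed.
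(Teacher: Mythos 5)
Your proof is correct, but it takes a genuinely different route from the paper's. The paper proves this lemma by a \emph{backwards} peeling argument: it defines the value function $V_h^\pi$, invokes the Bellman equation $V_h = r_h^\pi + [P_{h+1,h}^\pi]^T V_{h+1}$, and recursively applies the law of total expectation from $h=H$ down to $h=1$, showing at each step that $\E\big[\langle\widetilde{d}_{h}^{\pi}, V_h^{\pi}\rangle + \sum_{t<h}\langle\widetilde{d}_{t}^{\pi}, \widetilde{r}_t^{\pi}\rangle \,\big|\, \text{Data}_{h-1}\big] = \langle \widetilde{d}_{h-1}^\pi, V_{h-1}^\pi \rangle + \sum_{t<h-1}\langle\widetilde{d}_{t}^{\pi}, \widetilde{r}_t^{\pi}\rangle$, until everything collapses to $\E[\langle \widetilde{d}_1^\pi, V_1^\pi\rangle] = v^\pi$. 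You instead work \emph{forwards}: you establish $\E[\widetilde{d}_t^\pi] = d_t^\pi$ by induction on $t$ using $\E[\widetilde{P}_{t,t-1}^\pi \mid \text{Data}_{t-1}] = P_{t,t-1}^\pi$, then apply the tower rule with $\E[\widetilde{r}_t^\pi \mid \text{Data}_t] = r_t^\pi$ and $\text{Data}_t$-measurability of $\widetilde{d}_t^\pi$ to get $\E[\widetilde{v}^\pi] = \sum_t \langle d_t^\pi, r_t^\pi\rangle = v^\pi$. Both rest on the same two conditional-unbiasedness facts, and your key induction is literally the paper's Lemma~\ref{lem:unbiasedness} (proved there separately for the variance analysis), so your argument is arguably the more economical one for this lemma in isolation: it needs no value function and no Bellman equation. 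What the paper's heavier route buys is reuse: the identical backwards recursion, upgraded from the law of total expectation to the law of total variance, is exactly the engine behind the variance decomposition in Lemma~\ref{lem:var_decomp_fictitious}, where $V_h^\pi$ must appear; proving unbiasedness that way makes the harder lemma a near-verbatim extension. One small point to tighten in your write-up: on the event $E_t$ you should note explicitly that $\delta<1$ together with $d_t^\mu(s_t)>0$ forces $n_{s_t}\geq 1$, so the importance-sampling averages are well defined (you gesture at this via A2, and the paper makes the same remark).
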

\begin{proof}[Proof of Lemma~\ref{lem:unbiasedness_fictitious}]
	The idea of the proof is to recursively apply the Law of Total Expectation backwards from the last round by taking conditional expectations. For simplicity of the proof we will denote 
	$$\text{Data}_t := \left\{s_{1:{t}}^{(i)}, a_{1:{t-1}}^{(i)},r_{1:{t-1}}^{(i)} \right\}_{i=1}^n.$$
		Also, in the base case, let's denote $\text{Data}_1 := \left\{s_{1}^{(i)}\right\}_{i=1}^n$ and that 
	$
	r_{t}^\pi(s_t) := \E_\pi[ r_{t}^{(1)} |  s_t^{(1)} = s_t ]
	$
	
	We first making a few observations that will be useful in the arguments that follow. Firstly, $\widetilde{d}_t^\pi$ and $\widetilde{r}_{t-1}^\pi$ are deterministic given $\text{Data}_t$.
	Secondly,  
	$$
	\E[\widetilde{P}_{t,t-1}^\pi |  \text{Data}_{t-1}] =  P_{t,t-1}^\pi,\quad \text{ and }\quad
	\E[\widetilde{r}_{t}^\pi | \text{Data}_{t}] =  r_t^\pi.
	$$
	These observations are true for all $t=1,...,H$. To see the unbiasedness of the conditional expectation, note that when $n_{s_t}\geq n d_t^\mu(s_{t})(1-\delta)$, the estimators are just empirical mean, which are unbiased and when $n_{s_t} < n d_t^\mu(s_{t})(1-\delta)$, we also have an unbiased estimator by the construction of the fictitious estimator. For all $\delta<1$, the case $n_{s_t}=0$ is ruled out.Thirdly, we write down the standard Bellman equation for policy $\pi$
	$$
	V_h(s_h) =   r_{h}^\pi(s_h) + \sum_{s_{h+1}} P_{h+1,h}^\pi(s_{h+1} | s_h) V_{h+1}(s_{h+1}).
	$$
	where $V_h(s_h) := \E_\pi\left[\sum_{t=h}^H r_t^{(1)}\middle|  s_t^{(1)} = s_h \right]$ or in a matrix form
	$$
		V_h =   r_{h}^\pi  + [P_{h+1,h}^\pi]^T V_{h+1}.
	$$
	These observations together allow us to write the following recursion:
	\begin{align*}
&\E\left[\langle\widetilde{d}_{h}^{\pi},  V_h^{\pi}\rangle +  \sum_{t=1}^{h-1}  \langle\widetilde{d}_{t}^{\pi},  \widetilde{r}_t^{\pi}\rangle  \middle|  \text{Data}_{h-1} \right]  \\
=&  \langle \E[\widetilde{P}_{h,h-1}^\pi | \text{Data}_{h-1}]\widetilde{d}_{h-1}^{\pi},  V_h^{\pi}\rangle +\langle\widetilde{d}_{h-1}^{\pi},  \E\left[ \widetilde{r}_{h-1}^{\pi}\middle| \text{Data}_{h-1}\right]\rangle +  \sum_{t=1}^{h-2}  \langle\widetilde{d}_{t}^{\pi},  \widetilde{r}_t^{\pi}\rangle     \\
=& \langle \widetilde{d}_{h-1}^\pi, [P_{h,h-1}^\pi]^{T} V_h^{\pi} + r_{h-1}^\pi \rangle + \sum_{t=1}^{h-2}  \langle\widetilde{d}_{t}^{\pi},  \widetilde{r}_t^{\pi}\rangle   \\
\explain{=}{\text{Bellman equation}}& \langle \widetilde{d}_{h-1}^\pi, V_{h-1}^\pi \rangle + \sum_{t=1}^{h-2}  \langle\widetilde{d}_{t}^{\pi},  \widetilde{r}_t^{\pi}\rangle .
\end{align*}
Finally, by taking (full) expectation and chaining the above recursions together, we get
\begin{align*}
\E\left[  \sum_{t=1}^H \langle\widetilde{d}_{t}^{\pi},  \widetilde{r}_t^{\pi}\rangle \right] &=  \E\left[ \langle \widetilde{d}_{H}^{\pi} ,V^\pi_H\rangle +  \sum_{t=1}^{H-1} \langle\widetilde{d}_{t}^{\pi},  \widetilde{r}_t^{\pi}\rangle  \right]\\
&=\E\left[ \langle \widetilde{d}_{H-1}^{\pi} ,V^\pi_{H-1}\rangle +  \sum_{t=1}^{H-2} \langle\widetilde{d}_{t}^{\pi},  \widetilde{r}_t^{\pi}\rangle  \right]\\
&= \hdots \\
&=\E\left[  \langle \widetilde{d}_{1}^{\pi} ,V^\pi_{1}\rangle   \right] =v^\pi,
\end{align*}
which concludes the proof.
\end{proof}
Now let's tackle the variance of the fictitious estimator.
\begin{lemma}[Variance decomposition] \label{lem:var_decomp_fictitious}
		\begin{align*}
	\Var[\widetilde{v}^\pi] =& \frac{\Var[V_{1}^\pi(s_1^{(1)})]}{n} 
	\\&+  \sum_{h=1}^H \sum_{s_h}  \E\left[\frac{\widetilde{d}_h^\pi(s_h)^2}{n_{s_h}} \mathbf{1}(E_h)\right]    \Var_{\mu}\left[\frac{\pi(a_h^{(1)}|s_h)}{\mu(a_h^{(1)}|s_h)} (V_{h+1}^\pi(s_{h+1}^{(1)}) +  r_h^{(1)})\middle| s_{h}^{(1)}=s_h\right].
	\end{align*}
		where
		$V_t^\pi(s_t)$ denotes the value function under $\pi$ which satisfies the Bellman equation 
	$$ V_t^\pi(s_t)=  r_t^\pi(s_t)  + \sum_{s_{t+1}} P^\pi_t(s_{t+1}| s_t)   V_{t+1}^\pi(s_{t+1}).$$
\end{lemma}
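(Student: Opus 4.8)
The plan is to express $\Var[\widetilde{v}^\pi]$ as an orthogonal sum of martingale differences along the natural filtration $\cF_h := \sigma(\text{Data}_h)$ and to recycle the Bellman recursion already used in the proof of Lemma~\ref{lem:unbiasedness_fictitious}. First I would introduce the Doob martingale $M_h := \E[\widetilde{v}^\pi \mid \text{Data}_h]$, where $\text{Data}_{H+1}$ denotes the full data set (including the final actions and rewards $a_H,r_H$, on which $\widetilde{r}_H^\pi$ depends), so that $M_{H+1}=\widetilde{v}^\pi$, and set $M_0 := \E[\widetilde{v}^\pi]=v^\pi$. Exactly the computation in the unbiasedness proof --- pushing the conditional expectation through $\E[\widetilde{P}_{h+1,h}^\pi\mid \text{Data}_h]=P_{h+1,h}^\pi$, $\E[\widetilde{r}_h^\pi\mid \text{Data}_h]=r_h^\pi$, and the Bellman equation --- shows the closed form
\[
M_h = \sum_{t=1}^{h-1}\langle\widetilde{d}_t^\pi,\widetilde{r}_t^\pi\rangle + \langle\widetilde{d}_h^\pi, V_h^\pi\rangle .
\]
Iterating the law of total variance backwards, peeling one time step of $\text{Data}$ at a time (apply $\Var[M_{k+1}]=\E[\Var[M_{k+1}\mid\cF_k]]+\Var[M_k]$ from $k=H$ down to $k=1$), then yields $\Var[\widetilde{v}^\pi]=\Var[M_1]+\sum_{h=1}^H \E\big[\Var[M_{h+1}\mid\cF_h]\big]$.

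The $h=1$ term is immediate: since $\widetilde{d}_1^\pi=\widehat{d}_1$ is the empirical initial distribution, $\langle\widetilde{d}_1^\pi,V_1^\pi\rangle=\frac1n\sum_i V_1^\pi(s_1^{(i)})$ is an average of $n$ i.i.d.\ terms, so $\Var[M_1]=\Var[V_1^\pi(s_1^{(1)})]/n$, matching the first summand. For a generic transition step I would form the martingale difference, using $\widetilde{d}_{h+1}^\pi=\widetilde{P}_{h+1,h}^\pi\widetilde{d}_h^\pi$ and the Bellman equation to cancel $V_h^\pi$:
\[
M_{h+1}-M_h = \big\langle \widetilde{d}_h^\pi,\; (\widetilde{r}_h^\pi - r_h^\pi) + [\widetilde{P}_{h+1,h}^\pi - P_{h+1,h}^\pi]^T V_{h+1}^\pi\big\rangle .
\]
Because $\widetilde{d}_h^\pi$ is $\cF_h$-measurable, and because the per-state estimators $\widetilde{r}_h^\pi(s_h),\widetilde{P}_{h+1,h}^\pi(\cdot\mid s_h)$ are built from the \emph{disjoint} sets of episodes that visit each $s_h$ at time $h$ --- hence conditionally independent across $s_h$ given $\cF_h$ --- the conditional variance splits across states:
\[
\Var[M_{h+1}\mid\cF_h] = \sum_{s_h}\widetilde{d}_h^\pi(s_h)^2\,\Var\!\left[(\widetilde{r}_h^\pi(s_h)-r_h^\pi(s_h)) + \textstyle\sum_{s_{h+1}}(\widetilde{P}_{h+1,h}^\pi(s_{h+1}\mid s_h)-P_{h+1,h}^\pi(s_{h+1}\mid s_h))V_{h+1}^\pi(s_{h+1})\;\middle|\;\cF_h\right].
\]

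To finish each per-state term I would split on the event $E_h=\{n_{s_h}\ge n d_h^\mu(s_h)(1-\delta)\}$. Off $E_h$ the fictitious estimators equal the true $r_h^\pi,P_{h+1,h}^\pi$, so the summand vanishes identically. On $E_h$ they are the importance-sampling empirical averages over the $n_{s_h}$ episodes in state $s_h$; collecting the reward and transition pieces, the bracket becomes $\frac{1}{n_{s_h}}\sum_{i:s_h^{(i)}=s_h}\big[\frac{\pi(a_h^{(i)}|s_h)}{\mu(a_h^{(i)}|s_h)}(r_h^{(i)}+V_{h+1}^\pi(s_{h+1}^{(i)})) - V_h^\pi(s_h)\big]$, an average of $n_{s_h}$ i.i.d.\ centered terms, the centering being exactly $V_h^\pi(s_h)$ by importance sampling and the Bellman equation. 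Its conditional variance is $\frac{1}{n_{s_h}}\Var_\mu[\frac{\pi(a_h^{(1)}|s_h)}{\mu(a_h^{(1)}|s_h)}(V_{h+1}^\pi(s_{h+1}^{(1)})+r_h^{(1)})\mid s_h^{(1)}=s_h]$. Multiplying by $\widetilde{d}_h^\pi(s_h)^2\mathbf{1}(E_h)$, taking the outer expectation (the conditional-variance factor is a deterministic constant), and summing over $h$ and $s_h$ delivers the claimed identity. The main obstacle I anticipate is establishing the cross-state conditional independence rigorously so that the conditional variance is genuinely additive over $s_h$, together with careful bookkeeping of which quantities are $\cF_h$-measurable versus newly revealed; the backwards Bellman cancellation itself is inherited from the unbiasedness argument and the rest is routine.
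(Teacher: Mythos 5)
Your proposal is correct and takes essentially the same route as the paper: your Doob martingale $M_h = \langle \widetilde{d}_h^\pi, V_h^\pi\rangle + \sum_{t=1}^{h-1}\langle \widetilde{d}_t^\pi, \widetilde{r}_t^\pi\rangle$ is precisely the quantity the paper peels backwards with the law of total variance, so the martingale-increment orthogonality you invoke is the paper's recursion in different notation. Your per-state computation of $\E[\Var[M_{h+1}\mid \mathrm{Data}_h]]$ --- splitting on $E_h$, using conditional independence of the per-state estimators given $\mathrm{Data}_h$, and centering the importance-sampled terms at $V_h^\pi(s_h)$ via the Bellman equation --- likewise coincides with the paper's per-step variance identity.
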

\begin{remark}
The decomposition of variance is very interpretable. The first part of the variance is coming from estimating the initial state. The second part is coming from the conditional variance of estimating $P_{t+1,t}^{\pi}(s_t)$ and $r_t^\pi(s_t)$ using importance sampling over $a_t$.
\end{remark}

\begin{proof}[Proof of Lemma~\ref{lem:var_decomp_fictitious}]
	The proof uses a peeling argument that recursively applies the law of total variance from the last time point backwards.
	
	The key of the argument relies upon the following identity that holds for all $h=1,...,H-1$.
	\begin{equation}
 	\label{eq:var_recursion_ficticious}
	\begin{aligned}
	\Var\left[  \langle \widetilde{d}_{h+1}^\pi, V_{h+1}^\pi\rangle +  \sum_{t=1}^h \langle\widetilde{d}_{t}^{\pi},  \widetilde{r}_t^{\pi} \rangle\right] =&  \E\left[ \Var\left[  \langle \widetilde{d}_{h+1}^\pi, V_{h+1}^\pi\rangle  +  \langle \widetilde{d}_h^\pi, \widetilde{r}_h^\pi\rangle \middle| \text{Data}_{h} \right] \right] \\
	&+ \Var\left[  \langle \widetilde{d}_{h}^\pi, V_{h}^\pi\rangle +  \sum_{t=1}^{h-1} \langle\widetilde{d}_{t}^{\pi},  \widetilde{r}_t^{\pi} \rangle \right].
	\end{aligned}
	\end{equation}
	Note that in \eqref{eq:var_recursion_ficticious}, when we condition on $\text{Data}_h$, $\widetilde{d}_{h}^\pi$ is fixed. 
	Also, $\widetilde{P}_{h+1,h}(\cdot,s_h)$ and $\widetilde{r}_h^\pi(s_h)$ for each $s_h$  are conditionally independent given $\text{Data}_h$, since $\text{Data}_h$ partitions the $n$ episodes into $S$ disjoint sets according to the states  $s_h^{(i)}$ at time $h$. These observations imply that	
	\begin{align}
	&\E\left[ \Var\left[  \langle \widetilde{d}_{h+1}^\pi, V_{h+1}^\pi\rangle  +  \langle \widetilde{d}_h^\pi, \widetilde{r}_h^\pi\rangle \middle| \text{Data}_{h} \right] \right] \\
	=&\E\left[ \sum_{s_h} \Var\left[  \widetilde{d}_{h}^\pi(s_h)\langle\widetilde{P}_{h+1,h}(\cdot,s_h) , V_{h+1}^\pi\rangle  +  \widetilde{d}_h^\pi(s_h)\cdot \widetilde{r}_h^\pi(s_h) \middle| \text{Data}_{h} \right] \right] \\
	=&\E\left[ \sum_{s_h} \mathbf{1}(E_h)  \Var\left[  \widetilde{d}_{h}^\pi(s_h)\langle\widetilde{P}_{h+1,h}(\cdot,s_h) , V_{h+1}^\pi\rangle  +  \widetilde{d}_h^\pi(s_h)\cdot \widetilde{r}_h^\pi(s_h) \middle| \text{Data}_{h} \right] \right] \\
	=&\E\left[ \sum_{s_h} \mathbf{1}(E_h)   \Var\left[  \left\langle  \frac{\widetilde{d}_h^\pi(s_h)}{n_{s_h}}\sum_{i | s_h^{(i)}=s_h}\frac{\pi(a_h^{(i)}|s_h)}{\mu(a_h^{(i)}|s_h)}  \mathbf{e}_{s_{h+1}^{(i)}},  V_{h+1}^\pi\right\rangle +  \frac{\widetilde{d}_h^\pi(s_h)}{n_{s_h}}\sum_{i  | s_h^{(i)}=s_h } \frac{\pi(a_h^{(i)}|s_h)}{\mu(a_h^{(i)}|s_h)}  r_h^{(i)} \middle| \text{Data}_{h} \right] \right]\\
	=&\E\left[ \sum_{s_h}  \widetilde{d}_h^\pi(s_h)^2\mathbf{1}(E_h)  \Var\left[  \frac{1}{n_{s_h}}\sum_{i  | s_h^{(i)}=s_h }  \frac{\pi(a_h^{(i)}|s_h)}{\mu(a_h^{(i)}|s_h)} (V_{h+1}^\pi(s_{h+1}^{(i)}) +  r_h^{(i)})  \middle| \text{Data}_{h} \right] \right]\\
	=&\sum_{s_h}  \E\left[\frac{\widetilde{d}_h^\pi(s_h)^2}{n_{s_h}} \mathbf{1}(E_h)\right]    \Var\left[\frac{\pi(a_h^{(1)}|s_h)}{\mu(a_h^{(1)}|s_h)} (V_{h+1}^\pi(s_{h+1}^{(1)}) +  r_h^{(1)})\middle| s_{h}^{(1)}=s_h\right].\label{eq:var_recursion_per_step}
	\end{align}
	The second line uses the conditional independence we mentioned above. The third line uses that when $n_{s_h} < nd_h^{\mu}(s_h)$, the conditional variance is $0$. The fourth and fifth line apply the definition of the importance sampling estimators and finally the last line uses that the episodes are iid.

	Apply \eqref{eq:var_recursion_ficticious} recursively 
	\begin{align*}
	\Var[\widetilde{v}^{\pi} ]  = & \E \Var[\widetilde{v}^{\pi} |  \text{Data}_H ]  + \Var[ \E[ \widetilde{v}^{\pi} | \text{Data}_H]] \\
	=& \E\left[ \Var[ \langle\widetilde{d}_{H}^{\pi},  \widetilde{r}_H^{\pi} \rangle   |  \text{Data}_H ]\right]    +  \Var[  \E[  \langle\widetilde{d}_{H}^{\pi},  \widetilde{r}_H^{\pi} \rangle  | \text{Data}_{H}]  + \sum_{t=1}^{H-1} \langle\widetilde{d}_{t}^{\pi},  \widetilde{r}_t^{\pi} \rangle  ] \\
	=&\E\left[ \Var[ \langle\widetilde{d}_{H}^{\pi},  \widetilde{r}_H^{\pi} \rangle   |  \text{Data}_H ]\right]   +  \Var[  \langle\widetilde{d}_{H}^{\pi},  r_H^{\pi} \rangle  + \sum_{t=1}^{H-1} \langle\widetilde{d}_{t}^{\pi},  \widetilde{r}_t^{\pi} \rangle   ] \\
	=&\E\left[ \Var[ \langle\widetilde{d}_{H}^{\pi},  \widetilde{r}_H^{\pi} \rangle   |  \text{Data}_H ]\right]    +  \Var[  \langle\widetilde{d}_{H}^{\pi},  V_H^{\pi} \rangle  + \sum_{t=1}^{H-1} \langle\widetilde{d}_{t}^{\pi},  \widetilde{r}_t^{\pi} \rangle   ] \\
	=&\E\left[ \Var[ \langle\widetilde{d}_{H}^{\pi},  \widetilde{r}_H^{\pi} \rangle   |  \text{Data}_H ]\right]  + \E\left[ \Var\left[  \langle \widetilde{d}_{H}^\pi, V_{H}^\pi\rangle  +  \langle \widetilde{d}_{H-1}^\pi, \widetilde{r}_{H-1}^\pi\rangle \middle| \text{Data}_{H-1} \right] \right] \\
	&+ \Var\left[  \langle \widetilde{d}_{H-1}^\pi, V_{H-1}^\pi\rangle +  \sum_{t=1}^{H-2} \langle\widetilde{d}_{t}^{\pi},  \widetilde{r}_t^{\pi} \rangle \right]\\
	=&\E\left[ \Var[ \langle\widetilde{d}_{H}^{\pi},  \widetilde{r}_H^{\pi} \rangle   |  \text{Data}_H ]\right]  +\sum_{h=H-1}^H \E\left[ \Var\left[  \langle \widetilde{d}_{h}^\pi, V_{h}^\pi\rangle  +  \langle \widetilde{d}_{h-1}^\pi, \widetilde{r}_{h-1}^\pi\rangle \middle| \text{Data}_{h-1} \right] \right] \\
	&+\Var\left[  \langle \widetilde{d}_{H-2}^\pi, V_{H-2}^\pi\rangle +  \sum_{t=1}^{H-3} \langle\widetilde{d}_{t}^{\pi},  \widetilde{r}_t^{\pi} \rangle \right]\\
	=&\E\left[ \Var[ \langle\widetilde{d}_{H}^{\pi},  \widetilde{r}_H^{\pi} \rangle   |  \text{Data}_H ]\right]  +\sum_{h=2}^H \E\left[ \Var\left[  \langle \widetilde{d}_{h}^\pi, V_{h}^\pi\rangle  +  \langle \widetilde{d}_{h-1}^\pi, \widetilde{r}_{h-1}^\pi\rangle \middle| \text{Data}_{h-1} \right] \right] +\Var\left[  \langle \widetilde{d}_{1}^\pi, V_{1}^\pi\rangle \right]\\
	\end{align*}
	Use the boundary condition $V_{H+1}\equiv 0$ as stated in the theorem and apply \eqref{eq:var_recursion_per_step}, we get that
	\begin{align*}
	\Var[\widetilde{v}^\pi]  =& \frac{\Var[V_{1}^\pi(s_1^{(1)})]}{n} 
	\\&+  \sum_{h=1}^H \sum_{s_h}  \E\left[\frac{\widetilde{d}_h^\pi(s_h)^2}{n_{s_h}} \mathbf{1}(E_h)\right]    \Var\left[\frac{\pi(a_h^{(1)}|s_h)}{\mu(a_h^{(1)}|s_h)} (V_{h+1}^\pi(s_{h+1}^{(1)}) +  r_h^{(1)})\middle| s_{h}^{(1)}=s_h\right].
	\end{align*}
	This completes the proof.
\end{proof}

\paragraph{Bounding the importance weights}
It remains to show that for all $h,s_h$, 
$$\E\left[   \frac{\widetilde{d}_{h}^\pi(s_h)^2}{n_{s_h}}   \mathbf{1}(E_h)\right] \approx \frac{d_{h}^\pi(s_h)^2}{n d_{h}^{\mu}(s_h)}.$$  

By the non-negativity of $\widetilde{d}_{h}^\pi(s_h)^2$
\begin{equation}\label{eq:reduction_to_variance}
\E\left[   \frac{\widetilde{d}_{h}^\pi(s_h)^2}{n_{s_h}}   \mathbf{1}(E_h)\right]  \leq  \frac{(1-\delta)^{-1}}{ n d_h^\mu(s_h)} \E\left[   \widetilde{d}_{h}^\pi(s_h)^2\right]  =   \frac{(1-\delta)^{-1}}{ n d_h^\mu(s_h)} (d_{h}^\pi(s_h)^2  + \Var[\widetilde{d}_{h}^\pi(s_h)]).
\end{equation}
where the last identity is true because $\widetilde{d}^\pi_h$ is an unbiased estimator of $d_{h}^\pi(s_h)$ as the following lemma establishes.
\begin{lemma}[Unbiasedness of $\widetilde{d}^\pi_h$]
\label{lem:unbiasedness}
	For all $h=1,...,H$, the fictitious state marginal estimators are unbiased, that is,
	$$
	\E[\widetilde{d}^\pi_h] =  d^\pi_h. 
	$$
\end{lemma}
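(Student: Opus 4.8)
The plan is to prove the claim by induction on $h$, leaning on two structural facts already established in the proof of Lemma~\ref{lem:unbiasedness_fictitious}: that $\widetilde{d}_{h-1}^\pi$ is deterministic once we condition on $\text{Data}_{h-1}$, and that the fictitious transition estimator is conditionally unbiased, $\E[\widetilde{P}_{h,h-1}^\pi \mid \text{Data}_{h-1}] = P_{h,h-1}^\pi$. This second fact is exactly where the thresholded construction of the fictitious estimator pays off: whenever $n_{s_{h-1}}$ drops below $n d_h^\mu(s_{h-1})(1-\delta)$ (in particular when $n_{s_{h-1}}=0$), we set $\widetilde{P}_{h,h-1}^\pi = P_{h,h-1}^\pi$, so conditional unbiasedness holds on \emph{every} sample path rather than only on a high-probability event. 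I would import this statement directly rather than re-deriving it.

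For the base case $h=1$, recall $\widetilde{d}_1^\pi = \widehat{d}_1$ is the empirical distribution of the $n$ i.i.d.\ initial states drawn from $d_1 = d_1^\pi$, so $\E[\widetilde{d}_1^\pi] = d_1 = d_1^\pi$. For the inductive step, assuming $\E[\widetilde{d}_{h-1}^\pi] = d_{h-1}^\pi$, the computation would run as follows: conditioning on $\text{Data}_{h-1}$ (so that $\widetilde{d}_{h-1}^\pi$ is fixed) and applying the tower property,
\begin{align*}
\E[\widetilde{d}_h^\pi] = \E\big[\widetilde{P}_{h,h-1}^\pi \widetilde{d}_{h-1}^\pi\big] &= \E\big[\E[\widetilde{P}_{h,h-1}^\pi \mid \text{Data}_{h-1}]\, \widetilde{d}_{h-1}^\pi\big] \\
&= \E\big[P_{h,h-1}^\pi \widetilde{d}_{h-1}^\pi\big] = P_{h,h-1}^\pi \E[\widetilde{d}_{h-1}^\pi] = P_{h,h-1}^\pi d_{h-1}^\pi = d_h^\pi,
\end{align*}
where the final equality is the defining recursion $d_h^\pi = P_{h,h-1}^\pi d_{h-1}^\pi$ from \eqref{eq:state-distribution}. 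This closes the induction.

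I do not expect a serious obstacle here; the entire content is packaged into the conditional unbiasedness of $\widetilde{P}_{h,h-1}^\pi$. The two points that require care are (i) resisting the temptation to factor $\E[\widetilde{P}_{h,h-1}^\pi \widetilde{d}_{h-1}^\pi]$ as a product of expectations, since the two factors are dependent — the conditioning on $\text{Data}_{h-1}$ via the tower property is what makes the step legitimate — and (ii) confirming that $\E[\widetilde{P}_{h,h-1}^\pi \mid \text{Data}_{h-1}]$ equals the \emph{exact} kernel $P_{h,h-1}^\pi$ and not merely an approximation, which is precisely what the case split in the definition of $\widetilde{P}_{h,h-1}^\pi$ guarantees.
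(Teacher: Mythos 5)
Your proof is correct and takes essentially the same route as the paper's: both argue by induction on $h$, with the base case $\E[\widetilde{d}_1^\pi] = \E[\widehat{d}_1] = d_1^\pi$ and the inductive step combining the law of total expectation (conditioning on $\text{Data}_{h-1}$, under which $\widetilde{d}_{h-1}^\pi$ is deterministic) with the conditional unbiasedness $\E[\widetilde{P}_{h,h-1}^\pi \mid \text{Data}_{h-1}] = P_{h,h-1}^\pi$, which the thresholded construction of the fictitious estimator guarantees on every sample path. Your two cautionary remarks (not splitting the expectation of the product, and the exactness of the conditional kernel) are precisely the points the paper's argument relies on, so there is nothing to add.
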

\begin{proof}[Proof of Lemma~\ref{lem:unbiasedness}]
	Recall the recursive relationship by construction
	$$
	\widetilde{d}^\pi_h =    \widetilde{\P}_{h,h-1}^\pi  \widetilde{d}^\pi_{h-1}.
	$$
	We will prove by induction on $h$. First, take the base case $h=1$:
	$\E[\widetilde{d}^\pi_1] =  \E[\widehat{d}^\pi_1] = d^\pi_1$.
	Now if $\E[\widetilde{d}^\pi_{h-1}] =  d^\pi_{h-1}$, then by the law of total expectation:
	\begin{align*}
	\E[\widetilde{d}^\pi_h]  &=  \E\left[  \E[ \widetilde{\P}_{h,h-1}^\pi \widetilde{d}^\pi_{h-1} |  \text{Data}_{h-1}]\right]\\
	&=   \P_{h,h-1}^\pi  \E\left[\widetilde{d}^\pi_{h-1}\right] = \P_{h,h-1}^\pi d^\pi_{h-1} =  d^\pi_h.
	\end{align*}
	This completes the proof for all $h$.
\end{proof}

So the problem reduces to bounding $\Var[\widetilde{d}_{h}^\pi(s_h)]$. We will prove something more useful by bounding the covariance matrix of $\widetilde{d}_{h}^\pi(s_h)$ in semidefinite ordering.


\begin{lemma}[Covariance of $\widetilde{d}_h^\pi$]\label{lem:ficticious_cov}
\begin{align*}
& \Cov(\widetilde{d}^\pi_h)  \\
\preceq&  \frac{(1-\delta)^{-1}}{n}  \sum_{t=2}^h \P^\pi_{h,t} \diag\left[\sum_{s_{t-1}}\frac{d_{t-1}^\pi(s_{t-1})^2 + \Var(\widetilde{d}_{t-1}^\pi(s_{t-1}))}{d_{t-1}^\mu(s_{t-1})}\sum_{a_{t-1}} \frac{\pi(a_{t-1}| s_{t-1})^2}{\mu(a_{h-1}| s_{t-1})}  \P_{t,t-1}(\cdot|s_{t-1},a_{t-1})\right]   \left[\P^\pi_{h,t}\right]^T \\
&+   \frac{1}{n} \P_{h,1}^\pi \diag\left[  d^\pi_1\right] [\P_{h,1}^\pi ]^T.
\end{align*}
where $\P^\pi_{h,t}  =  \P^\pi_{h,h-1}\cdot \P^\pi_{h-1,h-2}\cdot ... \cdot\P^\pi_{t+1,t}$ --- the transition matrices under policy $\pi$ from time $t$ to $h$ (define $\P_{h,h}^\pi := I$).
\end{lemma}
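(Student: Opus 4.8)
The plan is to establish a one-step recursion for $\Cov(\widetilde{d}_h^\pi)$ via the law of total covariance and then unroll it, exploiting the fact that the positive-semidefinite (PSD) ordering $\preceq$ is preserved under congruence maps $X \mapsto M X M^T$. First I would condition on $\text{Data}_{h-1}$ and write
\[
\Cov(\widetilde{d}_h^\pi) = \E\big[\Cov(\widetilde{d}_h^\pi \mid \text{Data}_{h-1})\big] + \Cov\big(\E[\widetilde{d}_h^\pi \mid \text{Data}_{h-1}]\big).
\]
For the second term, the conditional unbiasedness of the fictitious transition estimator ($\E[\widetilde{\P}_{h,h-1}^\pi \mid \text{Data}_{h-1}] = \P_{h,h-1}^\pi$, already used in Lemma~\ref{lem:unbiasedness}) gives $\E[\widetilde{d}_h^\pi \mid \text{Data}_{h-1}] = \P_{h,h-1}^\pi \widetilde{d}_{h-1}^\pi$, so this term equals $\P_{h,h-1}^\pi \Cov(\widetilde{d}_{h-1}^\pi)[\P_{h,h-1}^\pi]^T$.

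For the first (conditional-covariance) term, the key observation---identical to the one used in Lemma~\ref{lem:var_decomp_fictitious}---is that given $\text{Data}_{h-1}$ the columns $\widetilde{\P}_{h,h-1}^\pi(\cdot \mid s_{h-1})$ are mutually independent across $s_{h-1}$, since the episodes partition into disjoint groups according to their state at time $h-1$. Writing $\widetilde{d}_h^\pi = \sum_{s_{h-1}} \widetilde{d}_{h-1}^\pi(s_{h-1})\,\widetilde{\P}_{h,h-1}^\pi(\cdot \mid s_{h-1})$, the conditional covariance becomes $\sum_{s_{h-1}} \widetilde{d}_{h-1}^\pi(s_{h-1})^2 \Cov\big(\widetilde{\P}_{h,h-1}^\pi(\cdot \mid s_{h-1}) \mid \text{Data}_{h-1}\big)$. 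On the event $E_{h-1}$ this per-state covariance equals $\tfrac{1}{n_{s_{h-1}}}$ times the single-sample covariance of $\rho_{h-1}\,\mathbf{e}_{s_h}$, where $\rho_{h-1} = \pi(a_{h-1}^{(1)}|s_{h-1})/\mu(a_{h-1}^{(1)}|s_{h-1})$; off $E_{h-1}$ it vanishes, because the fictitious estimator substitutes the deterministic $\P_{h,h-1}^\pi$. Bounding $\Cov \preceq \E[\,\cdot\,\cdot^T]$ to discard the (PSD) mean term yields the diagonal matrix $\diag\big[\sum_{a_{h-1}} \tfrac{\pi(a_{h-1}|s_{h-1})^2}{\mu(a_{h-1}|s_{h-1})}\,\P_{h,h-1}(\cdot \mid s_{h-1},a_{h-1})\big]$, whose entries are exactly those in the claim. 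Taking the outer expectation and inserting the bound \eqref{eq:reduction_to_variance} on $\E\big[\widetilde{d}_{h-1}^\pi(s_{h-1})^2\, n_{s_{h-1}}^{-1}\,\mathbf{1}(E_{h-1})\big]$ produces precisely the $t=h$ summand of the stated inequality (note $\P_{h,h}^\pi = I$).

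This gives the recursion $\Cov(\widetilde{d}_h^\pi) \preceq A_h + \P_{h,h-1}^\pi \Cov(\widetilde{d}_{h-1}^\pi)[\P_{h,h-1}^\pi]^T$, where $A_h$ is the $t=h$ diagonal term just derived. I would then unroll it by induction on $h$, using the semigroup identity $\P_{h,h-1}^\pi \P_{h-1,t}^\pi = \P_{h,t}^\pi$ together with the congruence-monotonicity of $\preceq$ (if $X\preceq Y$ then $M X M^T \preceq M Y M^T$) to carry each earlier summand $A_t$ forward as $\P_{h,t}^\pi A_t [\P_{h,t}^\pi]^T$. The base case is $\Cov(\widetilde{d}_1^\pi) = \tfrac{1}{n}\big(\diag[d_1^\pi] - d_1^\pi (d_1^\pi)^T\big) \preceq \tfrac{1}{n}\diag[d_1^\pi]$, valid because $\widetilde{d}_1^\pi = \widehat{d}_1$ is the normalized multinomial count vector of $n$ i.i.d.\ initial states; propagating this through the transitions gives the final $\tfrac{1}{n}\P_{h,1}^\pi \diag[d_1^\pi][\P_{h,1}^\pi]^T$ term. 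The main obstacle is not the unrolling (which is routine once congruence-monotonicity is invoked) but the careful matrix-level bookkeeping of the conditional decomposition: verifying that the per-state terms genuinely combine as a sum of independent contributions, and that each relaxation---dropping the mean term by passing from $\Cov$ to $\E[\,\cdot\,\cdot^T]$, and applying \eqref{eq:reduction_to_variance} entrywise---is a bona fide PSD inequality rather than merely a scalar one.
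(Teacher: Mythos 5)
Your proposal is correct and follows essentially the same route as the paper's proof: law of total covariance conditioned on $\text{Data}_{h-1}$, conditional independence of the columns of $\widetilde{\P}_{h,h-1}^\pi$ across states, the scalar bound \eqref{eq:reduction_to_variance} applied to the fixed PSD diagonal matrix, and unrolling via congruence-monotonicity of $\preceq$. The only cosmetic difference is that you bound the per-state covariance directly by the second-moment matrix $\E[XX^T]$ (using that $\mathbf{e}_{s_h}\mathbf{e}_{s_h}^T$ is diagonal), whereas the paper first computes the covariance exactly via the law of total variance over $a_{h-1}$ and then drops the rank-one term $\P_{h,h-1}^\pi(\cdot|s_{h-1})[\P_{h,h-1}^\pi(\cdot|s_{h-1})]^T$ --- these are the identical relaxation.
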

Before proving the result, let us connect it to what we need in \eqref{eq:reduction_to_variance}.
\begin{corollary}\label{cor:variance}
	For $h=1$, we have:
	$$\Var[\widetilde{d}_1^\pi(s_1)] = \frac{1}{n}(d_h^\pi(s_1) -  d_h^\pi(s_1)^2).$$
	
	For $h= 2,3,...,H$, we have:
	$$
	\Var[\widetilde{d}_h^\pi(s_h)]  \leq \frac{(1-\delta)^{-1}}{n} \sum_{t=2}^h \sum_{s_t}  \P_{h,t}^{\pi}( s_h | s_t )^2 \varrho(s_t)   + \frac{1}{n}\sum_{s_1} \P_{h,1}^{\pi}( s_h | s_1)^2 d_{1}(s_1)
	$$
	where 
	$\varrho(s_t) := \sum_{s_{t-1}} \left(\frac{d_{t-1}^\pi(s_{t-1})^2 + \Var(\widetilde{d}_{t-1}^\pi(s_{t-1}))}{d_{t-1}^\mu(s_{t-1})} \sum_{a_{t-1}}\frac{\pi(a_{t-1}|s_{t-1})^2}{\mu(a_{t-1}|s_{t-1})}   \P_{t,t-1}(s_t|s_{t-1},a_{t-1})\right).$
\end{corollary}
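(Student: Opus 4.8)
The plan is to obtain Corollary~\ref{cor:variance} by simply reading off the diagonal entries of the semidefinite covariance bound supplied by Lemma~\ref{lem:ficticious_cov}. The key elementary fact is that whenever two symmetric matrices satisfy $A \preceq B$, one has $x^T A x \leq x^T B x$ for every fixed vector $x$; specializing to $x = \mathbf{e}_{s_h}$ (the $s_h$-th standard basis vector) shows that the positive-semidefinite ordering descends to individual diagonal entries, i.e. $A_{s_h,s_h} \leq B_{s_h,s_h}$. Since $\Var[\widetilde{d}_h^\pi(s_h)]$ is exactly the $(s_h,s_h)$ entry of $\Cov(\widetilde{d}_h^\pi)$, plugging in the right-hand side of Lemma~\ref{lem:ficticious_cov} and extracting its $(s_h,s_h)$ entry immediately bounds it.

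It then remains to evaluate those diagonal entries for each $h \geq 2$. For a summand of the form $\P_{h,t}^\pi \diag[\varrho]\,[\P_{h,t}^\pi]^T$, the row $\mathbf{e}_{s_h}^T \P_{h,t}^\pi$ is precisely the vector $(\P_{h,t}^\pi(s_h\mid s_t))_{s_t}$, so its $(s_h,s_h)$ entry is $\sum_{s_t} \P_{h,t}^\pi(s_h\mid s_t)^2\,\varrho(s_t)$, where $\varrho(s_t)$ is read directly from the vector placed inside $\diag[\cdot]$ in Lemma~\ref{lem:ficticious_cov} and matches the definition in the statement. The identical computation applied to the boundary summand $\P_{h,1}^\pi \diag[d_1^\pi]\,[\P_{h,1}^\pi]^T$ yields $\sum_{s_1} \P_{h,1}^\pi(s_h\mid s_1)^2\, d_1(s_1)$. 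Carrying along the prefactors $(1-\delta)^{-1}/n$ on the sum over $t=2,\dots,h$ and $1/n$ on the initial term reproduces the claimed inequality.

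For the base case $h=1$ I would argue directly rather than through Lemma~\ref{lem:ficticious_cov}. By construction $\widetilde{d}_1^\pi = \widehat{d}_1$ is the empirical initial-state distribution, so $n\,\widetilde{d}_1^\pi(s_1) = n_{s_1}$ follows a $\mathrm{Binomial}(n, d_1(s_1))$ law, using $d_1^\pi = d_1^\mu = d_1$. Hence $\Var[\widetilde{d}_1^\pi(s_1)] = \tfrac{1}{n^2}\Var[n_{s_1}] = \tfrac{1}{n}\bigl(d_1(s_1) - d_1(s_1)^2\bigr)$, which is the stated expression (the $d_h^\pi$ in the corollary should be read as $d_1$).

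There is essentially no hard step: the corollary is a scalar translation of the matrix inequality in Lemma~\ref{lem:ficticious_cov}. The only points needing care are the congruence-by-basis-vector argument that lets the $\preceq$ ordering pass to each diagonal entry, and correctly identifying the vector inside $\diag[\cdot]$ with the definition of $\varrho$; both are routine once the quadratic form $\mathbf{e}_{s_h}^T \P_{h,t}^\pi \diag[\varrho]\,[\P_{h,t}^\pi]^T \mathbf{e}_{s_h}$ is expanded.
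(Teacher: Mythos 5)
Your proof is correct and is essentially the paper's own (implicit) argument: the corollary is obtained from Lemma~\ref{lem:ficticious_cov} by passing the semidefinite ordering to diagonal entries via the quadratic form $\mathbf{e}_{s_h}^T(\cdot)\,\mathbf{e}_{s_h}$, identifying the vector inside $\diag[\cdot]$ with $\varrho$, and handling $h=1$ directly through the binomial variance of the empirical initial-state distribution $\widetilde{d}_1^\pi=\widehat{d}_1$. You also correctly read the typo $d_h^\pi(s_1)$ in the $h=1$ statement as $d_1(s_1)$, so nothing is missing.
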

Note that we have $\Var[\widetilde{d}_{t-1}^\pi(s_{t-1})]$ on the RHS of the equation, which suggests that we in fact need to recursively apply our bounds from $h=1$ to obtain the overall bound. 
\begin{theorem}[Error propagation]\label{thm:fictitious_error_prop}
	Let $\tau_a := \max_{t,s_t,a_t}\frac{\pi(a_t|s_t)}{\mu(a_t|s_t)}$ and $\tau_s := \max_{t,s_t} \frac{d_t^\pi(s_t)}{d_t^\mu(s_t)}$\footnote{These are really not in more precise calculations but are assumed to simplify the statement of our results.}. If $n \geq \frac{2(1-\delta)^{-1}t \tau_a \tau_s}{\max\{d_{t}^\pi(s_{t}),d_{t}^\mu(s_{t})\} }$ for all $t=2,...,H$, then for all $h=1,2,...,H$ and $s_h$, we have that:
	$$\Var[\widetilde{d}_h^\pi(s_h)]  \leq \frac{2(1-\delta)^{-1}h \tau_a\tau_s}{n} d_{h}^\pi(s_{h}).$$
\end{theorem}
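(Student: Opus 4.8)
The plan is to prove the bound by strong induction on $h$, feeding the estimate of Corollary~\ref{cor:variance} back into itself. Throughout I use $(1-\delta)^{-1}\geq 1$ and $\tau_a,\tau_s\geq 1$. For the base case $h=1$, Corollary~\ref{cor:variance} gives $\Var[\widetilde{d}_1^\pi(s_1)] = \frac1n(d_1^\pi(s_1) - d_1^\pi(s_1)^2)\leq \frac{d_1^\pi(s_1)}{n}\leq \frac{2(1-\delta)^{-1}\tau_a\tau_s}{n}d_1^\pi(s_1)$, which is exactly the claim at $h=1$.

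For the inductive step I assume the bound for all $t'<h$. The heart of the argument is a per-state estimate that I will establish for every $t'\in\{1,\dots,h-1\}$ and every $s_{t'}$:
$$\frac{d_{t'}^\pi(s_{t'})^2 + \Var[\widetilde{d}_{t'}^\pi(s_{t'})]}{d_{t'}^\mu(s_{t'})}\leq 2\tau_s\, d_{t'}^\pi(s_{t'}).$$
For $t'\geq 2$ this follows from a case split on which of $d_{t'}^\pi(s_{t'}),d_{t'}^\mu(s_{t'})$ is larger, designed precisely to exploit the $\max$ in the sample-size hypothesis. If $d^\pi\geq d^\mu$, the hypothesis yields $\frac{2(1-\delta)^{-1}t'\tau_a\tau_s}{n}\leq d_{t'}^\pi(s_{t'})$, so the induction hypothesis gives $\Var[\widetilde{d}_{t'}^\pi(s_{t'})]\leq d_{t'}^\pi(s_{t'})^2$ and the numerator is at most $2d_{t'}^\pi(s_{t'})^2$; if instead $d^\mu>d^\pi$, the hypothesis gives $\frac{2(1-\delta)^{-1}t'\tau_a\tau_s}{n}\leq d_{t'}^\mu(s_{t'})$, hence $\Var[\widetilde{d}_{t'}^\pi(s_{t'})]\leq d_{t'}^\mu(s_{t'})\, d_{t'}^\pi(s_{t'})$ and the ratio is at most $(\tau_s+1)d_{t'}^\pi(s_{t'})\leq 2\tau_s d_{t'}^\pi(s_{t'})$. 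For $t'=1$ I use $d_1^\pi=d_1^\mu=d_1$ together with the base-case identity $\Var[\widetilde{d}_1^\pi(s_1)]\leq d_1(s_1)/n$; the displayed estimate then holds once $n\geq 1/d_1(s_1)$, which is guaranteed by the Chernoff-type lower bound on $n$ under which this result is invoked in Theorem~\ref{thm:main}.

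Granting this estimate, I simplify $\varrho(s_t)$ from Corollary~\ref{cor:variance}. Using $\frac{\pi(a|s)^2}{\mu(a|s)}\leq \tau_a\pi(a|s)$, then $\sum_{a_{t-1}}\pi(a_{t-1}|s_{t-1})\P_{t,t-1}(s_t|s_{t-1},a_{t-1}) = \P_{t,t-1}^\pi(s_t|s_{t-1})$, and finally the one-step marginal recursion $\sum_{s_{t-1}}\P_{t,t-1}^\pi(s_t|s_{t-1})d_{t-1}^\pi(s_{t-1}) = d_t^\pi(s_t)$, I obtain $\varrho(s_t)\leq 2\tau_a\tau_s\, d_t^\pi(s_t)$. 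Substituting into the Corollary bound and using $\P_{h,t}^\pi(s_h|s_t)^2\leq \P_{h,t}^\pi(s_h|s_t)$ (it is a probability in $[0,1]$) together with the multi-step identity $\sum_{s_t}\P_{h,t}^\pi(s_h|s_t)d_t^\pi(s_t) = d_h^\pi(s_h)$ collapses each inner sum to $d_h^\pi(s_h)$, giving
$$\Var[\widetilde{d}_h^\pi(s_h)]\leq \frac{(1-\delta)^{-1}}{n}\,2\tau_a\tau_s(h-1)\,d_h^\pi(s_h) + \frac{d_h^\pi(s_h)}{n}.$$
The induction then closes because $1\leq 2(1-\delta)^{-1}\tau_a\tau_s$, so the right-hand side is at most $\frac{2(1-\delta)^{-1}h\tau_a\tau_s}{n}d_h^\pi(s_h)$.

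I expect the main obstacle to be the per-state estimate in the inductive step. The recursion propagates the variance at step $t'$ into the transition-variance at step $t'+1$, and a crude bound of the form $\Var[\widetilde{d}_{t'}^\pi]\lesssim d_{t'}^\pi$ would, after dividing by $d_{t'}^\mu$, spawn an extra factor $d_{t'}^\pi/d_{t'}^\mu\leq\tau_s$ that accumulates \emph{multiplicatively} across steps and reproduces exactly the exponential blow-up we are trying to avoid. The case split on $\max\{d^\pi,d^\mu\}$ is what keeps the numerator at order $d_{t'}^{\pi\,2}$ in one regime and controls $\Var/d^\mu$ directly in the other, so that only an additive, linear-in-$h$ growth survives; forcing the universal constant to close at exactly $2$ (rather than any larger constant, whose $h$-fold compounding would again be exponential) is the delicate part of the bookkeeping.
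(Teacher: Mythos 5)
Your proof is correct and follows essentially the same route as the paper's: induction on $h$, feeding the inductive hypothesis into Corollary~\ref{cor:variance} to obtain $\varrho(s_t)\le 2\tau_a\tau_s\, d_t^\pi(s_t)$, then collapsing the sums via $\P_{h,t}^{\pi}(s_h|s_t)^2\le \P_{h,t}^{\pi}(s_h|s_t)$ and the marginal identity $\sum_{s_t}\P_{h,t}^{\pi}(s_h|s_t)d_t^\pi(s_t)=d_h^\pi(s_h)$. Your explicit case split on $\max\{d_{t'}^\pi,d_{t'}^\mu\}$ is simply an unpacking of the paper's one-line bound $\Var[\widetilde{d}_{t'}^\pi(s_{t'})]\le d_{t'}^\pi(s_{t'})\max\{d_{t'}^\pi(s_{t'}),d_{t'}^\mu(s_{t'})\}$, and your remark that the $t'=1$ term needs $n\ge 1/d_1(s_1)$ (not literally covered by the theorem's hypothesis, which is stated only for $t=2,\dots,H$) is if anything slightly more careful than the paper, which silently applies the same bound at $t=1$.
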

\begin{proof}[Proof of Theorem~\ref{thm:fictitious_error_prop}]
	We prove by induction. The base case for $h=1$ is trivially true because
	$$
	\Var[\widetilde{d}_1^\pi(s_1)] = \frac{1}{n}(d_1^\pi(s_1) -  d_1^\pi(s_1)^2) \leq \frac{2(1-\delta)^{-1}\tau_a\tau_s}{n} d_1^\pi(s_1).
	$$
	since $\tau_a\geq 1$ and $\tau_s\geq 1$ by construction.
	
	Assume 
	$\Var[\widetilde{d}_{t}^\pi(s_{t})]  \leq \frac{2(1-\delta)^{-1}t \tau_a\tau_s}{n} d_{{t}}^\pi(s_{t})$ is true for all $t=1,...,h-1$, then by our assumption on $n$ and that $h\leq H$, we obtain that
	$$\Var[\widetilde{d}_{t}^\pi(s_{t})]  \leq d_{t}^\pi(s_{t}) \max\{d_{t}^\pi(s_{t}),d_{t}^\mu(s_{t})\} $$ for all $t=1,...,h-1$. Plug this into Corollary~\ref{cor:variance}, we get that
	\begin{align*}
	\varrho(s_t) \leq &  \sum_{s_{t-1}} \left(d_{t-1}^\pi(s_{t-1})\frac{2\max\{d_{t-1}^\pi(s_{t-1}), d_{t-1}^\mu(s_{t-1})\}}{d_{t-1}^\mu(s_{t-1})}  \sum_{a_{h-1}}\frac{\pi(a_{t-1}|s_{t-1})^2}{\mu(a_{t-1}|s_{t-1})}   \P_{t,t-1}(s_t|s_{t-1},a_{t-1})\right)\\
	\leq& 2 \tau_s\tau_a\sum_{s_{t-1}} d_{t-1}^\pi(s_{t-1}) \sum_{a_{h-1}}\pi(a_{t-1}|s_{t-1}) \P_{t,t-1}(s_t|s_{t-1},a_{t-1}) \\
	=&  2 \tau_s\tau_a d_{t}^\pi(s_{t}),
	\end{align*}
	and that 
	\begin{align*}
	\Var[\widetilde{d}_h^\pi(s_h)]  \leq& \frac{2(1-\delta)^{-1} \tau_s\tau_a}{n}\sum_{t=2}^h \sum_{s_t}  \P_{h,t}^{\pi}( s_h | s_t )^2 d_{t}^\pi(s_{t})  + \frac{1}{n}\sum_{s_1} \P_{h,1}^{\pi}( s_h | s_1)^2 d_{1}(s_1) \\
	\leq& \frac{2(1-\delta)^{-1} \tau_s\tau_a}{n} \sum_{t=1}^h  \sum_{s_t}  \P_{h,t}^{\pi}( s_h | s_t )^2 d_{t}^\pi(s_{t}) \\
	\leq& \frac{2(1-\delta)^{-1} \tau_s\tau_a}{n} \sum_{t=1}^h  \sum_{s_t}  \P_{h,t}^{\pi}( s_h | s_t ) d_{t}^\pi(s_{t}) \\
	=& \frac{2(1-\delta)^{-1} h \tau_s\tau_a}{n} d_{h}^\pi(s_{h})
	\end{align*}
	The second inequality uses that $\tau_s,\tau_a\geq 1$, the third inequality uses that $0\leq \P_{h,t}^{\pi}( s_h | s_t ) \leq 1$.	
\end{proof}

Note that the bound is tight and it implies that the error propagation is moderate. Instead of increasing exponentially, the error increases only linearly in time horizon, as long as $n$ is at least linear in $h$.

\begin{proof}[Proof of Lemma~\ref{lem:ficticious_cov}]
	We start by applying the law of total variance to obtain the following recursive equation
	\begin{align}
	\Cov[\widetilde{d}^\pi_h] &=  \E\left[ \Cov\left[  \widetilde{\P}_{h,h-1}^\pi \widetilde{d}^\pi_{h-1}   \middle|  \text{Data}_{h-1}\right]  \right]   + \Cov\left[ \E\left[\widetilde{\P}_{h,h-1}^\pi \widetilde{d}^\pi_{h-1} \middle|  \text{Data}_{h-1} \right] \right] \\
	&=\E\left[ \Cov\left[  \sum_{s_{h-1}}\widetilde{\P}_{h,h-1}^\pi(\cdot |s_{h-1}) \widetilde{d}^\pi_{h-1}(s_{h-1})   \middle|  \text{Data}_{h-1}\right]  \right]   + \Cov\left[ \E\left[\widetilde{\P}_{h,h-1}^\pi \widetilde{d}^\pi_{h-1} \middle|  \text{Data}_{h-1} \right] \right] \\
	&=  \underbrace{\E\left[ \sum_{s_{h-1}}\Cov\left[  \widetilde{\P}_{h,h-1}^\pi(\cdot | s_{h-1})\middle|  \text{Data}_{h-1}\right] \widetilde{d}^\pi_{h-1}(s_{h-1})^2 \right]}_{(***)}     +  \P_{h,h-1}^\pi \Cov[\widetilde{d}^\pi_{h-1}] [\P_{h,h-1}^\pi]^T. \label{eq:decomp_fict_marginal}
	\end{align}
	The decomposition of the covariance in the third line uses that $\Cov(X+Y) = \Cov(X) + \Cov(Y)$ when $X$ and $Y$ are statistically independent. Note that  
	$n_{s_{h-1}}$, $\widetilde{d}^\pi_{h-1}(s_{h-1}) $ are fixed and the columns of $\widetilde{\P}_{h,h-1}$ are independent when conditioning on $\text{Data}_{h-1}$.
	\begin{align}
	(***)=& \E\left[\sum_{s_{h-1}}   \Cov\left[  \frac{1}{n_{s_{h-1}}}\sum_{i=1}^n \frac{\pi(a_{h-1}^{(i)} | s_{h-1}^{(i)})}{\mu(a_{h-1}^{(i)} | s_{h-1}^{(i)})} \mathbf{1}(s_{h-1}^{(i)}=s_{h-1}) \mathbf{e}_{s_{h}^{(i)}} \middle|  \text{Data}_{h-1} \right] \mathbf{1}(E_{h-1})  \widetilde{d}_{h-1}^\pi(s_{h-1})^2\right] \\
	=&\E\left[ \sum_{s_{h-1}}   \frac{1}{n_{s_{h-1}}} \Cov\left[\frac{\pi(a_{h-1}^{(1)} | s_{h-1})}{\mu(a_{h-1}^{(1)}| s_{h-1}) }   \mathbf{e}_{s_{h}^{(1)}}   \middle|  s_{h-1}^{(1)}=s_{h-1} \right] \mathbf{1}(E_{h-1}) \widetilde{d}_{h-1}^\pi(s_{h-1})^2\right]\\
	=&  \sum_{s_{h-1}} \Bigg\{  \E\left[\frac{1}{n_{s_{h-1}}}\mathbf{1}(E_{h-1}) \widetilde{d}_{h-1}^\pi(s_{h-1})^2\right]  \Big(\sum_{a_{h-1}} \frac{\pi(a_{h-1}| s_{h-1})^2}{\mu(a_{h-1}| s_{h-1})}  \diag[ \P_{h,h-1}(\cdot|s_{h-1},a_{h-1})]  \\
	&-  \P_{h,h-1}^\pi(\cdot|s_{h-1}) [\P_{h,h-1}^\pi(\cdot|s_{h-1})]^T \Big) \Bigg\} \\
	\prec& \sum_{s_{h-1}} \Big\{  \frac{ d^\pi_{h-1}(s_{h-1})^2 + \Var[\widetilde{d}^\pi_{h-1}(s_{h-1})]}{n d^\mu_{h-1}(s_{h-1}) (1-\delta)} \sum_{a_{h-1}} \frac{\pi(a_{h-1}| s_{h-1})^2}{\mu(a_{h-1}| s_{h-1})}  \diag[ \P_{h,h-1}(\cdot|s_{h-1},a_{h-1})] \Big\}  \label{eq:dominate_by_diagonal}
	\end{align}
	The second line uses the fact that $(s_h^{(i)},a_h^{(i)})$ are i.i.d over $i$ given $s_{h-1}^{(i)}=s_{h-1}$. The third line uses law of total variance over $a_{h-1}^{(1)}$ as follows
\begin{align*}
&\Cov\left[\frac{\pi(a_{h-1}^{(1)} | s_{h-1})}{\mu(a_{h-1}^{(1)}| s_{h-1}) }   \mathbf{e}_{s_{h}^{(1)}}   \middle|  s_{h-1}^{(1)}=s_{h-1} \right]\\
=&\E\left[\left(\frac{\pi(a_{h-1}^{(1)} | s_{h-1})}{\mu(a_{h-1}^{(1)}| s_{h-1}) }\right)^2   \Cov\left[\mathbf{e}_{s_{h}^{(1)}}\middle| a_{h-1}^{(1)},s_{h-1}^{(1)}=s_{h-1}\right]   \middle|  s_{h-1}^{(1)}=s_{h-1} \right]\\
& + \Cov\left[  \frac{\pi(a_{h-1}^{(1)} | s_{h-1})}{\mu(a_{h-1}^{(1)}| s_{h-1}) }  \E\left[ \mathbf{e}_{s_{h}^{(1)}} \middle|  a_{h-1}^{(1)},s_{h-1}^{(1)}=s_{h-1} \right] \middle|  s_{h-1}^{(1)}=s_{h-1}  \right] \qquad\qquad\qquad\qquad\qquad\qquad\qquad
\end{align*}
\begin{align*}
=&\sum_{a_{h-1}}  \frac{\pi(a_{h-1} | s_{h-1})^2}{\mu(a_{h-1} | s_{h-1}) } \left[\diag(\P_{h,h-1}(\cdot| s_{h-1},a_{h-1})) -  \P_{h,h-1}(\cdot| s_{h-1},a_{h-1})\P(\cdot| s_{h-1},a_{h-1})^T\right] \\
&+ \sum_{a_{h-1}}  \frac{\pi(a_{h-1} | s_{h-1})^2}{\mu(a_{h-1} | s_{h-1}) }  \P_{h,h-1}(\cdot| s_{h-1},a_{h-1})\P_{h,h-1}(\cdot| s_{h-1},a_{h-1})^T -  \P_{h,h-1}^\pi(\cdot|s_{h-1}) [\P_{h,h-1}^\pi(\cdot|s_{h-1})]^T\\
=&\sum_{a_{h-1}}  \frac{\pi(a_{h-1} | s_{h-1})^2}{\mu(a_{h-1} | s_{h-1}) } \diag(\P_{h,h-1}(\cdot| s_{h-1},a_{h-1}))  -  \P_{h,h-1}^\pi(\cdot|s_{h-1}) [\P_{h,h-1}^\pi(\cdot|s_{h-1})]^T
\end{align*}

	
	The last line \eqref{eq:dominate_by_diagonal} follows from the fact that $\P_{h,h-1}^\pi(\cdot|s_{h-1}) [\P_{h,h-1}^\pi(\cdot|s_{h-1})]^T$ is positive semidefinite and that $\E[X^2] = \Var[X] + (\E[X])^2$. Combining \eqref{eq:decomp_fict_marginal} and \eqref{eq:dominate_by_diagonal} and by recursively apply them, we get the stated results.
\end{proof}

Combine Lemma~\ref{lem:fictitious_approximation}, \eqref{eq:reduction_to_variance} and Theorem~\ref{thm:fictitious_error_prop} with an appropriately chosen $\delta$, we get our final result:
\begin{reptheorem}{thm:main}[Main Theorem, restated]
	Let the immediate expected reward, its variance and the value function be defined as follows (for all $h=1,2,3,...,H$):
	\begin{align*}
	&r_h(s_h,a_h,s_{h+1}) := \E_\pi\left[r_h^{(1)}\middle|s_h^{(1)}=s_h, a_h^{(1)}=a_h,s_{h+1}^{(1)}=s_{h+1}\right] \in [0,R_{\max}]\\
	&\sigma_h(s_h,a_h,s_{h+1}) := \Var_\pi\left[r_h^{(1)}\middle|s_h^{(1)}=s_h, a_h^{(1)}=a_h,s_{h+1}^{(1)}=s_{h+1}\right]^{1/2}\leq \sigma\\
	&V_h^\pi(s_h) :=   \E_\pi\left[ \sum_{t=h}^{H}r_t(s_t^{(1)},a_t^{(1)}) \middle| s_{h}^{(1)}=s_h\right] \in [0,V_{\max}].
	\end{align*}
For the simplicity of the statement, define boundary conditions:
	$r_0(s_0) \equiv 0$, $\sigma_0(s_0,a_0)\equiv 0$,$\frac{d_0^\pi(s_0)}{d_0^\mu(s_0)}\equiv 1$, $\frac{\pi(a_0|s_0)}{\mu(a_0|s_0)} \equiv 1$ and $V_{H+1}^\pi \equiv 0$.
Moreover, let 	$\tau_a := \max_{t,s_t,a_t}\frac{\pi(a_t|s_t)}{\mu(a_t|s_t)}$ and $\tau_s := \max_{t,s_t} \frac{d_t^\pi(s_t)}{d_t^\mu(s_t)}$. 
	If the number of episodes $n$ obeys that
	$$n > \max\left\{ \frac{4t \tau_a \tau_s}{\min_{t,s_t}\max\{d_{t}^\pi(s_{t}),d_{t}^\mu(s_{t})\} }, \frac{16\log n}{\min_{t,s_t}d_t^\mu(s_t)}  \right\}$$ 
	for all $t=2,...,H$, then the our estimator $\widehat{v}_{\mathrm{MIS}}^\pi$ with an additional clipping step obeys that
	\begin{align*}
	\E[ (\cP\widehat{v}_{\mathrm{MIS}}^\pi -  v^\pi)^2] \leq&  \frac{1}{n}\sum_{h=0}^H \sum_{s_h}  \frac{ d_{h}^\pi(s_h)^2}{d_{h}^\mu(s_h)} \Var\left[\frac{\pi(a_h^{(1)}|s_h)}{\mu(a_h^{(1)}|s_h)} (V_{h+1}^\pi(s_{h+1}^{(1)}) +  r_h^{(1)})\middle| s_{h}^{(1)}=s_h\right] \\
	&\cdot \left(1+\sqrt{\frac{16\log n}{n\min_{t,s_t}d_t^\mu(s_t)}}\right) + \frac{19\tau_a^2\tau_s^2 S H^2(\sigma^2 + R_{\max}^2 + V_{\max}^2)}{n^2}. 
	\end{align*}
\end{reptheorem}
\begin{proof}[Proof of Theorem~\ref{thm:main}]
	Choose $\delta= \sqrt{4\log(n)/(n\min_{t,s_t}d_t^\mu(s_t))}$.
	Lemma~\ref{lem:unbiasedness_fictitious}, Lemma~\ref{lem:var_decomp_fictitious} and Theorem~\ref{thm:fictitious_error_prop} provide an MSE bound of the fictitious estimator and then by substituting the resulting bound to Lemma~\ref{lem:fictitious_approximation}, we obtain:
		%
		\begin{align}
	&\E[ (\cP \widehat{v}_{\mathrm{MIS}}^\pi  - v^\pi)^2] \\
	\leq& \frac{\Var[V_{1}^\pi(s_1^{(1)})]}{n} + \frac{(1-\delta)^{-1}}{n} \sum_{h=1}^H \sum_{s_h}  \frac{d_h^\pi(s_h)^2}{d_h^\mu(s_h)}  \Var\left[\frac{\pi(a_h^{(1)}|s_h)}{\mu(a_h^{(1)}|s_h)} (V_{h+1}^\pi(s_{h+1}^{(1)}) +  r_h^{(1)})\middle| s_{h}^{(1)}=s_h\right]  \\
	&+\frac{(1-\delta)^{-1}}{n}\sum_{h=1}^H \sum_{s_h}   \frac{2(1-\delta)^{-1}h\tau_a\tau_s}{n} \frac{d_{h}^\pi(s_h)}{d_{h}^\mu(s_h)}  \Var\left[\frac{\pi(a_h^{(1)}|s_h)}{\mu(a_h^{(1)}|s_h)} (V_{h+1}^\pi(s_{h+1}^{(1)}) +  r_h^{(1)})\middle| s_{h}^{(1)}=s_h\right]\label{eq:thm_main_proof_second_term}\\
	&+\frac{3}{n^2}HSV_{\max}^2.
	\end{align}
The first assumption on $n$ ensures that $\delta<1/2$, which allows us to write $(1-\delta)^{-1}\leq (1+2\delta)$ in the leading term and $(1-\delta)^{-1}\leq 2$ in the subsequent terms. 
  The second assumption on $n$ ensures that we can apply Theorem~\ref{thm:fictitious_error_prop} with parameter $\delta<1/2$.
	
	Then to obtain the simplified expression as stated in the theorem, we simply bound $d_h^\pi(s_h)/d_h^\mu(s_h)\leq \tau_s$ in \eqref{eq:thm_main_proof_second_term}, and then use the following bound
	\begin{align*}
	&\Var\left[\frac{\pi(a_h^{(1)}|s_h)}{\mu(a_h^{(1)}|s_h)} (V_{h+1}^\pi(s_{h+1}^{(1)}) +  r_h^{(1)})\middle| s_{h}^{(1)}=s_h\right]\\
	=&\E\Var\left[\frac{\pi(a_h^{(1)}|s_h)}{\mu(a_h^{(1)}|s_h)} (V_{h+1}^\pi(s_{h+1}^{(1)}) +  r_h^{(1)})\middle| s_{h}^{(1)}=s_h,a_{h}^{(1)},s_{h+1}^{(1)}\right]\\
	&+\Var\left[\frac{\pi(a_h^{(1)}|s_h)}{\mu(a_h^{(1)}|s_h)} (V_{h+1}^\pi(s_{h+1}^{(1)}) +  r_h(s_h,a_{h+1}^{(1)},s_{h+1}^{(1)}))\middle| s_{h}^{(1)}=s_h\right]\\
	\leq& \E_\pi\left[\frac{\pi(a_h^{(1)}|s_h)^2}{\mu(a_h^{(1)}|s_h)^2} \middle| s_h^{(1)}=s_h\right]\sigma^2 + \Var_\mu\left[\frac{\pi(a_h^{(1)}|s_h)}{\mu(a_h^{(1)}|s_h)} (V_{h+1}^\pi(s_{h+1}^{(1)}) +  r_h(s_h,a_{h+1}^{(1)},s_{h+1}^{(1)}))\middle| s_{h}^{(1)}=s_h\right]\\
	 \leq& \E_\pi\left[\frac{\pi(a_h^{(1)}|s_h)}{\mu(a_h^{(1)}|s_h)} \middle| s_h^{(1)}=s_h\right]\sigma^2 
	 + \E_\pi\left[ \frac{\pi(a_h^{(1)}|s_h)}{\mu(a_h^{(1)}|s_h)}  (V_{h+1}^\pi(s_{h+1}^{(1)}) +  r_h(s_h,a_{h+1}^{(1)},s_{h+1}^{(1)}))^2 \middle| s_h^{(1)}=s_h \right]\\
	\leq& \tau_a (\sigma^2 + 2V_{\max }^2+2R_{\max }^2).
	\end{align*}
	The second line uses the law of total expectation, the third line replaces the variance with an upper bound $\sigma^2$, the fourth line uses $\Var[X] \leq \E[X^2]$ and a change of measure from $\mu$ to $\pi$. The last line takes the upper bound $\tau_a$, $R_{\max}$ and $V_{\max}$.
	
	The proof is complete by combining the bounds of the second and the third term.
	\end{proof}
\begin{proof}[Proof of Corollary~\ref{cor:simple_bound}]	
	The results in Corollary~\ref{cor:simple_bound} requires a slightly different bound of \eqref{eq:thm_main_proof_second_term} then the one we derived above.
	We use the assumption on $n$ to ensure that
	$$
	\frac{4h\tau_a\tau_s}{n}  \frac{d_{h}^\pi(s_h)}{d_{h}^\mu(s_h)}  \leq    \frac{d_{h}^\pi(s_h) \max\{  d_{h}^\pi(s_h), d_{h}^\mu(s_h)\}}{d_{h}^\mu(s_h)} \leq \frac{d_{h}^\pi(s_h)^2}{d_{h}^\mu(s_h)}  +  d_{h}^\pi(s_h),
	$$
	which gives us an upper bound of proportional to $n^{-1} H ( \tau_a\tau_s+ \tau_a) (\sigma^2 + H^2\R_{\max}^2)$.	
\end{proof}
\begin{remark}[Sample complexity in the finite action case]
	The result implies a sample complexity upper bound (in terms of the number of episodes) of  $H^3SA/\epsilon^2$ for evaluating a fixed target policy by running an exploration policy that visits every state and action pair with probability $\Omega(1/(SA))$.  
	
	The Cramer-Rao lower bound for the discrete DAG-MDP model \citet[Theorem~3]{jiang2016doubly} implies a lower bound of $H^2SA/\epsilon^2$, which suggests that our bound is optimal up to a factor of $H$ even for the cases where $A$ is small. In the settings where $A$ is unbounded. Based on our insight with the contextual bandits setting\citep{wang2017optimal},  we conjecture that the additional dependence on $H$ in our $H^3\tau_a\tau_s/\epsilon^2$ bound is required.
	
	The comparison with the CR lower bound is a lot more delicate and interesting. We defer more detailed discussion on that to Remark~\ref{rmk:cr_lowerbound}.

\end{remark}


\begin{remark}[When $\pi\approx \mu$]\label{rmk:pi_eq_mu_case}
	It is not entirely straightforward to see how Theorem~\ref{thm:main} gives a $H^2/n$  bound in the case of $\pi\approx \mu$ rather than the $H^3/n$ bound that we describe in Corollary~\ref{cor:simple_bound}. We make it explicit here in this remark. First the variance term in the bound can be expanded using $\Var[X] = \E[X^2] - \E[X]^2$.
	\begin{align*}
	&\sum_{s_h} \frac{d^\pi(s_h)^2}{d^\mu(s_h)}\Var\left[\frac{\pi(a_h^{(1)}|s_h)}{\mu(a_h^{(1)}|s_h)} (V_{h+1}^\pi(s_{h+1}^{(1)}) +  r_h^{(1)})\middle| s_{h}^{(1)}=s_h\right]\\
	=&\sum_{s_h} \frac{d^\pi(s_h)^2}{d^\mu(s_h)}\sum_{a_h} \frac{\pi(a_h|s_h)^2}{\mu(a_h|s_h)} \Bigg(\E[V_{h+1}^\pi(s_{h+1})^2 + r_h(s_h,a_h,s_h')^2 + \sigma^2(s_h,a_h,s_h')|s_h,a_h]\\
	&\quad\quad\quad+2\E[V_{h+1}^\pi(s_{h+1})r_h(s_h,a_h,s_h')|s_h,a_h] \Bigg) -\sum_{s_h} \frac{d^\pi(s_h)^2}{d^\mu(s_h)} V_h^\pi(s_h)^2\\
	=&\sum_{s_h,a_h,s_{h+1}}\frac{d^\pi(s_h,a_h,s_{h+1})^2}{d^\mu(s_h,a_h,s_{h+1})}  \Bigg(V_{h+1}^\pi(s_{h+1})^2 + [r_h^2 + \sigma_h^2 + 2 r_h V_{h+1}^\pi](s_h,a_h,s_{h+1})\Bigg)  -  \sum_{s_h} \frac{d^\pi(s_h)^2}{d^\mu(s_h)} V_{h}^\pi(s_{h})^2.
	\end{align*}
	If we substitute the above bound into Theorem~\ref{thm:main}, we can see that the negative part of the bound getting combined with  $\sum_{s_{h-1},a_{h-1},s_h}\frac{d^\pi(s_{h-1},a_{h-1},s_{h})^2}{d^\mu(s_h,a_h,s_{h+1})}   V_{h}^\pi(s_{h})^2 $
	from the previous time point, which gives the following more interpretable upper bound of the leading term below
\begin{align*}
&	\frac{1}{n}\sum_{h=0}^H \Bigg[ \sum_{s_{h+1}} \left(\sum_{s_h,a_h}\frac{d^\pi(s_h,a_h,s_{h+1})^2}{d^\mu(s_h,a_h,s_{h+1})} -  \frac{d^\pi(s_{h+1})^2}{d^\mu(s_{h+1})}\right)V_{h+1}^\pi(s_{h+1})^2 \\
	&+\sum_{s_h,a_h,s_{h+1}}\frac{d^\pi(s_h,a_h,s_{h+1})^2}{d^\mu(s_h,a_h,s_{h+1})}  \bigg( [r_h^2 + \sigma_h^2 + 2 r_h V_{h+1}^\pi](s_h,a_h,s_{h+1})\bigg) \Bigg].
	\end{align*}
	When $\pi=\mu$, the first term goes away and the above can be bounded by 
	$$\frac{1}{n}\sum_{h=0}^H \sum_{s_h,a_h,s_{h+1}} d^\pi(s_h,a_h,s_{h+1}) (R_{\max} r_h + \sigma^2 + 2 V_1^\pi r_h) \leq \frac{1}{n}( R_{\max}V^\pi_1 + H\sigma^2 + 2[V^\pi_1]^2 ) \leq  \frac{3V_{\max}^2+H\sigma^2}{n}.$$
	Check that when $\pi$ and $\mu$ are  sufficiently close such that  $\sum_{s_{h+1}} \left(\sum_{s_h,a_h}\frac{d^\pi(s_h,a_h,s_{h+1})^2}{d^\mu(s_h,a_h,s_{h+1})} -  \frac{d^\pi(s_{h+1})^2}{d^\mu(s_{h+1})}\right) = O(1/H)$, then we get the same $H^2/n$ rate as above.
\end{remark}

\begin{remark}[Comparison to the Cramer-Rao lower bound]\label{rmk:cr_lowerbound}
	Theorem~3 in \citep[Appendix C.]{jiang2016doubly} provides a Cramer-Rao lower bound on the variance of any unbiased estimator for a simplified setting of an nonstationary episodic MDP where a reward only appear at the end of the episode and the reward is deterministic (i.e.,$\sigma^2=0$). Their bound, in our notation, translates into
	\begin{align}
	\lim_{n\rightarrow \infty}\Var\big[\sqrt{n} (\widehat{v}^\pi-v^\pi)\big] \geq \sum_{t=0}^{H}\E_\mu\left[ \frac{d^\pi(s_t^{(1)})^2}{d^\mu(s_t^{(1)})^2}\frac{\pi(a_t^{(1)}|s_t^{(1)})^2}{\mu(a_t^{(1)}|s_t^{(1)})^2} \Var\Big[V_{t+1}^\pi(s_{t+1}^{(1)})\Big| s_{t}^{(1)}, a_t^{(1)}\Big]\right].
	\end{align}
	Our Theorem~\ref{thm:main} implies
	\begin{align}
	\lim_{n\rightarrow \infty}n \mathbb{E}[ (\cP\widehat{v}_{\mathrm{MIS}}^\pi-v^\pi)^2]  \leq  \sum_{t=0}^{H}\E_\mu\left[ \frac{d^\pi(s_t^{(1)})^2}{d^\mu(s_t^{(1)})^2} \Var_\mu\Big[ \frac{\pi(a_t^{(1)}|s_t^{(1)})}{\mu(a_t^{(1)}|s_t^{(1)})}V_{t+1}^\pi(s_{t+1}^{(1)})\Big| s_{t}^{(1)}\Big]\right].
	\end{align}
	The upper and lower bounds are clearly very similar, with the only difference in where the importance weights of the actions are. We can verify that the upper bound is bigger because 
	\begin{align}
	&\Var_\mu\Big[ \frac{\pi(a_t^{(1)}|s_t^{(1)})}{\mu(a_t^{(1)}|s_t^{(1)})}V_{t+1}^\pi(s_{t+1}^{(1)})\Big| s_{t}^{(1)}\Big] \\
	=&   \E_\mu\left[ \Var\Big[\frac{\pi(a_t^{(1)}|s_t^{(1)})}{\mu(a_t^{(1)}|s_t^{(1)})}V_{t+1}^\pi(s_{t+1}^{(1)})\Big|   s_{t}^{(1)}, a_{t}^{(1)}\Big]  \middle| s_{t}^{(1)}\right] +  \Var_\mu\left[ \E\Big[\frac{\pi(a_t^{(1)}|s_t^{(1)})}{\mu(a_t^{(1)}|s_t^{(1)})}V_{t+1}^\pi(s_{t+1}^{(1)})\Big|   s_{t}^{(1)}, a_{t}^{(1)}\Big]  \middle| s_{t}^{(1)}\right] \\
	=&\E_\mu\left[ \frac{\pi(a_t^{(1)}|s_t^{(1)})^2}{\mu(a_t^{(1)}|s_t^{(1)})^2}\Var\Big[V_{t+1}^\pi(s_{t+1}^{(1)})\Big|   s_{t}^{(1)}, a_{t}^{(1)}\Big]  \middle| s_{t}^{(1)}\right]  + \Var_\mu\left[ \frac{\pi(a_t^{(1)}|s_t^{(1)})}{\mu(a_t^{(1)}|s_t^{(1)})} \E\Big[V_{t+1}^\pi(s_{t+1}^{(1)})\Big|   s_{t}^{(1)}, a_{t}^{(1)}\Big]  \middle| s_{t}^{(1)}\right]\\
	=&\E_\mu\left[ \frac{\pi(a_t^{(1)}|s_t^{(1)})^2}{\mu(a_t^{(1)}|s_t^{(1)})^2}\Var\Big[V_{t+1}^\pi(s_{t+1}^{(1)})\Big|   s_{t}^{(1)}, a_{t}^{(1)}\Big]  \middle| s_{t}^{(1)}\right]  + \Var_\mu\left[ \frac{\pi(a_t^{(1)}|s_t^{(1)})}{\mu(a_t^{(1)}|s_t^{(1)})} Q_t^\pi(s_{t}^{(1)}, a_{t}^{(1)})  \middle| s_{t}^{(1)}\right].
	\end{align}
	Provided that the second term is comparable to the first, then our upper bound is rate-optimal. 
	Both terms can be bounded by  $H^2R_{\max}^2\E_{\mu}[ \frac{\pi(a_t^{(1)}|s_t^{(1)})^2}{\mu(a_t^{(1)}|s_t^{(1)})^2}]$ and the bound cannot be improved.
	However, if we consider the overall bounds that sum over the $H$ items, the summation of the first term (the lower bound) is at most $H^2 \tau_{a}\tau_s R_{\max}^2$ (note that, somewhat surprisingly, no additional factors of $H$ is incurred), while the second term can be as large as $H^3 R_{\max}^2 \E_{\mu}[ \frac{\pi(a_t^{(1)}|s_t^{(1)})^2}{\mu(a_t^{(1)}|s_t^{(1)})^2}]$ in some cases. One trivial example of that would be an MDP that gives a constant immediate reward of $R_{\max}/2$ for all $t =  H/2+1, H/2+2,...,H$. Note that in this case, $Q_t^\pi(s_{t}^{(1)}, a_{t}^{(1)}) \equiv (H-t)R_{\max }/2$, which ensures that the second term is lower bounded by  
	$$\frac{1}{16}H^2R_{\max}^2 \Var_{\mu}\left[ \frac{\pi(a_t^{(1)}|s_t^{(1)})}{\mu(a_t^{(1)}|s_t^{(1)})} \middle|  s_t^{(1)}\right] = \frac{1}{16}H^2R_{\max}^2 \left(\E_{\mu}\left[ \frac{\pi(a_t^{(1)}|s_t^{(1)})^2}{\mu(a_t^{(1)}|s_t^{(1)})^2} \middle|  s_t^{(1)}\right] -1\right)$$
	  for all $t, s_{t}^{(1)}$. As we sum over $t$, this leads to an $H^3$ term in our upper bound that does not exist in the Cramer-Rao lower bound.
	
	A curious theoretical question is whether such an additional factor of $H$ in the error bound is required for off-policy evaluation in the small $\cS$, large $\cA$ setting that we considered.
	
	
\end{remark}



\section{Application to Other IS-Based Estimators}
\label{app:ope_framework}

In this section, we discuss the applications of our marginalized approach to other IS-based estimators.
We first unify some popular IS-based estimators, such as importance sampling and weighted doubly robust estimators, using a generic framework of IS-based estimators. Then we show the corresponding marginalized IS-based estimators, and provide the asymptotic unbiasedness and consistency results. At last, we provide details about how to deal with partial observability when applying our marginalized approach.


\subsection{Generic IS-Based Estimators Setup}
\label{sec:is_framework}

The IS-based estimators usually provide an unbiased or consistent estimate of the value of target policy $\pi$ \citep{thomas2015safe}. 
We first provide a generic framework of IS-based estimators, and analyze the similarity and difference between different IS-based estimators. This framework could give us insight into the design of IS-based estimators, and is useful to understand the limitation of them.

Let $\rho_{t}^{(i)} \coloneqq \frac{\pi(a_t^{(i)}|s_t^{(i)})}{\mu(a_t^{(i)}|s_t^{(i)})}$ be the importance ratio at time step $t$ of $i$-th trajectory, and $\rho_{0:t}^{(i)} \coloneqq \prod_{{t'} = 0}^{t}\frac{\pi(a_{t'}^{(i)}|s_{t'}^{(i)})}{\mu(a_{t'}^{(i)}|s_{t'}^{(i)})}$ be the cumulative importance ratio for the $i$-th trajectory. We also use $\rho_t(s_t,a_t)$ to denote $\pi(a_t|s_t) / \mu(a_t|s_t)$ over this paper. The generic framework of IS-based estimators can be expressed as follows
\begin{align}
\label{eq:framework}
\widehat v^{\pi} = & \frac{1}{n}\sum_{i = 1}^{n} g(s_0^{(i)}) + \sum_{i = 1}^{n} \sum_{t = 0}^{H-1} \frac{\rho_{0:t}^{(i)}}{\phi_t(\rho_{0:t}^{(1:n)})}  (r_t^{(i)} + f_t(s_t^{(i)},a_t^{(i)},s_{t+1}^{(i)})),
\end{align}
where $\phi_t: \mathbb R_+^{n} \to \mathbb R_+$ are the ``self-normalization'' functions for $\rho_{0:t}^{(i)}$, $g: \mathcal S \to \mathbb R$ and $f_t: \mathcal S \times \mathcal A \times \mathcal S \to \mathbb R$ are the ``value-related'' functions. 
Note $\mathbb{E}f_t=0$.
For the unbiased IS-based estimators, it usually has $\phi_t(\rho_{0:t}^{(1:n)}) = n$, and we first observe that the importance sampling (IS) estimator \citep{precup2000eligibility} falls in this framework using:
\begin{align}
\text{(IS)}: \qquad 
\begin{array}{l}
g(s_0^{(i)}) = 0; \, \phi_t(\rho_{0:t}^{(1:n)}) = n; \\ f_t(s_t^{(i)},a_t^{(i)},s_{t+1}^{(i)}) = 0.
\end{array}
\end{align}
For the doubly tobust (DR) estimator \citep{jiang2016doubly}, the normalization function and value-related functions are:
\begin{align}
\text{(DR)}: 
\begin{array}{l}
g(s_0^{(i)}) = \widehat V^{\pi}(s_0); \,  \phi_t(\rho_{0:t}^{(1:n)}) = n; \\
f_t(s_t^{(i)},a_t^{(i)},s_{t+1}^{(i)}) = - \widehat Q^{\pi} (s_t^{(i)},a_t^{(i)}) +  \widehat V^{\pi}(s_{t + 1}^{(i)}).
\end{array}
\end{align}

Self-normalized estimators such as weighted importance sampling (WIS) and weighted doubly robust (WDR) estimators \citep{thomas2016data} are popular consistent estimators to achieve better bias-variance trade-off. The critical difference of consistent self-normalized estimators is to use $\sum_{j = 1}^{n} \rho_{0:t}^{(j)}$ as normalization function $\phi_t$ rather than $n$. Thus, the WIS estimator is using the following normalization and value-related functions:
\begin{align}
\text{(WIS)}: \qquad
\begin{array}{l}
g(s_0^{(i)}) = 0; \, \phi_t(\rho_{0:t}^{(1:n)}) = \sum_{j = 1}^{n} \rho_{0:t}^{(j)}; \\
f_t(s_t^{(i)},a_t^{(i)},s_{t+1}^{(i)}) = 0,
\end{array}
\end{align}
and the WDR estimator:
\begin{align}
\text{(WDR)}: 
\begin{array}{l}
g(s_0^{(i)}) = \widehat V^{\pi}(s_0); \, \phi_t(\rho_{0:t}^{(1:n)}) = \sum_{j = 1}^{n} \rho_{0:t}^{(j)}; \\ f_t(s_t^{(i)},a_t^{(i)},s_{t+1}^{(i)}) = - \widehat Q^{\pi} (s_t^{(i)},a_t^{(i)}) +  \widehat V^{\pi}(s_{t + 1}^{(i)}).
\end{array}
\end{align}



Note that, the DR estimator reduced the variance from the stochasticity of action by using the technique of control variate $f_t(s_t^{(i)}, a_t^{(i)}, s_{t+1}^{(i)})$ in value-related function, and the WDR estimators reducing variance by the bias-variance trade-off using self-normalization, especially in the presence of weight clipping \citep{bottou2013counterfactual}. However, both could still suffer large variance, because the cumulative importance ratio $\rho_{0:t}^{(i)}$ always appear directly in this framework, which makes the variance to increase exponentially as the horizon goes long. 
\subsection{Marginalized IS-Based Estimators}

Recall the marginalized IS estimators \eqref{eq:objective}, we obtain a generic framework of marginalized IS-based estimators as:
\begin{align}
\label{eq:m-framework}
\widehat v_M(\pi) = & \frac{1}{n}\sum_{i = 1}^{n} g(s_0^{(i)})  + \frac{1}{n}\sum_{i = 1}^{n} \sum_{t = 0}^{H-1} \widehat w_t(s_t^{(i)}) \rho_t^{(i)}  (r_t^{(i)} + f_t(s_t^{(i)},a_t^{(i)},s_{t+1}^{(i)})).
\end{align}
Note that the ``self-normalization'' function $\phi$ has not appeared in the framework above is because we can implement the self-normalization within the estimate of $w_t(s)$. Thus, the marginalized IS-based estimators can be obtained by applying different $g$ and $f_t$ in Section \ref{sec:is_framework} into framework \eqref{eq:m-framework}.

We first show the equivalence between framework~\eqref{eq:framework} and framework~\eqref{eq:m-framework} in expectation if $\phi_t(\rho_{0:t}^{(1:n)}) = n$ and $\widehat w_t(s) = w_t(s)$.
\begin{lemma}
\label{lem:frame-equal}
If $\phi_t(\rho_{0:t}^{(1:n)}) = n$ in framework~\eqref{eq:framework} and $\widehat w_t(s) = w_t(s)$ in framework~\eqref{eq:m-framework}, then these two frameworks are equal in expectation, i.e.,
\begin{align}
& \mathbb E \left[w_t(s_t^{(i)}) \rho_t^{(i)} (r_t^{(i)} + f_t(s_t^{(i)},a_t^{(i)},s_{t+1}^{(i)}))\right] \\
= & \mathbb E \left[\rho_{0:t}^{(i)} (r_t^{(i)} + f_t(s_t^{(i)},a_t^{(i)},s_{t+1}^{(i)}))\right]
\end{align}
holds for all $i$ and $t$.
\end{lemma}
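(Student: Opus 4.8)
The plan is to show that both sides of the claimed identity equal the same quantity, namely the on-policy conditional value $\E_\pi[r_t^{(1)} + f_t(s_t^{(1)},a_t^{(1)},s_{t+1}^{(1)})]$. Throughout, all expectations are over rollouts generated by $\mu$, and I use the true marginal weight $w_t(s_t) = d_t^\pi(s_t)/d_t^\mu(s_t)$ that the marginalized framework \eqref{eq:m-framework} estimates. Since the episodes are i.i.d., it suffices to treat $i=1$. Moreover, because the observed reward $r_t^{(1)}$ enters only through its conditional mean $r_t(s_t,a_t,s_{t+1})$ (by the tower rule the reward noise is independent of every importance weight given $(s_t,a_t,s_{t+1})$), it is enough to prove the identity with an arbitrary function $h_t(s_t,a_t,s_{t+1})$ in place of $r_t^{(1)} + f_t$.

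For the right-hand side I would invoke the standard cumulative importance-sampling identity. Writing out the trajectory density under $\mu$ and under $\pi$, their ratio telescopes to $\rho_{0:t}^{(1)} = \prod_{t'=0}^{t}\pi(a_{t'}^{(1)}|s_{t'}^{(1)})/\mu(a_{t'}^{(1)}|s_{t'}^{(1)})$, since the shared transition kernels $T_{t'}$ cancel. Hence for any function $h_t$ of the trajectory prefix $(s_0,a_0,\dots,s_t,a_t,s_{t+1})$ we have $\E_\mu[\rho_{0:t}^{(1)}h_t] = \E_\pi[h_t]$, which gives the right-hand side immediately.

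For the left-hand side I would peel the expectation in two stages using the Markov structure. First, conditioning on $s_t^{(1)}=s_t$ and averaging over the $\mu$-marginal of $s_t$, the factor $d_t^\mu(s_t)$ produced by the state marginal cancels the denominator of $w_t(s_t)$, leaving $\sum_{s_t} d_t^\pi(s_t)\,\E_\mu[\rho_t^{(1)}h_t \mid s_t^{(1)}=s_t]$. Second, inside the conditional expectation I perform a single-step change of measure in the action: because $\rho_t^{(1)} = \pi(a_t|s_t)/\mu(a_t|s_t)$ reweights $\mu(\cdot|s_t)$ into $\pi(\cdot|s_t)$ while the transition $T_t(\cdot|s_t,a_t)$ is common to both policies, this conditional expectation equals $\E_\pi[h_t \mid s_t^{(1)}=s_t]$. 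Summing against $d_t^\pi(s_t)$ recovers $\E_\pi[h_t]$, matching the right-hand side.

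The computations are elementary; the conceptual point worth emphasizing---and the only place requiring care---is \emph{why} two very different weightings produce the same expectation. The cumulative product $\rho_{0:t}$ reweights the action at every step from $\mu$ to $\pi$, which simultaneously corrects both the induced state marginal at time $t$ and the action at time $t$; the marginalized weight instead supplies the state-marginal correction directly through $d_t^\pi/d_t^\mu$ and only reweights the single action at step $t$. Making this separation precise while correctly handling the reward noise (via the tower rule on $(s_t,a_t,s_{t+1})$) and the index convention for $\rho_{0:t}$ is the main thing to get right. No concentration or large-$n$ argument is needed, since the statement is an exact identity in expectation for every $n$, $i$, and $t$.
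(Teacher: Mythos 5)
Your proof is correct, but it is organized differently from the paper's. The paper never passes through the on-policy measure: it starts from the cumulative-weight side, conditions on $s_t^{(i)}$, uses the Markov property to factor the conditional expectation as $\E\left[\rho_{0:t-1}^{(i)} \middle| s_t^{(i)}\right]\E\left[\rho_t^{(i)}(r_t^{(i)}+f_t) \middle| s_t^{(i)}\right]$, and then invokes (without proof) the identity $\E_\mu\left[\rho_{0:t-1}^{(i)} \middle| s_t^{(i)}\right] = w_t(s_t^{(i)})$ to land directly on the marginalized side. You instead anchor both sides to the common value $\E_\pi[h_t]$: the cumulative side via the telescoping trajectory-density ratio, and the marginalized side via the cancellation $d_t^\mu(s_t)\,w_t(s_t) = d_t^\pi(s_t)$ followed by a one-step change of measure in the action. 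What your route buys is self-containedness — the identity $\E_\mu[\rho_{0:t-1}\,|\,s_t] = d_t^\pi(s_t)/d_t^\mu(s_t)$, which the paper absorbs into the phrase ``conditional independence from the Markov property,'' is effectively proved rather than assumed, and your explicit handling of the reward noise (reducing to a function $h_t(s_t,a_t,s_{t+1})$ via the tower rule over the full prefix) is more careful than the paper's, which treats $r_t^{(i)}$ as if it were already a function of $(s_t,a_t,s_{t+1})$. What the paper's route buys is brevity and a cleaner conceptual takeaway: the single factorization step isolates exactly why the marginalized weight can replace the cumulative product, namely that past weights and the current step are conditionally independent given $s_t$ and the past weights average to $w_t(s_t)$. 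One small caution in your write-up: when you discharge the reward noise, the conditioning set must be the entire prefix $(s_{0:t},a_{0:t},s_{t+1})$ (not merely $(s_t,a_t,s_{t+1})$), since $\rho_{0:t}$ depends on the whole prefix; under the MDP model the conditional mean of $r_t$ given that prefix still depends only on $(s_t,a_t,s_{t+1})$, so your reduction goes through.
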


\begin{proof}[Proof of Lemma \ref{lem:frame-equal}]
Given the conditional independence in the Markov property, we have
\begin{align}
\mathbb E \left[\rho_{0:t}^{(i)} (r_t^{(i)} + f_t(s_t^{(i)},a_t^{(i)},s_{t+1}^{(i)}))\right] = & \mathbb E \left[\mathbb E \left[\rho_{0:t}^{(i)} (r_t^{(i)} + f_t(s_t^{(i)},a_t^{(i)},s_{t+1}^{(i)})) | s_{t}^{(i)} \right]\right] \\
= & \mathbb E \left[\mathbb E \left[\rho_{0:t-1}^{(i)} | s_{t}^{(i)} \right]\mathbb E \left[\rho_{t}^{(i)} (r_t^{(i)} + f_t(s_t^{(i)},a_t^{(i)},s_{t+1}^{(i)})) | s_{t}^{(i)} \right]\right] \\
= & \mathbb E \left[w_t(s_t^{(i)})\mathbb E \left[\rho_{t}^{(i)} (r_t^{(i)} + f_t(s_t^{(i)},a_t^{(i)},s_{t+1}^{(i)})) | s_{t}^{(i)} \right]\right] \\
= & \mathbb E \left[w_t(s_t^{(i)})\rho_{t}^{(i)} (r_t^{(i)} + f_t(s_t^{(i)},a_t^{(i)},s_{t+1}^{(i)}))\right],
\end{align}
where the first equation follows from the law of total expectation, the second equation follows from the conditional independence from the Markov property. This completes the proof.
\end{proof}

Next, we show that if we have an unbiased or consistent estimate $\widehat w_t$ of $w_t$, the IS-based OPE estimators that simply replace $\prod_{t' = 0}^{t - 1}\frac{\pi(a_{t'}|s_{t'})}{\mu(a_{t'}|s_{t'})}$ with $\widehat w_t(s_t)$ will remain unbiased or consistent.

\begin{theorem}
\label{thm:unbias_consis_framework}
Let $\phi_t(\rho_{0:t}^{(1:n)}) = n$ in framework~\eqref{eq:framework}, then framework~\eqref{eq:m-framework} could keep the unbiasedness and consistency same as in framework~\eqref{eq:framework} if $\widehat w_t(s)$ is an unbiased or consistent estimator for marginalized ratio $w_t(s)$ for all $t$:
\begin{enumerate}
    \item If an unbiased estimator falls in framework~\eqref{eq:framework}, then its marginalized estimator in framework~\eqref{eq:m-framework} is also an unbiased estimator of $v^{\pi}$ given unbiased estimator $\widehat w_t(s)$ for all $t$.
    \item If a consistent estimator falls in framework~\eqref{eq:framework}, then its marginalized estimator in framework~\eqref{eq:m-framework} is also a consistent estimator of $v^{\pi}$ given consistent estimator $\widehat w_t(s)$ for all $t$.
\end{enumerate}
\end{theorem}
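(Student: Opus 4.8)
The plan is to prove both parts by reducing each estimator to its constituent per-step terms and bridging the marginalized and original frameworks through Lemma~\ref{lem:frame-equal}, which already supplies the population identity $\E[w_t(s_t^{(i)})\rho_t^{(i)}(r_t^{(i)}+f_t(s_t^{(i)},a_t^{(i)},s_{t+1}^{(i)}))]=\E[\rho_{0:t}^{(i)}(r_t^{(i)}+f_t(s_t^{(i)},a_t^{(i)},s_{t+1}^{(i)}))]$ in the exact-weight case $\widehat{w}_t=w_t$. Since the term $\frac{1}{n}\sum_i g(s_0^{(i)})$ is identical in \eqref{eq:framework} and \eqref{eq:m-framework}, it suffices to match the double sums over $t$ and $i$, and everything reduces to controlling the difference between plugging in $\widehat{w}_t$ versus $w_t$.

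For Part 1 (unbiasedness), by linearity of expectation it is enough to show term-by-term that $\E[\widehat{w}_t(s_t^{(i)})\rho_t^{(i)}(r_t^{(i)}+f_t)]=\E[\rho_{0:t}^{(i)}(r_t^{(i)}+f_t)]$. First I would condition on the roll-in data $\mathrm{Data}_t$, under which $\widehat{w}_t(\cdot)$ is measurable and, by the Markov property, the step-$t$ transition $(a_t^{(i)},r_t^{(i)},s_{t+1}^{(i)})$ depends on the past only through $s_t^{(i)}$. Writing $\psi_t(s):=\E[\rho_t^{(i)}(r_t^{(i)}+f_t)\mid s_t^{(i)}=s]$, the tower property then yields $\E[\widehat{w}_t(s_t^{(i)})\rho_t^{(i)}(r_t^{(i)}+f_t)]=\E[\widehat{w}_t(s_t^{(i)})\psi_t(s_t^{(i)})]$, and the claim follows once I may replace $\widehat{w}_t(s_t^{(i)})$ by its mean $w_t(s_t^{(i)})$ and invoke Lemma~\ref{lem:frame-equal}.

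For Part 2 (consistency), I would argue by a law-of-large-numbers plus Slutsky-type decomposition. The first term converges to $\E[g(s_0)]$ by the LLN. For the double sum I would split $\widehat{w}_t(s_t^{(i)})=w_t(s_t^{(i)})+(\widehat{w}_t(s_t^{(i)})-w_t(s_t^{(i)}))$: the $w_t$ part converges in probability to $\sum_t\E[w_t(s_t)\rho_t(r_t+f_t)]$ by the LLN, while the remainder is a bounded average times a vanishing factor, since $\rho_t$, $r_t$ and the control variate $f_t$ are uniformly bounded (Assumptions A1 and A3 bound $\tau_a$, $R_{\max}$, $\sigma$ and the associated value functions), and since $\mathcal S$ is finite so pointwise consistency $\widehat{w}_t(s)\to_p w_t(s)$ upgrades to uniform consistency $\max_s|\widehat{w}_t(s)-w_t(s)|\to_p 0$. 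Combining the pieces and applying Lemma~\ref{lem:frame-equal} once more identifies the limit as $\E[g(s_0)]+\sum_t\E[\rho_{0:t}(r_t+f_t)]$, which equals $v^\pi$ precisely because the base estimator in \eqref{eq:framework} is assumed consistent.

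The hard part will be the unbiasedness claim rather than the consistency one. The subtlety is the statistical dependence between $\widehat{w}_t$ and the very transitions it multiplies: if $\widehat{w}_t$ is built from the same episodes as the outer sum, then $\widehat{w}_t(s)$ and the event $\{s_t^{(i)}=s\}$ are correlated (for instance through the empirical denominator $\widehat{d}_t^\mu$), so $\E[\widehat{w}_t(s)\mathbf{1}(s_t^{(i)}=s)]\neq w_t(s)\,\P(s_t^{(i)}=s)$ and the clean exchange $\widehat{w}_t\to w_t$ fails in finite samples --- this is the same ``annoying bias'' that forced the fictitious-estimator construction in the proof of Theorem~\ref{thm:main}. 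I would therefore state Part 1 under the hypothesis that $\widehat{w}_t$ is unbiased and conditionally independent of the step-$t$ transitions given the roll-in states (as provided by sample splitting, or by estimating $\widehat{w}_t$ from information that excludes trajectory $i$'s step-$t$ state and action), which makes the conditional factorization above exact; consistency requires no such care, since only convergence in probability, not an exact expectation, is needed.
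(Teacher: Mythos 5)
Your proposal is correct and follows essentially the same route as the paper: for unbiasedness, condition on $s_t^{(i)}$, factor the conditional expectation via the Markov property, replace $\widehat{w}_t$ by $w_t$, and invoke Lemma~\ref{lem:frame-equal}; for consistency, a finite-state LLN/Slutsky argument with the same limit identification. The dependence issue you flag between $\widehat{w}_t$ and the step-$t$ transition is real, and the paper handles it exactly as you anticipate --- its proof takes as hypothesis the \emph{conditional} unbiasedness $\E[\widehat{w}_t^n(s)\mid s]=w_t(s)$ and uses that $\widehat{w}_t$ is built from roll-in data only, so that the factorization $\E[\widehat{w}_t^n(s_t^{(i)})\rho_t^{(i)}(r_t^{(i)}+f_t)\mid s_t^{(i)}]=\E[\widehat{w}_t^n(s_t^{(i)})\mid s_t^{(i)}]\,\E[\rho_t^{(i)}(r_t^{(i)}+f_t)\mid s_t^{(i)}]$ is exact.
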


\begin{proof}[Proof of Theorem \ref{thm:unbias_consis_framework}]
We first provide the proof of the first part of unbiasedness. Given $\mathbb E [\widehat w_t^n(s) | s] = w_t(s)$ for all $t$, then
\begin{align}
\mathbb E \left[\widehat w_t^n(s_t^{(i)}) \rho_t^{(i)}  (r_t^{(i)} + f_t(s_t^{(i)},a_t^{(i)},s_{t+1}^{(i)}))\right] = & \mathbb E \left[\mathbb E \left[\widehat w_t^n(s_t^{(i)}) \rho_t^{(i)}  (r_t^{(i)} + f_t(s_t^{(i)},a_t^{(i)},s_{t+1}^{(i)})) | s_t^{(i)} \right]\right] \\
= & \mathbb E \left[\mathbb E \left[\widehat w_t^n(s_t^{(i)}) | s_t^{(i)} \right] \mathbb E \left[ \rho_t^{(i)}  (r_t^{(i)} + f_t(s_t^{(i)},a_t^{(i)},s_{t+1}^{(i)})) | s_t^{(i)} \right]\right] \\
= & \mathbb E \left[w_t(s_t^{(i)}) \mathbb E \left[ \rho_t^{(i)}  (r_t^{(i)} + f_t(s_t^{(i)},a_t^{(i)},s_{t+1}^{(i)})) | s_t^{(i)} \right]\right] \\
= & \mathbb E \left[w_t(s_t^{(i)})\rho_{t}^{(i)} (r_t^{(i)} + f_t(s_t^{(i)},a_t^{(i)},s_{t+1}^{(i)}))\right] \\
= & \mathbb E \left[\rho_{0:t}^{(i)} (r_t^{(i)} + f_t(s_t^{(i)},a_t^{(i)},s_{t+1}^{(i)}))\right],
\label{eq:itm-unbias}
\end{align}
where the the first equation follows from the law of total expectation, the second equation follows from the conditional independence of the Markov property, the last equation follows from Lemma \ref{lem:frame-equal}. Since the original estimator falls in framework \eqref{eq:framework} is unbiased, summing \eqref{eq:itm-unbias} over $i$ and $t$ completes the proof of the first part.

We now prove the second part of consistency. Since we have
\begin{align}
& \underset {n\to \infty}{\plim} \frac{1}{n}\sum_{i = 1}^{n} \sum_{t = 1}^{H} \widehat w_t^n(s_t^{(i)}) \rho_t^{(i)}  (r_t^{(i)} + f_t(s_t^{(i)},a_t^{(i)},s_{t+1}^{(i)})) = \sum_{t = 1}^{H}  \underset {n\to \infty}{\plim} \frac{1}{n}\sum_{i = 1}^{n}  \widehat w_t^n(s_t^{(i)}) \rho_t^{(i)} (r_t^{(i)} + f_t(s_t^{(i)},a_t^{(i)},s_{t+1}^{(i)})),
\end{align}
then, to prove the consistency, it is sufficient to show
\begin{align}
\underset {n\to \infty}{\plim} \frac{1}{n}\sum_{i = 1}^{n} \widehat w_t^n(s_t^{(i)}) \rho_t^{(i)} (r_t^{(i)} + f_t(s_t^{(i)},a_t^{(i)},s_{t+1}^{(i)})) = \underset {n\to \infty}{\plim} \frac{1}{n}\sum_{i = 1}^{n} \rho_{0:t}^{(i)} (r_t^{(i)} + f_t(s_t^{(i)},a_t^{(i)},s_{t+1}^{(i)})),
\label{eq:itm-consis}
\end{align}
given $\plim_{n \to \infty} \widehat w_t^n(s) = w_t(s)$ for all $s \in \mathcal S$. Note that $d_t^{\mu}(s)$ is the state distribution under behavior policy $\mu$ at time step $t$, then for the left hand side of \eqref{eq:itm-consis}, we have
\begin{align}
& \underset{n\to \infty}{\plim} \frac{1}{n}\sum_{i = 1}^{n} \widehat w_t^n(s_t^{(i)}) \rho_t^{(i)} (r_t^{(i)} + f_t(s_t^{(i)},a_t^{(i)},s_{t+1}^{(i)})) \\
= & \sum_{s \in \mathcal S} d_t^{\mu}(s) \underset{n\to \infty}{\plim} \left[\frac{1}{n} \sum_{i = 1}^{n} \widehat w_t^n(s) \dfrac{\pi(a_t^{(i)}|s)}{\mu(a_t^{(i)}|s)} \mathbf 1 (s_t^{(i)} = s) (r_t^{(i)} + f_t(s,a_t^{(i)},s_{t+1}^{(i)})) \right] \\
= & \sum_{s \in \mathcal S} d_t^{\mu}(s) \underset{n\to \infty}{\plim} \left[\widehat w_t^n(s)\frac{1}{n} \sum_{i = 1}^{n} \dfrac{\pi(a_t^{(i)}|s)}{\mu(a_t^{(i)}|s)} \mathbf 1 (s_t^{(i)} = s) (r_t^{(i)} + f_t(s,a_t^{(i)},s_{t+1}^{(i)})) \right] \\
= & \sum_{s \in \mathcal S} d_t^{\mu}(s) \left[ \underset{n\to \infty}{\plim} \left(\widehat w_t^n(s)  \right) \underset{n\to \infty}{\plim} \left( \frac{1}{n} \sum_{i = 1}^{n} \dfrac{\pi(a_t^{(i)}|s)}{\mu(a_t^{(i)}|s)} \mathbf 1 (s_t^{(i)} = s) (r_t^{(i)} + f_t(s,a_t^{(i)},s_{t+1}^{(i)})) \right) \right] \\
= & \sum_{s \in \mathcal S} d_t^{\mu}(s) w_t(s) \underset {n\to \infty}{\plim} \left[  \frac{1}{n} \sum_{i = 1}^{n} \dfrac{\pi(a_t^{(i)}|s)}{\mu(a_t^{(i)}|s)} \mathbf 1 (s_t^{(i)} = s) (r_t^{(i)} + f_t(s,a_t^{(i)},s_{t+1}^{(i)})) \right]\\
= & \sum_{s \in \mathcal S} d_t^{\mu}(s) w_t(s) \mathbb E \left[ \dfrac{\pi(a_t|s)}{\mu(a_t|s)} (r_t + f_t(s,a_t,s_{t+1})) \Big| s_t = s \right],
\label{eq:itm-consis-left}
\end{align}
where the first equation follows from the weak law of large number. Similarly, for the right hand side of \eqref{eq:itm-consis}, we have
\begin{align}
& \underset {n\to \infty}{\plim} \frac{1}{n}\sum_{i = 1}^{n} \rho_{0:t}^{(i)} (r_t^{(i)} + f_t(s_t^{(i)},a_t^{(i)},s_{t+1}^{(i)})) \\
= & \sum_{s \in \mathcal S} d_t^{\mu}(s) \underset{n\to \infty}{\plim} \left[ \frac{1}{n}\sum_{i = 1}^{n} \prod_{t'=0}^{t-1} \dfrac{\pi(a_{t'}^{(i)}|s_{t'}^{(i)})}{\mu(a_{t'}^{(i)}|s_{t'}^{(i)})} \mathbf 1 (s_t^{(i)} = s) \dfrac{\pi(a_t^{(i)}|s)}{\mu(a_t^{(i)}|s)} (r_t^{(i)} + f_t(s,a_t^{(i)},s_{t+1}^{(i)})) \right] \\
= & \sum_{s \in \mathcal S} d_t^{\mu}(s) \mathbb E \left[ \prod_{t'=0}^{t-1} \dfrac{\pi(a_{t'}|s_{t'})}{\mu(a_{t'}|s_{t'})} \dfrac{\pi(a_t|s)}{\mu(a_t|s)} (r_t + f_t(s,a_t,s_{t+1}))\Big| s_t = s  \right] \\
= & \sum_{s \in \mathcal S} d_t^{\mu}(s) \mathbb E \left[ \prod_{t'=0}^{t-1} \dfrac{\pi(a_{t'}|s_{t'})}{\mu(a_{t'}|s_{t'})} \Big| s_t = s \right] \mathbb E \left[\dfrac{\pi(a_t|s)}{\mu(a_t|s)} (r_t + f_t(s,a_t,s_{t+1})) \Big| s_t = s \right] \\
= & \sum_{s \in \mathcal S} d_t^{\mu}(s) w_t(s) \mathbb E \left[\dfrac{\pi(a_t|s)}{\mu(a_t|s)} (r_t + f_t(s,a_t,s_{t+1})) \Big| s_t = s \right],
\label{eq:itm-consis-right}
\end{align}
where the first equation follows from the weak law of large number and the third equation follows from the conditional independence of the Markov property. Thus, we have \eqref{eq:itm-consis-left} equal to \eqref{eq:itm-consis-right}. This completes the proof of the second half.
\end{proof}

In partially observable MDPs (POMDPs), we may not be able to obverse all states. However, if there exist any observable states, our marginalized approach could leverage these observable states to reduce variance. That is, we use the partial trajectory from the closest observable states to the current time step to represent the current state. Assume the current time step is $t$ and the closest observable states is $s_{t - L}$ at time step $t - L$, then we can use $\frac{d_{t}^{\pi}(s_{t - L})}{d_{t}^{\mu}(s_{t - L})} \prod_{i = t - L}^{t - 1} \frac{\pi(a_i|s_i)}{\mu(a_i|s_i)}$ as $w_t(s_t)$, while other IS-based methods are equivalent to using $\prod_{0}^{t - 1} \frac{\pi(a_i|s_i)}{\mu(a_i|s_i)}$ as $w_t(s_t)$. The observable states in POMDPs can be considered as the states that can be reunioned at in the DAG MDPs.
If there is no observable state in POMDPs, then it is equivalent that DAG MDPs is reduced to tree MDPs.
Definition of DAG and Tree MDPs can be found in the extended version of  \citep{jiang2016doubly}.


Finally, we propose a new marginalized IS estimator to further improve the data efficiency and reduce variance. Since DR only reduces the variance from the stochasticity of action \citep{jiang2016doubly} and our marginalized estimator~\eqref{eq:m-framework} reduce the variance from the cumulative importance weights, it is also possible to reduce the variance the stochasticity of reward function.

Based on the definition of MDPs, we know that $r_t$ is the random variable that only determined by $s_t,a_t$. Thus, if $\widehat R(s,a)$ is an unbiased and consistent estimator for $R(s,a)$, $r_t^{(i)}$ in framework~\eqref{eq:m-framework} can be replaced by that $\widehat R(s_t^{(i)},a_t^{(i)})$, and keep unbiasedness or consistency same as using $r_t^{(i)}$. 

Note that we can use an unbiased and consistent Monte-Carlo based estimator
\begin{align}
\widehat{r}(s_t,a_t) = \frac{\sum_{i=1}^n r_t^{(i)}\mathbf{1}(s_t^{(i)}=s_t, a_t^{(i)} =a_t)}{\sum_{i=1}^n \mathbf{1}(s_t^{(i)}=s_t, a_t^{(i)} =a_t) },
\end{align}
and then we obtain a better marginalized framework
\begin{align}
\label{eq:bm-framework}
& \widehat v_{BM}(\pi) = \frac{1}{n}\sum_{i = 1}^{n} g(s_0^{(i)}) + \frac{1}{n}\sum_{i = 1}^{n} \sum_{t = 0}^{H-1} \widehat w_t(s_t^{(i)}) \rho_t^{(i)}  (\widehat{r}(s_t^{(i)},a_t^{(i)}) + f_t(s_t^{(i)},a_t^{(i)},s_{t+1}^{(i)})).
\end{align}

\begin{remark}
Note that, the only difference between~\eqref{eq:m-framework} and~\eqref{eq:bm-framework} is $r_t^{(i)}$ and $\widehat{r}(s_t^{(i)},a_t^{(i)})$. Thus, the unbiasedness or consistency of~\eqref{eq:bm-framework} can be obtained similarly by following Theorem~\ref{thm:unbias_consis_framework} and its proof. 
%
\end{remark}

One interesting observation is that when each $(s_t,a_t)$-pair is observed only once in $n$ iterations, then framework~\eqref{eq:bm-framework} reduces to~\eqref{eq:m-framework}. Note that when this happens, we could still potentially estimate $\widehat w_t^n(s_t)$ well if $|\mathcal A|$ is large but $|\mathcal S|$ is relative small, in which case we can still afford to observe each potential values of $s_t$ many times. Thus, we can also obtain better marginalized IS-based estimators, e.g., the MIS and MDR estimators we use in our experiments, by applying different $g$ and $f_t$ in Section \ref{sec:is_framework} into framework \eqref{eq:bm-framework}.

\section{Details of Experiments}
\label{sec:exp_app}

In this section, we first clarify the experiment settings. We also provide a detailed discussion about the preference of MIS and SSD-IS. Finally, we provide the extended experiential results about applying MIS to doubly robust related approaches.

\subsection{Environment Settings}

\paragraph{ModelWin MDP}
As depicted in Figure~\ref{fig:ModelWinMDP-main}, the agent in the ModelWin domain always begins in $s_1$, where it must select between two actions. The first action $a_1$ causes the agent to transition to $s_2$ with probability $p$ and $s_3$ with probability $1-p$. The second action $a_2$ does the opposite. We set $p=0.4$. The agent receives a reward of $1$ every time the state transitions to $s_2$, $-1$ to $s_3$, and $0$ otherwise.

\paragraph{ModelFail MDP}
The dynamics of ModelFail MDP (Figure~\ref{fig:ModelFailMDP-main}) is similar to ModelWin, but the reward is delayed after the unobservable states --- the agent receives a reward of $1$ only when it arrives $s_1$ from the left state and $-1$ only when it arrives $s_1$ from the right state. We set $p=1$ to make the problem easier. 

Policy $\pi$ takes action $a_1$ and $a_2$ with probabilities $0.2$ and $0.8$ when at state $s_1$ or observing ``$?$''. $\mu$ take actions uniformly at random. 

We remark that the partial-state observability in \textbf{ModelFail} is specialized and should be distinguished from the more general partial observability considered in the classical POMDP literature. The two distinctive (and clearly artificial) differences are that 
\begin{enumerate}
    \item There are checkpoints of full state observability every other step.
        \item We assume that the action probability is logged when the observation is ``$?$''. 
\end{enumerate}
The model-based approach clearly fails when ``$?$'' is treated as if it is a state when building the model. A standard POMDP that uses just a memory of size 2 will resolve this issue without any trouble. One may also consider an alternative MDP that only takes the checkpoint states $s_1$ as states, but the two actions that the policies will take are no longer a function of just $s_1$ (in this example it actually is because the observation is always ``$?$'' in the step after $s_1$).

Finally, both \textbf{ModelWin} and \textbf{ModelFail} are highly specialized examples with deterministic transitions into the states $s_1$ that could potentially generate rewards for some actions. Moreover, there are no non-trivial actions involved as we transition from $s_2$ and $s_3$ back to $s_1$. This means that we can perfectly estimate the marginal state-distribution of $s_1$ with just one data point in all methods. As a result, we do not expect the results to reveal the worst-case dependence on the model parameters such as $H$. The following example fixes that.

\paragraph{Non-stationary Non-mixing MDP}
In the time-varying MDP example, we consider the following carefully designed MDP where there are two states and a continuous action in $[0,1]$.
In State $0$ the agent always transitions to State $0$, regardless of the actions. In State $1$, it transitions to State $0$ deterministically if the action is taken to be within an unknown subset of measure $1/H$ within $[0,0.5]$. This subset might be different for different $t$. When the agent is at State $0$, then a reward of $1$ is received regardless of the actions taken when the step number is larger than $H/2$; otherwise no reward is received.

The behavior and target policy (probability density on $[0,1]$) are defined to be
$$
\mu(a|s=1) = 1 \text{ for all }a,
$$
and
$$
\pi(a|s=1) = \begin{cases}
1.9 &\text{ if } a\in[0,0.5]\\
0.1 &\text{ otherwise.}
\end{cases}
$$
and $\pi(a|s=0)=\mu(a|s=0) = 1$.

This example is deliberately designed such that we have a non-stationary dynamics\footnote{The transition matrices on both $\pi$ and $\mu$ are actually stationary.} that does not really mix beyond a constant factor so an additional factor of $H$ in the sample complexity can potentially appear. Meanwhile, the cumulative reward is proportional to $H$ for the target policy, so we expect to see a $H^1.5$ dependence in the (relative) RMSE curves as we vary $H$. Finally, due to the non-mixing property of this example, and the importance weight of stationary distributions based on SSD-IS is expected to be biased.
All the above observations are consistent with what we see in the experiments presented in Figure~\ref{fig:tvar}.

\paragraph{Mountain Car}
Mountain Car domain is a classic control problem with 2-dimensional state space (position and velocity), 3 discrete one-dimensional actions (push left, no push, push right), and deterministic dynamics. We follow the same dynamic as in \citep{sutton1998reinforcement}. The horizon, $H$, is set to be 100. We use initial state distribution to be uniform in position and $0$ in velocity to ensure exploratory. Since our proposed method mainly focuses on the tabular setting, we use the state aggregation for both MIS and SSD-IS to ensure fair comparison: position is multiplied by $2^6$ and velocity is multiplied by $2^8$, and then we use the rounded integers to be the abstract state (adopted from \citep{jiang2016doubly}). Thus, the (marginalized or stationary) state distribution can be estimated on the tabular abstract states.

\subsection{Detailed Discussions}

The ModelWin domain is only a very special case of episodic fully-observable MDPs. Even if we use the stationary state distribution (estimated by ignoring the within-episode step count in the dataset) instead of marginalized state distribution in \eqref{eq:MIS_value}, that value will  still happen to be correct in both time-invariant and time-varying case. However, that is not correct in general. As the results we showed in the Mountain Car domain, SSD-IS fails to provide correct evaluation. That is because SSD-IS is designed for the infinite-horizon problems and usually cannot be directly applied to the episodic problems, where SSD-IS uses the stationary distribution ($t\rightarrow \infty$) to approximate that for all $t=1,...,H$ which is biased and not consistent even as the number of episodes $n\rightarrow \infty$ in general. For example, in the Mountain Car domain, the stationary state distribution, $\prod_{t = 1}^{\infty}P_{s_{t - 1},s_t}^{\pi}d_0$, will converge to the probability mass on the absorbing state with any exploratory policy $\pi$.

The result of mountain car experiment in the current version is slightly different from the early version. There are two main modifications in that experiments: 1. In the early version, the on-policy estimated $v^\pi$ for calculating RMSE did not use enough trajectories, so that the curves in the early version are biased. 2. We changed the implementation of SSD-IS in our current version. Previously, we solved Equation (8) and (9) in \citep{liu2018breaking} using an iterative approach. The current implementation solves Equation (8) and (9) in \citep{liu2018breaking} directly by re-formalizing it to be a quadratic programming problem. The current implementation follows the released code provided by the author of \citep{liu2018breaking}, and the detailed description of that can be found in Appendix \ref{sec:ssd_is_details}.

We also explain the reason of MIS outperforming MDR in Figure \ref{fig:tinv_app} and Figure \ref{fig:tvar_MountainCar_n_app}. In the MDR methods, we split our dataset into two halves. We use one half to estimate the marginalized state distribution, and the other half to estimate the Q-function. Intuitively, since Q-function is only used to be a control variant in the estimator, we suppose the statistical error from the marginalized state distribution may dominate the overall statistical error. As MIS uses the whole data to estimate the marginalized state, whereas MDR only uses a half data, the statistical error of MIS could be lass that of MDR. The theoretical explanation of that goes beyond the topic of this paper, and we will leave it as the future work.
 
\subsection{SSD-IS with finite state space.}\label{sec:ssd_is_details}
The pioneering work by \citet{liu2018breaking} describes a method --- SSD-IS --- for estimating the ratio of stationary state distribution under $\pi$ and $\mu$ for an infinite horizon (possibly discounted) MDP.

The estimator is described primarily for the case when the state is a continuous variable, which requires defining a reproducing kernel Hilbert space (RKHS) and solving a mini-max problem. 

To be a bit more self-contained in this paper, we provide the concise formula using our notation for estimating the stationary distribution $d_{\infty}^\pi(s)$ as well as directly estimating the importance ratio
$$
\rho(s):=\frac{d_{\infty}^\pi(s)}{d_{1:N-1}^\mu(s)}
$$
between the $d_{\infty}^\pi(s)$ and the marginalized state distribution under $\mu$ that measures the average state-visitation within the first $N$ iterations (note that we can observe triplets $(s_t,a_t,s_{t+1})$ for all $t=1,...,N-1$).

Note that a roll-out in an infinite-horizon environment of a fixed length $N$ can be denoted in our notation with a single episode $n=1$ and horizon $H=N$.

The master equation that we need is the following:
\begin{equation}
    d_{\infty}^\pi(s') = P^\pi(s'|s)  d_{\infty}^\pi(s) = P^\pi(s'|s)d^\mu_{1:H-1}(s) \frac{d_{\infty}^\pi(s)}{d_{1:H-1}^\mu(s)},
\end{equation}
which, in matrix form, is:
\begin{equation}\label{eq:eigen_dpi}
    d_{\infty}^\pi  = A^{\pi,\mu} \text{Diag}(d_{1:H-1}^{\mu})^{-1} d_{\infty}^\pi 
\end{equation}
where $A^{\pi,\mu}\in\R^{S\times S}$ and $A^{\pi,\mu}(s',s)$ measures the joint distribution of $s \sim d_t^{\mu}$ with a randomly chosen $t$ from $1,...,H-1$ and $s'$ that is obtained by taking an action according to $\pi$ at $s$.

By left-multiplying  $\text{Diag}(d_{1:H-1}^{\mu})^{-1}$ on both sides of the equation, we also get
\begin{equation}\label{eq:eigen_rho}
    \rho  = \text{Diag}(d_{1:H-1}^{\mu})^{-1} A^{\pi,\mu}  \rho.
\end{equation}

Observe that \eqref{eq:eigen_dpi} and \eqref{eq:eigen_rho} are eigenvalue problems of $d_{\infty}^\pi$ and $\rho$. They differ only by whether we normalize the $A^{\pi,\mu}$ matrix on the left or on the right by multiplying the diagonal matrix $\text{Diag}(d_{1:H-1}^{\mu})^{-1}$.

They suggest that if we can consistently estimate $A^{\pi,\mu}$ and $\text{Diag}(d_{1:H-1}^{\mu})^{-1}$, then \textbf{the right eigenvector of the corresponding estimated matrices with eigenvalue closest to $1$} will be consistent estimators of $d_{\infty}^\pi$ and $\rho$.

Note that we can estimate the joint-distribution
$A^{\pi,\mu}[s',s]$ by importance sampling using
$$
\hat{A}^{\pi,\mu}[s',s] = \frac{1}{n}\sum_{i=1}^n\frac{1}{H-1}\sum_{t=1}^{H-1} \frac{\pi(a_t^{(i)}|s_t^{(i)})}{\mu(a_t^{(i)}|s_t^{(i)})} \mathbf{1}(s_{t}^{(i)}=s, s_{t+1}^{(i)}=s')
$$
with potentially infinite action.
And $d_{1:H-1}^{\mu}$ can be estimated by
$$
\hat{d}_{1:H-1}^{\mu}(s) = \frac{1}{n}\sum_{i=1}^n\frac{1}{H-1}\sum_{t=1}^{H-1}  \mathbf{1}( s_{t}^{(i)}=s).
$$
For the infinite horizon case, we can just take $n=1$.

We emphasize that while the above results and the spectral estimators were not explicitly presented by \citet{liu2018breaking}, they are simply a rewriting of Equation (8) and (9) in \citep{liu2018breaking} more explicitly in a more specialized case.

The SSD-IS implementation that we used in the experiments with discrete state space corresponds to this particular version that we described in this section, which is the same as the version of the code released by the authors modulo some boundary conditions\footnote{In the released code provided by the authors of \citep{liu2018breaking}, there is a version of SSD-IS implemented for the discrete state space that first estimates $d_{\infty}^\pi(s)$ than output the importance weights to be the ratio of this estimate and $\hat{d}_{1:H-1}^{\mu}$ (see \url{https://github.com/zt95/infinite-horizon-off-policy-estimation/blob/master/taxi/Density_Ratio_discrete.py}). However, $\hat{d}_{\infty}^\pi(s)$ is slightly different from the spectral algorithm that we described and it provides a mysterious result that is inconsistent with the stationary distribution that we derived analytically by hands in the example we considered in Figure~\ref{fig:tvar} ($d_{\infty}^\pi(s=1) = 1$ with large probability, while the estimated value by running that piece of code is far off).}. These boundary conditions seem to be important for getting SSD-IS to work correctly for the finite horizon MDPs.

That said, we acknowledge that when the underlying MDP is stationary and $H$ is large enough relative to the mixing rate of the MDP, then using the estimated importance weight $\rho$ to construct importance sampling estimators as in SSD-IS may provide a favorable bias-variance trade-off in finite sample, because its variance is smaller by a factor of $H$ than the standard MIS while its bias on the estimated importance ratio $\frac{d^\pi_t(s)}{d^\mu_t(s)}$ decays exponentially as $t$ gets larger.

\subsection{Extended Experimental Studies}

We now present further empirical results. To test the use of our approach in other IS-based estimators, we compared DR, WDR, MDR, and MIS in the same environments, where DR denotes the doubly robust estimator \citep{jiang2016doubly}, WDR denotes the weighted doubly robust estimator \citep{thomas2016data}, MIS denotes the estimator using proposed marginalized approach used with doubly robust, and MIS is our marginalized importance sampling estimator. The estimates of $d_t^{\pi}$ and $d_t^{\mu}$ are projected to the probability simplex in our MDR and MIS estimators. The results are obtained in the same environments as Section \ref{sec:exp}.

\begin{figure*}[thb]
    \centering
    \begin{subfigure}[b]{1\linewidth}
        \centering
        \includegraphics[width=0.5\linewidth]{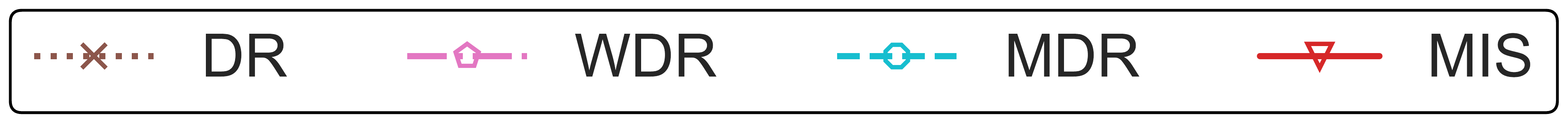}
    \end{subfigure}\\
    \begin{subfigure}[b]{0.2446\linewidth}
        \centering
        \includegraphics[width=\linewidth]{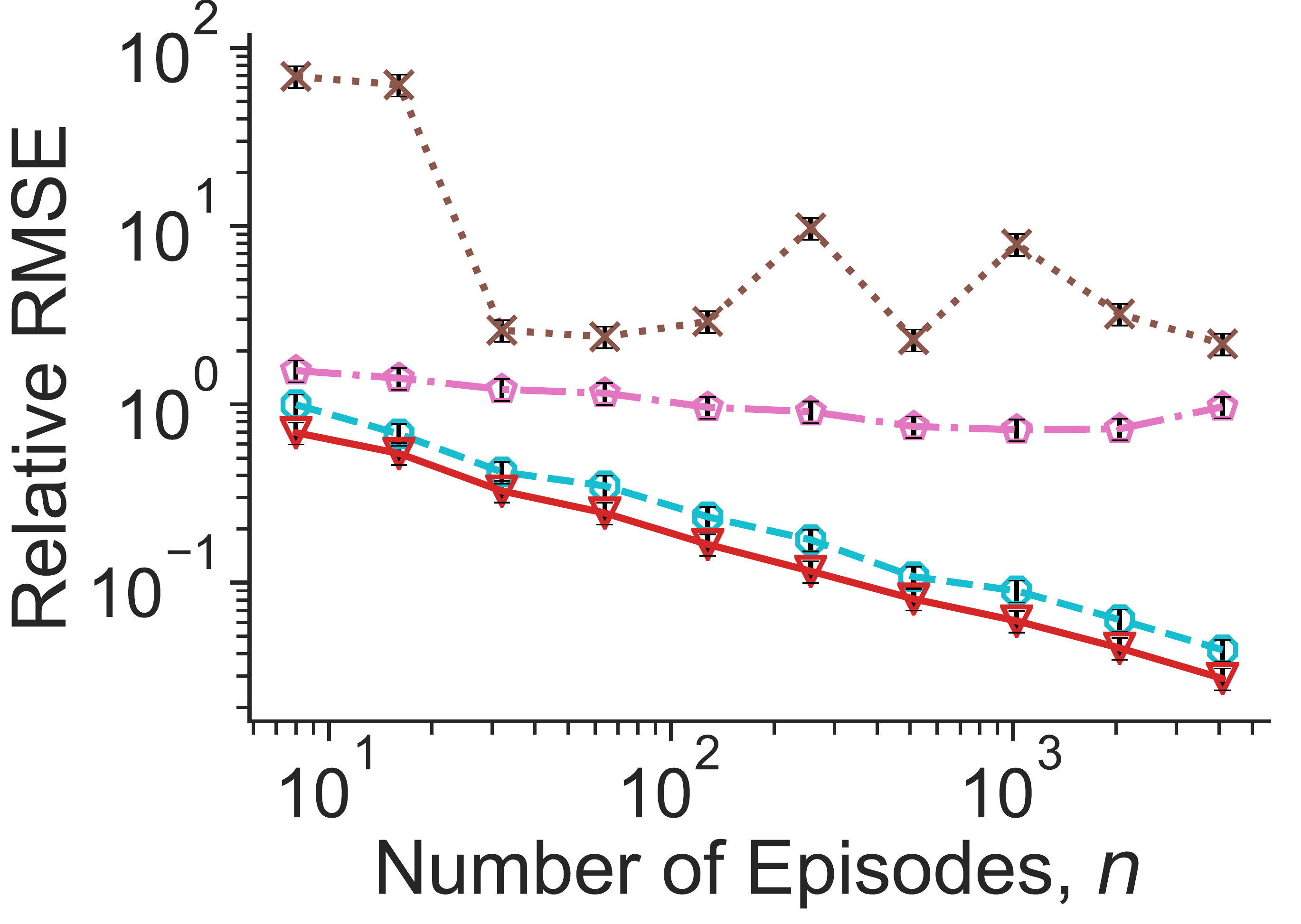}
        \caption{ModelWin MDP with different number of episodes, $n$}
        \label{fig:tinv_ModelWin_n_app}
    \end{subfigure}
    \begin{subfigure}[b]{0.2446\linewidth}
        \centering
        \includegraphics[width=\linewidth]{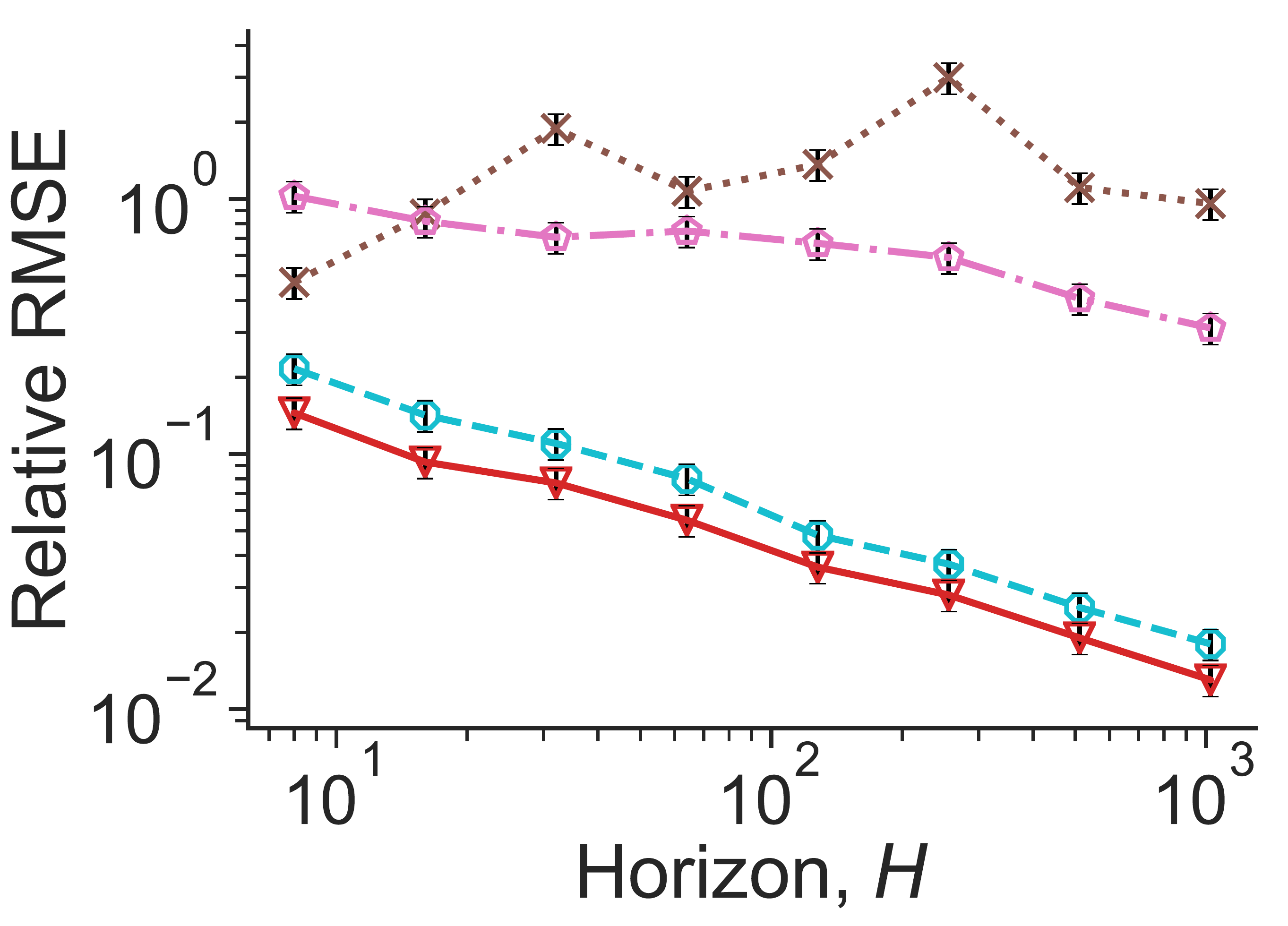}
        \caption{ModelWin MDP with different horizon, $H$}
        \label{fig:tinv_ModelWin_H_app}
    \end{subfigure}
    \begin{subfigure}[b]{0.2446\linewidth}
        \centering
        \includegraphics[width=\linewidth]{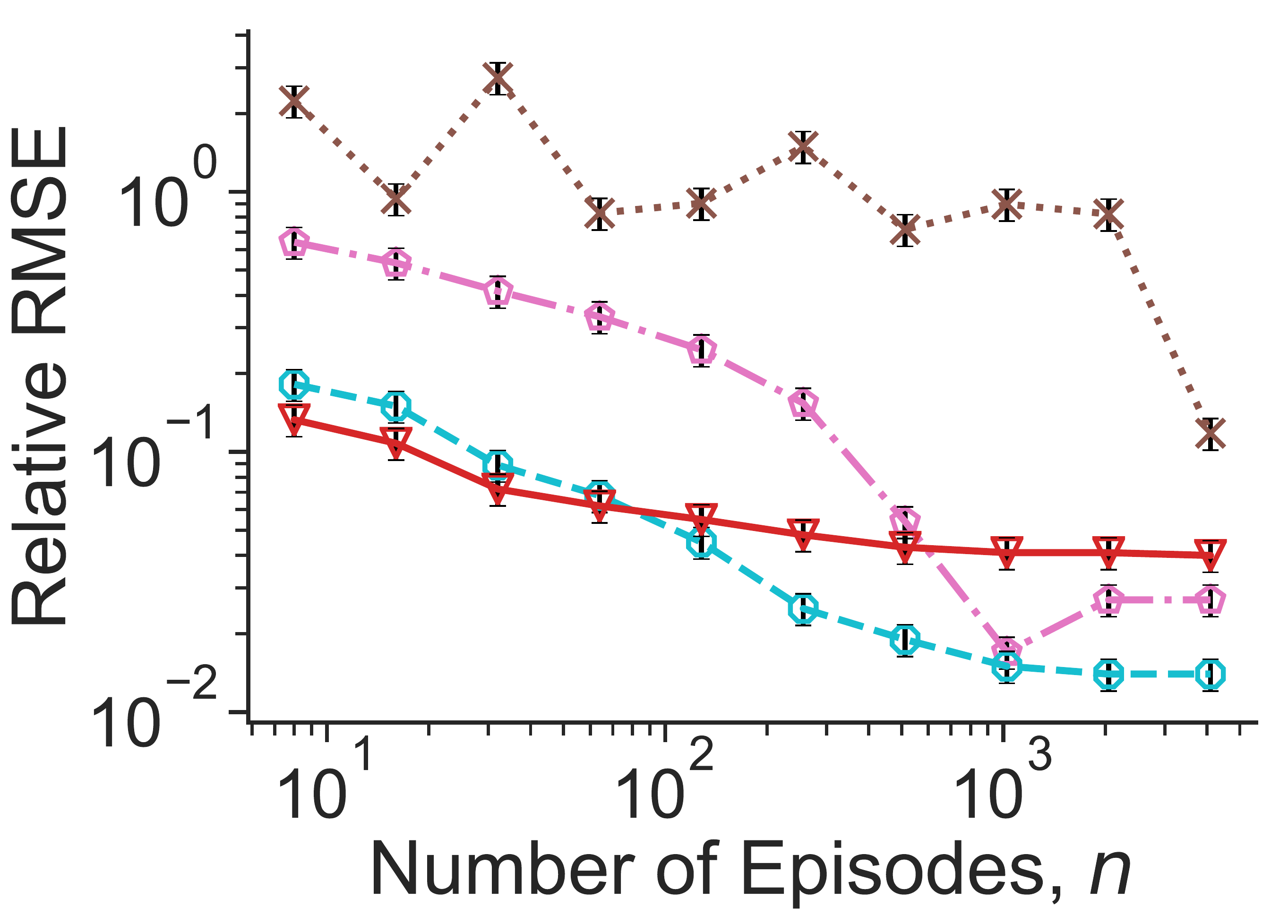}
        \caption{ModelFail MDP with different number of episodes, $n$}
        \label{fig:tinv_ModelFail_n_app}
    \end{subfigure}
    \begin{subfigure}[b]{0.2446\linewidth}
        \centering
        \includegraphics[width=\linewidth]{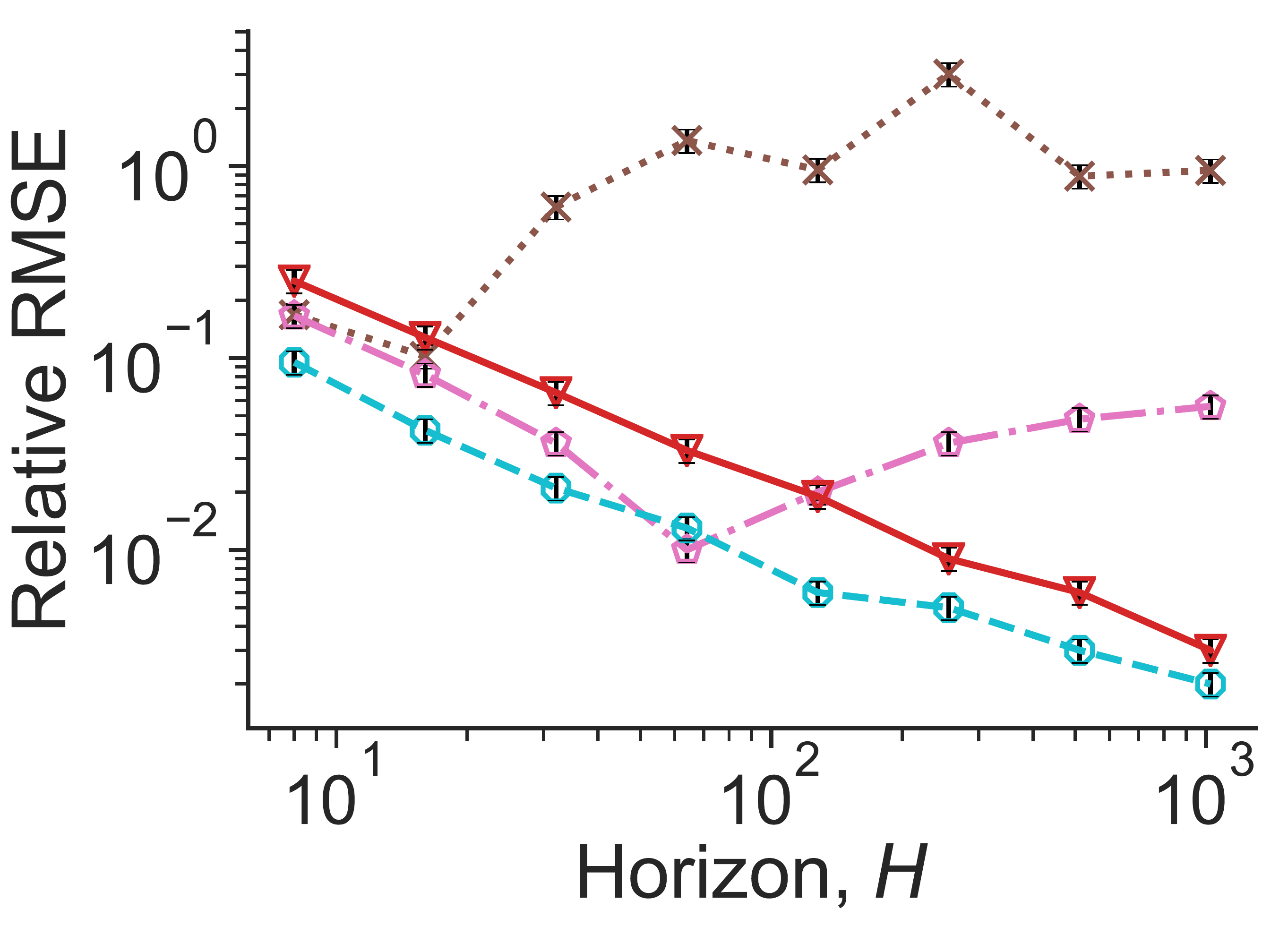}
        \caption{ModelFail MDP with different horizon, $H$}
        \label{fig:tinv_ModelFail_H_app}
    \end{subfigure}
    \caption{Results on Time-invariant MDPs.}
    \label{fig:tinv_app}
\end{figure*}


\begin{figure}[htb]
    \centering
    \includegraphics[width=0.4\linewidth]{figures/legends_app.pdf}\\
    \centering
    \includegraphics[width=0.3\linewidth]{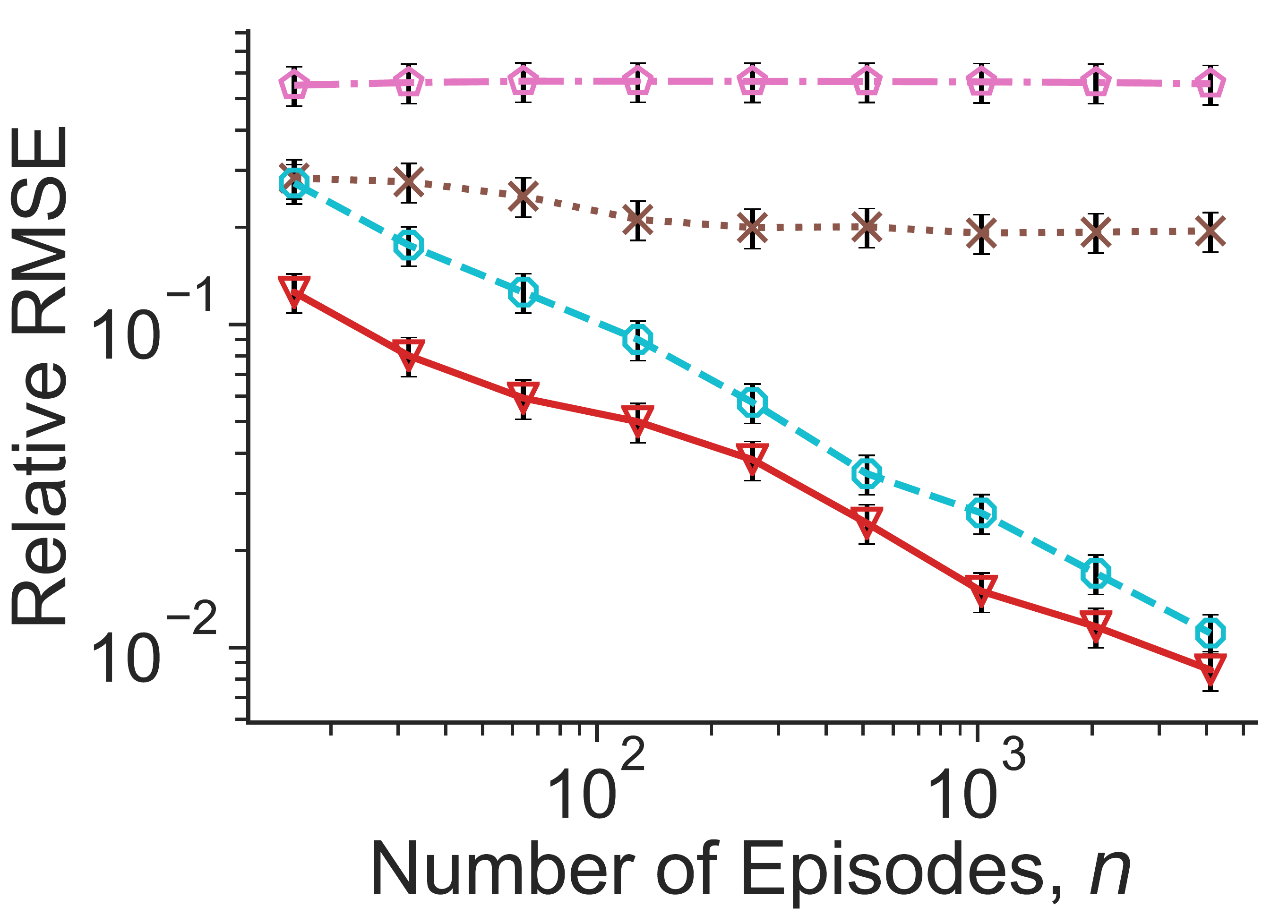}
    \caption{Mountain Car with different number of episodes.}
    \label{fig:tvar_MountainCar_n_app}
\end{figure}

The results are in Figure \ref{fig:tinv_app} and Figure \ref{fig:tvar_MountainCar_n_app}. These demonstrate that other IS based methods can also leverage our marginalized approach to benefit performance dramatically.

\section{Algorithm Details}
\label{app:alg}

\begin{algorithm*}[thb]
\caption{Marginalized Off-Policy Evaluation}
\label{alg:mainalgo}
{\bfseries Input:} Transition data $\mathcal D = \{\{s_t^{(i)},a_t^{(i)},r_t^{(i)},s_{t + 1}^{(i)}\}_{t = 0}^{H - 1}\}_{i = 1}^{n}$ from the behavior policy $\mu$. A target policy $\pi$ which we want to evaluate its cumulative reward.
\begin{algorithmic}[1]
\STATE Calculate the on-policy estimation of $d_0(\cdot)$ by
\begin{align}
\widehat d_0(s) = \frac{1}{n}\sum_{i = 1}^{n} \mathbf{1}(s_0^{(i)} = s),
\end{align}
and set $\widehat d_0^{\mu}(\cdot)$ and $\widehat d_0^{\pi}(\cdot)$ as $\widehat d_0(s)$.
\FOR{$t = 0,1,\dotsc,H-1$}
\STATE Choose all transition data as time step $t$, $\{s_t^{(i)},a_t^{(i)},r_t^{(i)},s_{t + 1}^{(i)}\}_{i = 1}^{n}$.
\STATE Calculate the on-policy estimation of $d_{t + 1}^{\mu}(\cdot)$ by
\begin{align}
\widehat d_{t + 1}^{\mu}(s) = \frac{1}{n}\sum_{i = 1}^{n} \mathbf{1}(s_{t + 1}^{(i)} = s).
\end{align}
Calculate the off-policy estimation of $d_{t + 1}^{\pi}(\cdot)$ by
\begin{align}
\widehat d_{t + 1}^{\pi}(s) = & \frac{1}{n} \sum_{i = 1}^{n}\frac{\widehat d_{t}^{\pi}(s_t^{(i)})}{\widehat d_{t}^{\mu}(s_t^{(i)})}\frac{\pi(a_t^{(i)}|s_t^{(i)})}{\mu(a_t^{(i)}|s_t^{(i)})} \mathbf 1(s_{t+1}^{(i)} = s)
\label{eq:algo_mis}
\end{align}
\STATE Estimate the reward function
\begin{align}
\widehat{r}(s_t,a_t) = \frac{\sum_{i=1}^n r_t^{i}\mathbf{1}(s_t^{i}=s_t, a_t^{i} =a_t)}{\sum_{i=1}^n \mathbf{1}(s_t^{i}=s_t, a_t^{i} =a_t) }.
\end{align}
\STATE Normalize $d_{t + 1}^{\pi}(\cdot)$ into the probability simplex, and specify $\widehat w_{t + 1}(s)$ as $\dfrac{\widehat d_{t + 1}^{\pi}(s)}{\widehat d_{t + 1}^{\mu}(s)}$ for each $s$.
\ENDFOR
\STATE Substitute the all estimated values above into \eqref{eq:bm-framework} to obtain $\widehat v(\pi)$, the estimated cumulative reward of $\pi$.
\end{algorithmic}
\end{algorithm*}

Algorithm~\ref{alg:mainalgo} summarizes our method of marginalized off-policy evaluation. Note that the MIS estimator in Section~\ref{sec:exp} is using the estimate of $d_t^{\pi}(\cdot)$ by normalizing \eqref{eq:algo_mis} into the probability simplex for better performance.

\end{document}